\documentclass[11pt]{article} 
\usepackage{url}
\usepackage{smile}
\usepackage{graphicx} 
\usepackage{algorithm}
\usepackage{algorithmic}
\usepackage{todonotes}
\usepackage{epstopdf}
\usepackage{wrapfig}
\usepackage[colorlinks, linkcolor=blue, anchorcolor=blue, citecolor=blue]{hyperref}
\usepackage[margin=1in]{geometry}
\usepackage[normalem]{ulem}
\usepackage[export]{adjustbox}
\usepackage{mathtools, cuted}
\usepackage{natbib}
\usepackage{enumerate}
\usepackage{enumitem}
\linespread{1.25}

\usepackage{kpfonts}
\DeclareMathAlphabet{\mathsf}{OT1}{cmss}{m}{n}

\SetMathAlphabet{\mathsf}{bold}{OT1}{cmss}{bx}{n}

\providecommand{\norm}[1]{\|#1\|}




\newtheorem*{theorem*}{Theorem}

\title{\huge \bf On Generalization Bounds of a Family of Recurrent Neural Networks\footnote{Presented in NeurIPS Workshop on Integration of Deep Learning Theories, 2018.}}

%

\author{Minshuo Chen, Xingguo Li, Tuo Zhao\thanks{Minshuo Chen and Tuo Zhao are affiliated with School of Industrial and Systems Engineering at Georgia Tech; Xingguo Li is affiliated with Computer Science Department at Princeton University; Email:$\{$mchen393, tourzhao$\}$@gatech.edu.}}

\newcommand{\commentout}[1]{}

\begin{document}

\maketitle


\begin{abstract}
Recurrent Neural Networks (RNNs) have been widely applied to sequential data analysis. Due to their complicated modeling structures, however, the theory behind is still largely missing. To connect theory and practice, we study the generalization properties of vanilla RNNs as well as their variants, including Minimal Gated Unit (MGU), Long Short Term Memory (LSTM), and Convolutional (Conv) RNNs. Specifically, our theory is established under the PAC-Learning framework. The generalization bound is presented in terms of the spectral norms of the weight matrices and the total number of parameters. We also establish refined generalization bounds with additional norm assumptions, and draw a comparison among these bounds. We remark: (1) Our generalization bound for vanilla RNNs is significantly tighter than the best of existing results; (2) We are not aware of any other generalization bounds for MGU, LSTM, and Conv RNNs in the exiting literature; (3) We demonstrate the advantages of these variants in generalization.
\end{abstract}


\section{Introduction}\label{sec:intro}

Recurrent Neural Networks (RNNs) have successfully revolutionized sequential data analysis, and been widely applied to many real world problems, such as natural language processing \citep{cho2014learning, bahdanau2014neural, sutskever2014sequence}, speech recognition \citep{graves2006connectionist, mikolov2010recurrent, graves2012sequence, graves2013speech}, computer vision \citep{gregor2015draw, xu2015show, donahue2015long, karpathy2015deep}, healthcare \citep{lipton2015learning, choi2016doctor, choi2016using}, and robot control \citep{lee2000identification, yoo2006adaptive}. Quite a few of these applications can be approached easily in our daily life, such as Google Translate, Google Now, Apple Siri, etc.

The sequential modeling nature of RNNs is significantly different from feedforward neural networks, though they both have neurons as the basic components. RNNs exploit the internal state (also known as hidden unit) to process the sequence of inputs, which naturally captures the dependence of the sequence. 
Besides the vanilla version, RNNs have many other variants. A large class of variants incorporate the so-called ``gated'' units to trim RNNs for different tasks. Typical examples include Long Short-Term Memory (LSTM, \cite{hochreiter1997long}), Gated Recurrent Unit (GRU, \cite{jozefowicz2015empirical}) and Minimal Gated Unit (MGU, \cite{zhou2016minimal}). 

The success of RNNs owes not only to their special network structures and the ability to fit data, but also to their good generalization property: They provide accurate predictions on unseen data. For example, \cite{graves2013speech} report that after training with merely 462 speech samples, deep LSTM RNNs achieve a test set error of $17.7\%$ on TIMIT phoneme recognition benchmark, which is the best recorded score. 
Despite of the popularity of RNNs in applications, their theory is less studied than other feedforward neural networks \citep{haussler1992decision, bartlett2017spectrally, neyshabur2017pac, golowich2017size, li2018tighter}. There are still several long lasting fundamental questions regarding the approximation, trainability, and generalization of RNNs.

In this paper, we propose to understand the generalization ability of RNNs and their variants. We aim to answer two questions from a theoretical perspective:

\noindent\emph{\textbf{Q.1) Do RNNs suffer from significant curse of dimensionality?}}

\noindent\emph{\textbf{Q.2) What are the advantages of MGU and LSTM over vanilla RNNs?}}

The investigation of generalization properties of RNNs has a long history. 
Many early works are based on oversimplified assumptions. \cite{dasgupta1996sample} and \cite{koiran1998vapnik}, for example, adopt a VC-dimension argument to show complexity bounds of RNNs that are polynomial in the size of the network. They, however, either consider linear RNNs for binary classification tasks, or assume RNNs take the first coordinate of their hidden states as outputs. 
More recently, \cite{bartlett2017spectrally} propose a new technique for developing generalization bounds for feedforward neural networks based on empirical Rademacher complexity under the PAC-Learning framework. \cite{neyshabur2017pac} further adapt the technique to establish their generalization bound using the PAC-Bayes approach. The follow-up work \cite{zhang2018stabilizing} use the PAC-Bayes approach to establish a generalization bound for vanilla RNNs.

We exploit the compositional nature of RNNs, and decouples the spectral norms of weight matrices and the number of weight parameters. This makes our analysis conceptually much simpler (e.g. avoid layer wise analysis), and also yields better generalization bound than \cite{zhang2018stabilizing}. 

\begin{wrapfigure}{r}{0.24\textwidth}
\vspace{-0.25in}
\begin{center}
\includegraphics[width=0.22\textwidth]{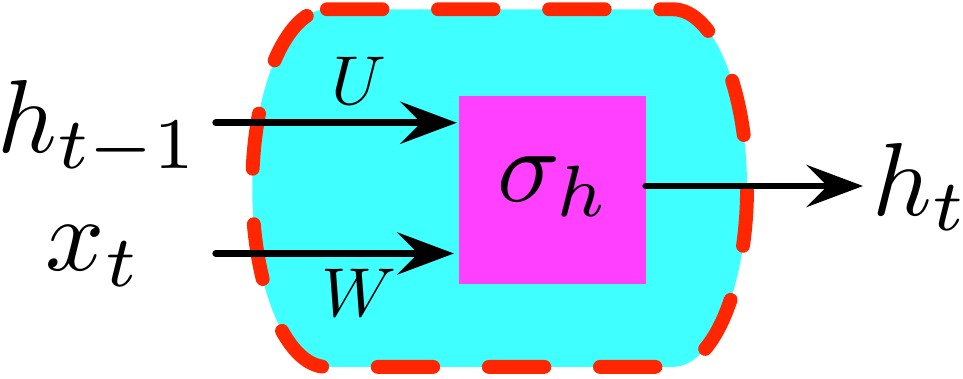}
\end{center}
\vspace{-0.18in}
\caption{A building block of vanilla RNNs.}
\vspace{-0.2in}
\end{wrapfigure}

Consider vanilla RNNs, we observe $m$ sequences of data points $\left(x_{i,t},z_{i,t}\right)_{t=1}^T$, where $x_{i,t}\in\RR^{d_x}$ and the response $z_{i,t} \in \cZ$ for all $t=1,...,T$ and $i=1,...,m$. Each sequence is drawn independently from some underlying distribution over $\RR^{d_x \times T} \times \cZ$. Extensions to dependent sequences are discussed in Section \ref{sec:discussion}, however, note that data points $(x_{i, t}, z_{i, t})$ can be dependent within a sequence, i.e., for a fixed $i \in \{1, \dots, m\}$. The vanilla RNNs compute $h_{i, t}$ and $y_{i, t}$ iteratively as follows,
\begin{align*}
h_{i, t} = \sigma_h\left(U h_{i, t-1}+Wx_{i, t}\right),\quad\textrm{and}\quad y_{i, t} = \sigma_y\left(Vh_{i, t}\right),
\end{align*}
where $\sigma_y$ and $\sigma_h$ are activation operators, $h_{i, t} \in \RR^{d_h}$ is the hidden state with $h_{i, 0} = 0$, and $U \in \RR^{d_h \times d_h}$, $V \in \RR^{d_y \times d_h}$, and $W \in \RR^{d_h \times d_x}$ are weight matrices. The activation operators $\sigma_h$ and $\sigma_y$ are entrywise, i.e., $\sigma_h(\left[v_1, \dots, v_d\right]^\top) = [\sigma_h(v_1), \dots, \sigma_h(v_d)]^\top$, and Lipschitz with parameters $\rho_h$ and $\rho_y$ respectively. We assume $\sigma_h(\cdot) = \tanh(\cdot)$, $\sigma_y(0) = 0$, and $\rho_y = 1$. Extensions to general activations are given in Section \ref{sec:theory}.

\textbf{Our Contribution}. To establish the generalization bound, we need to define the ``model complexity" of vanilla RNNs. In this paper, we adopt the empirical Rademacher complexity (ERC, see more details in Section \ref{sec:theory}), which has been widely used in the existing literature on PAC-Learning. 
For many nonparametric function classes, we often need complicated argument to upper bound their ERC. Our analysis, however, shows that we can upper bound the ERC of vanilla RNNs in a very simple manner by exploiting their Lipschitz continuity with respect to the model parameters, since they are essentially in parametric forms. More specifically, 
denote $\cF_t = \{f_t : \{x_1,...,x_t\} \mapsto y_t\}$ as the class of mappings from the first $t$ inputs to the $t$-th output computed by vanilla RNNs. For a matrix $A$, $\lVert A \rVert_2$ denotes the spectral norm, and for a vector $v$, $\lVert v \rVert_2$ denotes the Euclidean norm. Define $\frac{a^t-1}{a-1} = t$ for $a = 1$. 
Then, informally speaking, the ``model complexity'' of vanilla RNNs satisfies
\begin{align}
\textsf{Complexity} = O\Bigg(d \lVert V \rVert_2 \min\big\{\sqrt{d}, \lVert W \rVert_2 \frac{\lVert U \rVert_2^t - 1}{\lVert U \rVert_2 - 1}\big\} \times \sqrt{\log \left(t \sqrt{d} \frac{\lVert U \rVert_2^t - 1}{\lVert U \rVert_2 - 1} \right)} \Bigg), \notag
\end{align}
where $d = \sqrt{d_xd_h + d_h^2 + d_hd_y}$. 

We then consider a new testing sequence $\left(x_{t},z_{t}\right)_{t=1}^T$. The response sequence is computed by $\tilde{z}_t = \phi(y_{t}), \quad \text{for all}~ t = 1, \dots, T,$
where $\phi$ is a function mapping the output of vanilla RNNs to the response of interest. In practice, the function $\phi$ varies across different data analysis tasks. For example, in sequence to sequence classification, we take $\phi(y_t) = \argmax_j [y_t]_j$; in regression, we take $\phi(y_t) = y_t$; in density estimation, we can take $\phi(y_t) = \textrm{softmax}(y_t)$. 

We further define a risk function that can unify different data analysis tasks. Specifically, let $\cL(\cA(y, z))$ be a loss function, where $\cA(y, z)$ is a function taking the output $y_t$ and the observed response $z_t$ as inputs, and $\cL$ is chosen according to different tasks. Then we define the population risk for the $t$-th output as $\cR(f_t) = \EE[\cL(\cA(y_t, z_t))]$. Its empirical counterpart is similarly defined as $\hat{\cR}(f_t) = \frac{1}{m} \sum_{i=1}^m \cL(\cA(y_{i, t}, z_{i, t}))$. Training RNNs is essentially minimizing the empirical risk $\hat{\cR}(f_t)$. Many applications can be formulated into this framework. For example, in classification, we take $\cA = -\cM$ as the functional margin operator and $\cL = \cL_\gamma$ as the ramp loss with $\gamma$ being the margin value (see detailed definitions in Section \ref{sec:theory}); in regression, we take $\cA(y_t, z_t) = y_t - z_t$ and $\cL$ as the $\ell_p$ loss for $p \in \ZZ_+$. 
We then give the generalization bound in the following statement.
\begin{theorem}[informal]\label{thm:boundgeneral}
Assume the input data space is bounded, i.e., $\norm{x}_2\leq 1$ and $z \in \cZ$ bounded. Suppose the mapping $\cA(y, z)$ is Lipschitz in $y$, and the loss function $\cL$ satisfies $|\cL(\cA(y, z))| \leq B$ and is $L$-Lipschitz for any $y$ computed by RNNs and $z \in \cZ$. Given a collection of samples $S = \left\{(x_{i,t},z_{i,t})_{t=1}^T, i = 1,...,m\right\}$ and a new testing sequence $(x_t, z_t)_{t=1}^T$, with probability at least $1-\delta$ over $S$, for any $f_t \in \cF_t$ with integer $t \leq T$, we have,
\begin{align}
\cR(f_t) \leq \hat{\cR}(f_t) + \tilde{O}\bigg(\frac{L \times \textsf{Complexity}}{\sqrt{m}} + B \sqrt{\frac{\log (1 / \delta)}{m}} \bigg). \notag 
\end{align}
\end{theorem}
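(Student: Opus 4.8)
The plan is to run the standard Rademacher-complexity machinery, using the model-complexity estimate quoted in the introduction (established in Section~\ref{sec:theory}) as a black box. Fix an integer $t\le T$ and set $\cG_t=\{(x,z)\mapsto\cL(\cA(f_t(x),z)):f_t\in\cF_t\}$; by hypothesis every $g\in\cG_t$ is bounded in $[-B,B]$. The first step is the classical symmetrization argument: apply McDiarmid's bounded-difference inequality to $\Phi(S)=\sup_{g\in\cG_t}(\EE g-\tfrac{1}{m}\sum_{i=1}^m g(s_i))$ and then introduce Rademacher signs, to get that with probability at least $1-\delta$ over $S$,
\[
\cR(f_t)\ \le\ \hat{\cR}(f_t)+2\,\mathrm{Rad}_S(\cG_t)+3B\sqrt{\tfrac{\log(2/\delta)}{2m}}\qquad\text{for all } f_t\in\cF_t,
\]
where $\mathrm{Rad}_S$ is the empirical Rademacher complexity. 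This already delivers the $B\sqrt{\log(1/\delta)/m}$ term; we use the variant of the bound that also concentrates $\mathrm{Rad}_S$ itself, since it depends on $S$.

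The second step peels off $\cL$ and $\cA$. Since $\cL$ is $L$-Lipschitz and $\cA(\cdot,z)$ is Lipschitz in $y$ with some constant $L_{\cA}$, the scalar map $y\mapsto\cL(\cA(y,z))$ is $(L L_{\cA})$-Lipschitz on $\RR^{d_y}$; because $f_t$ is vector-valued I would invoke a vector-valued contraction inequality (Maurer's lemma) rather than the scalar Ledoux--Talagrand contraction, giving $\mathrm{Rad}_S(\cG_t)\le C L L_{\cA}\,\mathrm{Rad}_S(\cF_t)$ for a universal constant $C$, with the multi-output empirical Rademacher complexity of $\cF_t$ on the right. The factors $C$ and $L_{\cA}$ are absorbed into $\tilde{O}(\cdot)$. (Alternatively one can skip the contraction and bound $\mathrm{Rad}_S(\cG_t)$ directly by covering $(U,V,W)$, using that the whole map from parameters to loss value is Lipschitz.)

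The third step bounds $\mathrm{Rad}_S(\cF_t)$ by $\textsf{Complexity}/\sqrt m$. The route is a covering-number (Dudley-type discretization) argument on the $p$-dimensional parameter ball, with $p=d_xd_h+d_h^2+d_hd_y$, so $d=\sqrt p$. Two ingredients enter: (i) the output radius --- unrolling $h_{i,t}=\tanh(Uh_{i,t-1}+Wx_{i,t})$ with $\rho_h=1$ and $\norm{x}_2\le1$ gives $\norm{h_{i,t}}_2\le\min\{\sqrt{d_h},\,\norm{W}_2\tfrac{\norm{U}_2^t-1}{\norm{U}_2-1}\}$ (the first term from boundedness of $\tanh$), hence $\norm{Vh_{i,t}}_2\le\norm{V}_2\min\{\sqrt d,\,\norm{W}_2\tfrac{\norm{U}_2^t-1}{\norm{U}_2-1}\}$, fixing the scale; and (ii) the Lipschitz constant of $f_t$ in $(U,V,W)$, which grows like $t$ times that same geometric factor, so that $\log N(\epsilon)=O\big(p\log(\cdot/\epsilon)\big)$ and discretizing at the appropriate scale produces the leading $d=\sqrt p$ and the $\sqrt{\log(t\sqrt d\,\tfrac{\norm{U}_2^t-1}{\norm{U}_2-1})}$ factor. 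Plugging into step one, and (if the conclusion is wanted uniformly in $t$) taking a union bound over the $T$ values of $t$ --- which only inflates the logarithms, hidden by $\tilde{O}$ --- gives the claimed inequality.

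I expect the third step to be the genuine obstacle: one needs a Lipschitz-in-parameters bound for the $t$-fold unrolled recursion tight enough to improve on \cite{zhang2018stabilizing}, and a naive term-by-term treatment of the $t$ compositions loses superfluous factors and, when $\norm{U}_2>1$, threatens an exponential blow-up --- it is precisely the $\tanh$-boundedness (yielding the $\min\{\sqrt d,\,\cdots\}$ alternative) that keeps the estimate under control. Steps one and two are by contrast routine, the only mild point being that the empirical Rademacher complexity appears both in the final statement and inside the concentration step.
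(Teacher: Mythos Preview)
Your proposal is correct and follows essentially the same route as the paper: the symmetrization/bounded-difference step (the paper quotes Mohri's textbook bound, Lemma~\ref{lemma:startpoint}), a Lipschitz-in-parameters estimate for the unrolled recursion (Lemma~\ref{lemma:ytLipUVW}, including the $\min\{\sqrt{d},\cdot\}$ from the $\tanh$ bound exactly as you describe), a parameter-ball covering count (Lemma~\ref{lemma:Ftcovering}), and Dudley's entropy integral (Lemma~\ref{lemma:Radcomplexity}). The only deviation is your step two: rather than Maurer's vector contraction, the paper transfers the $2$-Lipschitz property of $\cM$ directly to covering numbers via $\cN(\cF_{\cM,t},\epsilon,\|\cdot\|_\infty)\le\cN(\cF_t,\epsilon/2,\mathrm{dist})$, runs Dudley on $\cF_{\cM,t}$, and only then peels off $\cL_\gamma$ with the scalar Talagrand lemma --- which is precisely the alternative you mention parenthetically.
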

Please refer to Section \ref{sec:theory} for a complete statement. Most of the aforementioned commonly used $\cA$ and $\cL$ satisfy the assumptions in Theorem \ref{thm:boundgeneral}. For example, in classification, the functional margin operator $\cM(y, z)$ is $2$-Lipschitz in $\tilde{z}$. The ramp loss $\cL_\gamma$ is uniformly bounded by $1$ and $\frac{1}{\gamma}$-Lipschitz. In regression, $\cA(y, z)$ is $1$-Lipschitz in $y$ and bounded since the input data are bounded. Then the $\ell_p$ loss becomes bounded and Lipschitz due to its bounded input.

\textbf{Comparison with Existing Results}. To better understand the obtained generalization bound and draw a comparison among existing literature, we instantiate Theorem \ref{thm:boundgeneral} for sequence to sequence classification using vanilla RNNs. Recall that for classification tasks, we have $L = 1 / \gamma$, $B = 1$ and $\cM(y, z)$ is $2$-Lipschitz in $y$. We list the corresponding generalization bounds in Table \ref{table:genbound} according to the magnitude
of $\norm{U}_2$.
\begin{table}[h]
\centering
\caption{Generalization bounds for vanilla RNNs in classification tasks (we only list the order of the gap $\cR(f_t) - \hat{\cR}(f_t)$). The third column lists the result obtained in \cite{zhang2018stabilizing}.}
\label{table:genbound}
\begin{tabular}{c | c | c}
\hline
& Theorem \ref{thm:boundgeneral} & \cite{zhang2018stabilizing} \\\hline
&& \\[-1em]
\textbf{(I)} $\norm{U}_2 < 1$ & $\tilde{O}\big(d/ \sqrt{m}\gamma \big)$ & $\tilde{O}\big(dt^2 / \sqrt{m}\gamma \big)$ \\\hline
&& \\[-1em]
\textbf{(II)} $\lVert U \rVert_2 = 1$ & $\tilde{O}\big(dt / \sqrt{m}\gamma \big)$ & $\tilde{O}\big(dt^2 / \sqrt{m}\gamma \big)$ \\\hline
&& \\[-1em]
\textbf{(III)} $\lVert U \rVert_2 > 1$ & $\tilde{O}\big(\sqrt{d^3 t} / \sqrt{m}\gamma \big)$ & $\tilde{O}\big(dt^2 \lVert U \rVert_2^{t} / \sqrt{m}\gamma \big)$ \\\hline
\end{tabular}
\end{table}

As can be seen, the obtained generalization bound only has a polynomial dependence on the size of vanilla RNNs,  i.e., width $d$ and sequence length $t$. Thus, we theoretically justify that the complexity of vanilla RNNs do not suffer from significant curse of dimensionality. Because they compute outputs $y_t$ recursively using the same weight matrices, and their hidden states $h_t$ are entrywise bounded.

We compare Theorem \ref{thm:boundgeneral} with the generalization bound obtained in \cite{zhang2018stabilizing}, which is of the order
$\tilde{O}\left(dt^2\lVert W \rVert_2 \lVert V \rVert_2 \max\{1, \lVert U \rVert_2^{t}\}/ \sqrt{m}\gamma\right),$
and we distinguish the same three different scenarios as listed in Table \ref{table:genbound}. Our bound is tighter by a factor of $t^2$ for case \textbf{(I)}, a factor of $t$ for case \textbf{(II)}. Additionally, \cite{zhang2018stabilizing} fail to incorporate the boundedness condition of hidden state into their analysis, thus the generalization bound is exponential in $t$ for case \textbf{(III)}. Our generalization bound, however, is still polynomial in $d$ and $t$ for case \textbf{(III)}.

Moreover, \textbf{(II)} is closely related to a few recent results on imposing orthogonal constraints on weight matrices to stabilize the training of RNNs \citep{saxe2013exact, le2015simple, arjovsky2016unitary, vorontsov2017orthogonality, zhang2018stabilizing}. We remark that from a learning theory perspective, \textbf{(II)} implies that orthogonal constraints can potentially help generalization.


%
We also present refined generalization bounds with additional matrix norm assumptions. These assumptions allow us to derive norm-based generalization bounds. We draw a comparison among these bounds and highlight their advantage under different scenarios.

Our theory can be further extended to several variants, including MGU and LSTM RNNs, and convolutional RNNs (Conv RNNs). Specifically, we show that the gated units in MGU and LSTM RNNs can introduce extra decaying factors to further reduce the dependence on $d$ and $t$ in generalization. The convolutional filters in Conv RNNs can reduce the dependence on $d$ through parameter sharing. Such an advantage in generalization makes these RNNs do not suffer from significant curse of dimensionality. To the best of our knowledge, these are the first results on generalization guarantees for these neural networks.


\noindent\textbf{Notations}: Given a vector $v \in \RR^d$, we denote its Euclidean norm by $\lVert v \rVert_2^2 = \sum_{i=1}^d |v_i|^2$, and the infinity norm by $\lVert v \rVert_\infty = \max_j |v_j|$. Given a matrix $M \in \RR^{m \times n}$, we denote the spectral norm by $\lVert M \rVert_2$ as the largest singular value of $M$, the Frobenius norm by $\lVert M \rVert_{\textrm{F}}^2 = \textrm{trace}(M M^\top)$, and the $(2, 1)$ norm by $\norm{M}_{2, 1} = \sum_{i=1}^n \norm{M_{:, i}}_2$. Given a function $f$, we denote the function infinity norm by $\lVert f \rVert_\infty = \sup |f|$. We use $\tilde{O}(\cdot)$ to denote $O(\cdot)$ with hidden log factors.







\section{Generalization of Vanilla RNNs} \label{sec:theory}

To establish the generalization bound, we start with imposing some mild assumptions.
\begin{assumption}\label{assump1}
Input data are bounded, i.e., $\lVert x_{i, t} \rVert_2 \leq B_x$ for all $i = 1, \dots, m$ and $t = 1, \dots, T$.
\end{assumption}

\begin{assumption}\label{assump2}
The spectral norms of weight matrices are bounded respectively, i.e., $\lVert U \rVert_2 \leq B_U$, $\lVert V \rVert_2 \leq B_V,$ and $\lVert W \rVert_2 \leq B_W.$
\end{assumption}

\begin{assumption}\label{assump3}
Activation operators $\sigma_h$ and $\sigma_y$ are Lipschitz with parameters $\rho_h$ and $\rho_y$ respectively, and $\sigma_h(0) = \sigma_y(0) = 0$. Additionally, $\sigma_h$ is entrywise bounded by $b$.
\end{assumption}
Assumptions \ref{assump1} and \ref{assump2} are moderate assumptions. Moreover, Assumption \ref{assump3} holds for most commonly used activation operators, such as $\sigma_h(\cdot) = \tanh(\cdot)$ and $\sigma_y(\cdot) = \textrm{ReLU}(\cdot) = \max\{\cdot, 0\}$ (1-Lipschitz). 


Recall vanilla RNNs compute $h_{i, t}$ and $y_{i, t}$ as follows, $$h_{i, t} = \sigma_h\left(U h_{i, t-1}+Wx_{i, t}\right)\quad\textrm{and}\quad y_{i, t} = \sigma_y\left(Vh_{i, t}\right),$$ where $U \in \RR^{d_h \times d_h}$, $V \in \RR^{d_y \times d_h}$, and $W \in \RR^{d_h \times d_x}$. We consider multiclass classification tasks with the label $z \in \cZ = \{1, \dots, K\}$. Given a sequence $(x_t, z_t)_{t=1}^T$, we define $X_t \in \RR^{d_x \times t}$ by concatenating $x_1, \dots, x_t$ as columns of $X_t$. Recall that we denote $\cF_t = \{f_t : X_t \mapsto y_t\}$ as the class of mappings from the first $t$ inputs to the $t$-th output computed by vanilla RNNs.

As previously mentioned, we define the functional margin for the $t$-th output in vanilla RNNs as
\begin{align*}
\textstyle \cM(f_t(X_t), z_t) = [f_t(X_t)]_{z_t} - \max_{j \neq z_t} [f_t(X_t)]_j.
\end{align*}
We further define a ramp loss $\cL_\gamma\left(-\cM(f_t(X_t), z_t)\right) : \RR \mapsto \RR^+$ to each margin, where $\cL_\gamma$ is a piecewise linear function defined as 
\begin{align*}
\cL_\gamma(a) = \mathds{1}\{a > 0\} + (1+ a / \gamma)\mathds{1}\{-\gamma \leq a \leq 0\},
\end{align*}
where $\mathds{1}\{A\}$ denotes the indicator function of a set $A$. Accordingly, the ramp risk is defined as $$\cR_\gamma(f_t) = \EE \left[\cL_\gamma\left(-\cM(f_t(X_t), z_t)\right)\right],$$ and its empirical counterpart is defined as $$\hat{\cR}_\gamma(f_t) = \frac{1}{m} \sum_{i=1}^m \cL_\gamma\left(-\cM(f_t(X_{i, t}), z_{i, t})\right).$$ We then present the formal statement of Theorem \ref{thm:boundgeneral}.
\begin{theorem}\label{thm:genbound}
Let activation operators $\sigma_h$ and $\sigma_y$ be given, and Assumptions \ref{assump1}--\ref{assump3} hold. Then for $(x_t, z_t)_{t=1}^T$ and $S = \left\{(x_{i, t}, z_{i, t})_{t=1}^T, i = 1, \dots, m \right\}$ drawn i.i.d. from any underlying distribution over $\RR^{d_x \times T} \times \{1, \dots, K\}$, with probability at least $1 - \delta$ over $S$, for every margin value $\gamma > 0$, sufficiently large sample size $m$, and every $f_t \in \cF_t$ for integer $t \leq T$, we have
\begin{align}
\PP\left(\tilde{z}_t \neq z_t \right) \leq \hat{\cR}_\gamma(f_t) + 3 \sqrt{\frac{\log \frac{2}{\delta}}{2m}} + O \left(\frac{d \rho_y B_V \lambda_t \sqrt{\log \big(t \sqrt{dm} \frac{\beta^t - 1}{\beta - 1}\big)}}{\sqrt{m}\gamma}\right), \label{eq:genbound}
\end{align}
where $d = \sqrt{d_xd_h+d_h^2+ d_hd_y}$, $\beta = \rho_h B_U$, and $\lambda_t = \min\big\{b \sqrt{d}, \rho_h B_x B_W \frac{\beta^t - 1}{\beta - 1}\big\}$.
\end{theorem}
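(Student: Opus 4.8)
The plan is to follow the standard margin-based generalization pipeline — relate the misclassification probability to the empirical ramp risk plus the empirical Rademacher complexity of the induced function class, then control that Rademacher complexity — but to exploit the parametric, compositional structure of vanilla RNNs so that the covering-number estimate becomes essentially a Lipschitz-in-parameters argument. First I would invoke the usual margin bound: for the loss class $\{(X_t,z_t)\mapsto \cL_\gamma(-\cM(f_t(X_t),z_t)) : f_t\in\cF_t\}$, which is uniformly bounded by $1$, the standard symmetrization and bounded-differences (McDiarmid) argument gives, with probability at least $1-\delta$, that $\PP(\tilde z_t\neq z_t)\le \hat\cR_\gamma(f_t) + 2\,\widehat{\mathfrak{R}}_S(\cL_\gamma\circ(-\cM)\circ\cF_t) + 3\sqrt{\log(2/\delta)/(2m)}$. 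Since $\cL_\gamma$ is $\tfrac1\gamma$-Lipschitz and $\cM(\cdot,z)$ is $2$-Lipschitz in $y$, Talagrand's contraction (Ledoux–Talagrand) peels these off at the price of a factor $O(1/\gamma)$, reducing the problem to bounding $\widehat{\mathfrak{R}}_S(\cF_t)$ — more precisely the Rademacher complexity of the vector-valued output class, handled via the coordinatewise / vector-contraction device with an extra $\sqrt{d_y}$ (or $K$) absorbed into the $d$ factor.

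Next I would bound $\widehat{\mathfrak{R}}_S(\cF_t)$ through Dudley's entropy integral, so the crux becomes a covering-number estimate for $\cF_t$ in the empirical $L^2$ metric. Here is where the compositional structure is used: fixing an input $X_t$ with $\|x_s\|_2\le B_x$, I claim the map $(U,V,W)\mapsto y_t$ is Lipschitz in the parameters with an explicit modulus. Unrolling the recursion, $h_{i,t}$ depends on $(U,W)$ through $t$ nested applications of $\sigma_h(U\cdot+W x)$; since $\sigma_h$ is $\rho_h$-Lipschitz and entrywise bounded by $b$, a telescoping/induction argument shows $\|h_t(U,W)-h_t(U',W')\|_2$ is controlled by $\rho_h\sum_{s=0}^{t-1}\beta^s$ times $(\|U-U'\|_2\cdot(\text{bound on }\|h\|)+B_x\|W-W'\|_2)$, with $\beta=\rho_h B_U$, and the hidden-state bound contributes the $\min\{b\sqrt d,\,\rho_h B_x B_W\frac{\beta^t-1}{\beta-1}\}=\lambda_t$ term (the $b\sqrt d$ branch comes from $\sigma_h$ being bounded by $b$ entrywise, the other branch from iterating the affine bound). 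Composing with $y_t=\sigma_y(Vh_t)$ (using $\rho_y$ and $\sigma_y(0)=0$) gives a Lipschitz constant of the form $\rho_y\big(B_V\cdot(\text{geometric factor in }\beta)\cdot(\text{stuff}) + \lambda_t\big)$, so that an $\epsilon$-net of the parameter ball in the relevant norms induces an $\epsilon\cdot(\text{Lip const})$-cover of $\cF_t$ on the sample. Since the parameter space has dimension $d_xd_h+d_h^2+d_hd_y=d^2$, a standard volumetric bound gives $\log\cN(\cF_t,\epsilon)=O\big(d^2\log(\text{Lip}\cdot\text{radius}/\epsilon)\big)$; feeding this into Dudley and integrating yields $\widehat{\mathfrak{R}}_S(\cF_t)=\tilde O\big(\tfrac{d\,\rho_y B_V\lambda_t}{\sqrt m}\sqrt{\log(t\sqrt{dm}\,\frac{\beta^t-1}{\beta-1})}\big)$, after optimizing the lower cutoff of the integral (the $1/\sqrt m$ truncation is what produces the $\sqrt m$ inside the log and the clean $\sqrt{d}$ rather than $d^2$ out front). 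Combining with the contraction steps and collecting constants gives exactly \eqref{eq:genbound}.

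The main obstacle — and the place where the claimed tightness over \cite{zhang2018stabilizing} is won or lost — is getting the geometric-in-$\beta$ factor and the hidden-state bound to enter multiplicatively rather than compounding into an extra $t^2$ or an extra power of $d$. Concretely, one must be careful that (i) the perturbation of $U$ propagates through the recursion weighted by the *bounded* hidden states (hence $\lambda_t$, not a naively growing quantity), so the $\tanh$ saturation is genuinely exploited; (ii) the $\min\{\cdot,\cdot\}$ in $\lambda_t$ is preserved — i.e. one never multiplies the two branches — which requires tracking both the ``bounded activation'' estimate and the ``iterated affine'' estimate in parallel and taking the better one at the end; and (iii) the logarithmic terms from Dudley are tamed so only $\sqrt{\log(\cdot)}$ survives, which needs the standard trick of splitting the entropy integral at scale $1/\sqrt m$. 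Everything else — symmetrization, contraction, the volumetric covering bound — is routine; the delicate bookkeeping is entirely in the Lipschitz-in-parameters lemma for the unrolled RNN and in not being wasteful when $\beta\le 1$ versus $\beta>1$.
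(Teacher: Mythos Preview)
Your proposal is correct and follows essentially the same route as the paper: the margin bound (Lemma~\ref{lemma:startpoint}), Lipschitz continuity of $y_t$ in $(U,V,W)$ via telescoping the recursion (Lemma~\ref{lemma:ytLipUVW}), a volumetric cover of the $d^2$-dimensional parameter ball (Lemma~\ref{lemma:Ftcovering}), and Dudley's entropy integral with cutoff $\alpha=1/\sqrt m$ combined with Talagrand contraction (Lemma~\ref{lemma:Radcomplexity}). One minor sharpening of your narrative: in the paper the factor $\lambda_t$ that sits \emph{outside} the logarithm comes from the range $r=\rho_y B_V\|h_t\|_2\le \rho_y B_V\lambda_t$ which sets the upper limit of Dudley's integral, while the Lipschitz constants $L_{U,t},L_{V,t},L_{W,t}$ only enter \emph{inside} the log via the covering radius --- so the $\tanh$ saturation is exploited through the diameter of $\cF_{\cM,t}$ rather than through your item~(i), though the same hidden-state bound underlies both and your argument would go through regardless.
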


\begin{remark}
To ease the presentation, we only provide the generalization bound for the classification task. Extensions to general tasks are straightforward by replacing functions $\cA$ and $\cL$ and substituting suitable values of $L$ and $B$.
\end{remark}

The generalization bound depends on the total number of weights, and the range of $\rho_h B_U$ in three cases as indicated in Section \ref{sec:intro}. More precisely, if $\rho_h B_U \lesssim (1 + \frac{1}{t^\alpha})$ for constant $\alpha>0$ bounded away from zero, the generalization bound is of the order $\tilde{O}\left(\frac{dt^{\alpha}}{\sqrt{m}\gamma}\right)$, which has a polynomial dependence on $d$ and $t$. As can be seen, with proper normalization on model parameters, the model complexity of vanilla RNNs do not suffer from significant curse of dimensionality.

We also highlight a tradeoff between generalization and representation of vanilla RNNs. As can be seen, when $\rho_h B_U$ is strictly smaller than $1$, the generalization bound is nearly independent on $t$. The hidden state, however, only has limited representation ability, since its magnitude diminishes as $t$ grows large. On the contrary, when $\rho_hB_U$ is strictly greater than $1$, the representation ability is amplified but the generalization becomes worse. As a consequence, recent empirical results show that imposing extra constraints or regularization, such as $U^\top U = I$ or $\lVert U \rVert_2 \leq 1$ \citep{saxe2013exact, le2015simple, arjovsky2016unitary, vorontsov2017orthogonality, zhang2018stabilizing}, helps balance the generalization and representation of RNNs.

\section{Proof of Main Results}\label{sec:proof}

Our analysis is based on the PAC-learning framework. Due to space limit, we only present an outline of our proof. More technical details are deferred to Appendix \ref{pf:sectheory}. Before we proceed, we first define the empirical Rademacher complexity as follows.
\begin{definition}[Empirical Rademacher Complexity]
Let $\cH$ be a function class and $S = \{s_1, \dots, s_m\}$ be a collection of samples. The empirical Rademacher complexity of $\cH$ given $S$ is defined as
\begin{align*}
\mathfrak{R}_S(\cH) = \EE_\epsilon \left[\sup_{h \in \cH} \frac{1}{m} \sum_{i=1}^m \epsilon_i h(s_i)\right],
\end{align*}
where $\epsilon_i$'s are i.i.d. Rademacher random variables, i.e., $\PP(\epsilon_i=1)=\PP(\epsilon_i=-1)=0.5$.
\end{definition}

We then proceed with our analysis. Recall that \cite{mohri2012foundations} give an empirical Rademacher complexity (ERC)-based generalization bound, which is restated in the following lemma with $\cF_{\gamma, t} = \left \{(X_t, z_t) \mapsto \ell_\gamma(-\cM(f_t(X_t), z_t)): f_t \in \cF_t \right\}.$
\begin{lemma}\label{lemma:startpoint}
Given a testing sequence $(x_t, z_t)_{t=1}^T$, with probability at least $1-\delta$ over samples $S = \left\{(x_{i, t}, z_{i, t})_{t=1}^T, i = 1, \dots, m \right\}$, for every margin value $\gamma > 0$ and any $f_t \in \cF_t$, we have
\begin{align}
\PP(\tilde{z}_t\neq z_t)\leq\cR_\gamma(f_t) \leq \hat{\cR}_\gamma(f_t) + 2\mathfrak{R}_S(\cF_{\gamma, t}) + 3 \sqrt{\frac{\log(2/\delta)}{2m}}. \notag 
\end{align}
\end{lemma}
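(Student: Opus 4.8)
The plan is to obtain this lemma directly from the standard margin-based uniform-convergence theorem of \cite{mohri2012foundations} applied to the loss class $\cF_{\gamma,t}$, together with one elementary pointwise inequality. First I would dispatch the leftmost inequality $\PP(\tilde z_t \neq z_t) \le \cR_\gamma(f_t)$. Since $\tilde z_t = \phi(y_t) = \argmax_j [f_t(X_t)]_j$, the event $\{\tilde z_t \neq z_t\}$ forces some $j \neq z_t$ with $[f_t(X_t)]_j \ge [f_t(X_t)]_{z_t}$, hence $\cM(f_t(X_t), z_t) \le 0$, i.e. $-\cM(f_t(X_t), z_t) \ge 0$. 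The ramp function satisfies $\cL_\gamma(a) \ge \mathds{1}\{a \ge 0\}$ for every $a$ (it is nonnegative, equals $1$ for $a \ge 0$, and equals $1 + a/\gamma \ge 0$ on $[-\gamma,0]$), so $\mathds{1}\{\tilde z_t \neq z_t\} \le \cL_\gamma(-\cM(f_t(X_t), z_t))$ pointwise; taking expectations over the test sequence gives $\PP(\tilde z_t \neq z_t) \le \EE[\cL_\gamma(-\cM(f_t(X_t), z_t))] = \cR_\gamma(f_t)$.

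For the remaining inequality I would apply the generic Rademacher bound to $\cF_{\gamma,t}$, whose members all take values in $[0,1]$ because $\cL_\gamma \in [0,1]$. Consequently $\Phi(S) := \sup_{g \in \cF_{\gamma,t}} \big(\EE[g] - \frac1m \sum_{i=1}^m g(s_i)\big)$ has bounded differences $1/m$, so McDiarmid's inequality gives $\Phi(S) \le \EE_S[\Phi(S)] + \sqrt{\log(2/\delta)/(2m)}$ with probability at least $1-\delta/2$; the usual symmetrization step bounds $\EE_S[\Phi(S)] \le 2\,\EE_S[\mathfrak{R}_S(\cF_{\gamma,t})]$; and a second application of McDiarmid to $S \mapsto \mathfrak{R}_S(\cF_{\gamma,t})$ (again $1/m$-bounded differences) replaces $\EE_S[\mathfrak{R}_S(\cF_{\gamma,t})]$ by $\mathfrak{R}_S(\cF_{\gamma,t}) + \sqrt{\log(2/\delta)/(2m)}$ with probability at least $1-\delta/2$. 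A union bound over the two events yields, with probability at least $1-\delta$, that $\EE[g] - \frac1m\sum_i g(s_i) \le 2\mathfrak{R}_S(\cF_{\gamma,t}) + 3\sqrt{\log(2/\delta)/(2m)}$ simultaneously for all $g \in \cF_{\gamma,t}$. Instantiating $g$ at the function $(X_t,z_t) \mapsto \cL_\gamma(-\cM(f_t(X_t),z_t))$, so that $\EE[g] = \cR_\gamma(f_t)$ and $\frac1m\sum_i g(s_i) = \hat{\cR}_\gamma(f_t)$, and chaining with the first inequality gives the stated bound for every $f_t \in \cF_t$.

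This is essentially a verbatim restatement of a textbook result, so there is no substantive obstacle; the only points needing care are (i) checking the ramp loss dominates the $0$-$1$ misclassification indicator at the boundary $a=0$ (which is exactly why $\cL_\gamma$ is defined so that $\cL_\gamma(0)=1$ rather than $0$), and (ii) bookkeeping the constants across the two McDiarmid steps and the symmetrization so that the factor in front of $\sqrt{\log(2/\delta)/(2m)}$ comes out as $3$. The phrase ``for every margin value $\gamma>0$'' is read per fixed $\gamma$: once $\gamma$ is fixed, $\cF_{\gamma,t}$ is a fixed class and the argument above applies; if true uniformity over $\gamma$ is desired it costs only an additional lower-order term from a standard discretization of $\gamma$, which I would relegate to the appendix.
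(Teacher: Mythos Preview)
Your proposal is correct and follows exactly the approach the paper takes: the paper simply cites Theorem~3.1 of \cite{mohri2012foundations} for the second inequality and notes that $\PP(\tilde{z}_t\neq z_t)\leq \cR_\gamma(f_t)$ holds by definition of the ramp loss, while you spell out the standard McDiarmid--symmetrization--McDiarmid argument underlying that textbook theorem and the domination of the $0$--$1$ indicator by $\cL_\gamma$. There is no substantive difference.
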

Note that Lemma \ref{lemma:startpoint} adapts the original version (Theorem 3.1, Chapter 3.1, \cite{mohri2012foundations}) for the multiclass ramp loss, and we have $\PP(\tilde{z}_t\neq z_t)\leq \cR_\gamma(f_t)$ by definition. 

Now we only need to bound the ERC $\mathfrak{R}_S(\cF_{\gamma, t})$. Our analysis consists of three steps. First, we characterize the Lipschitz continuity of vanilla RNNs w.r.t model parameters. Next, we bound the covering number of function class $\cF_t$. At last, we derive an upper bound on $\mathfrak{R}_S(\cF_{\gamma, t})$ via the standard machinery in the PAC-learning framework. Specifically, consider two different sets of weight matrices $(U, V, W)$ and $(U', V', W')$. Given the same activation operators and input data, denote the $t$-th output as $y_t$ and $y'_t$ respectively. We characterize the Lipschitz property of $\lVert y_t \rVert_2$ w.r.t model parameters in the following lemma.
\begin{lemma}\label{lemma:ytLipUVW}
Under Assumptions \ref{assump1}--\ref{assump3}, given input $(x_t)_{t=1}^T$ and for any integer $t \leq T$, $\lVert y_t \rVert_2$ is Lipschitz in $U$, $V$ and $W$, i.e.,
\begin{align}
\left \lVert y_t - y'_t \right \rVert_2 \leq L_{U, t} \left \lVert U - U' \right \rVert_\textrm{F} + L_{V, t} \left \lVert V - V' \right \rVert_\textrm{F} + L_{W, t} \left \lVert W - W' \right \rVert_\textrm{F}, \notag
\end{align}
where
$L_{U, t} = \rho_hB_VB_W t a_t$, $L_{V, t} = B_W a_t$, and $L_{W, t} = B_V a_t$ with $a_t = \rho_y\rho_h B_x\frac{(\rho_hB_U)^t - 1}{\rho_h B_U - 1}$.
\end{lemma}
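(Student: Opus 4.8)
The plan is to prove Lemma \ref{lemma:ytLipUVW} by unrolling the RNN recursion and tracking how a perturbation of the weight matrices propagates through the hidden states. First I would introduce the hidden state difference $\Delta_t = \lVert h_t - h_t' \rVert_2$ and the hidden state norm bound; from $h_t = \sigma_h(U h_{t-1} + W x_t)$ with $\sigma_h(0)=0$ and $\rho_h$-Lipschitz, together with Assumptions \ref{assump1}--\ref{assump2}, a routine induction gives $\lVert h_t \rVert_2 \leq \rho_h B_x \sum_{s=0}^{t-1} (\rho_h B_U)^s = \rho_h B_x \frac{(\rho_h B_U)^t - 1}{\rho_h B_U - 1}$ (interpreting the geometric sum as $t$ when $\rho_h B_U = 1$). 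This is the quantity $a_t/\rho_y$ up to the factor $\rho_y$ that appears later from $\sigma_y$.

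Next I would derive the key recursive inequality for $\Delta_t$. Writing $h_t - h_t' = \sigma_h(U h_{t-1} + W x_t) - \sigma_h(U' h_{t-1}' + W' x_t)$ and using Lipschitzness of $\sigma_h$,
\begin{align}
\Delta_t \leq \rho_h \big( \lVert U h_{t-1} - U' h_{t-1}' \rVert_2 + \lVert (W - W') x_t \rVert_2 \big). \notag
\end{align}
Then I split $U h_{t-1} - U' h_{t-1}' = U(h_{t-1} - h_{t-1}') + (U - U') h_{t-1}'$ and bound $\lVert U(h_{t-1}-h_{t-1}')\rVert_2 \leq B_U \Delta_{t-1}$, $\lVert (U-U') h_{t-1}' \rVert_2 \leq \lVert U - U' \rVert_{\mathrm{F}} \lVert h_{t-1}' \rVert_2$, and $\lVert (W-W') x_t \rVert_2 \leq B_x \lVert W - W' \rVert_{\mathrm{F}}$ (using $\lVert \cdot \rVert_2 \leq \lVert \cdot \rVert_{\mathrm{F}}$ for matrices). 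Substituting the bound on $\lVert h_{t-1}'\rVert_2$ from the first step gives a linear recursion $\Delta_t \leq \rho_h B_U \Delta_{t-1} + (\text{coefficient}) \lVert U-U'\rVert_{\mathrm{F}} + \rho_h B_x \lVert W - W'\rVert_{\mathrm{F}}$ with $\Delta_0 = 0$. Unrolling this recursion and summing the resulting geometric series in $\rho_h B_U$ yields $\Delta_t \leq C_{U,t} \lVert U - U'\rVert_{\mathrm{F}} + C_{W,t} \lVert W - W'\rVert_{\mathrm{F}}$ for explicit constants; the $U$-coefficient picks up the extra factor of (roughly) $t$ because the perturbation $(U-U')h_s'$ is injected at every time step $s$ and each injection is itself amplified by a geometric factor, so the total is a sum of $t$ geometric terms bounded by $t \cdot \frac{(\rho_h B_U)^t - 1}{\rho_h B_U - 1}$ type expression.

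Finally I would pass from $\Delta_t$ to $\lVert y_t - y_t'\rVert_2$. Since $y_t = \sigma_y(V h_t)$ and $y_t' = \sigma_y(V' h_t')$ with $\sigma_y$ $\rho_y$-Lipschitz and $\sigma_y(0)=0$, I split $V h_t - V' h_t' = V(h_t - h_t') + (V - V') h_t'$ to get $\lVert y_t - y_t'\rVert_2 \leq \rho_y(B_V \Delta_t + \lVert h_t'\rVert_2 \lVert V - V'\rVert_{\mathrm{F}})$. Plugging in the bounds on $\Delta_t$ and $\lVert h_t'\rVert_2$ and collecting terms gives exactly the claimed $L_{U,t}$, $L_{V,t}$, $L_{W,t}$ with $a_t = \rho_y \rho_h B_x \frac{(\rho_h B_U)^t - 1}{\rho_h B_U - 1}$; in particular the $\lVert V - V'\rVert_{\mathrm{F}}$ term has coefficient $\rho_y \lVert h_t'\rVert_2 = B_x a_t / B_x \cdot (\ldots)$ matching $L_{V,t} = B_W a_t$ only after noting the stated $a_t$ already absorbs the $\rho_h B_x$ factor — I would be careful here to match constants precisely. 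The main obstacle is bookkeeping: correctly accounting for the extra factor $t$ in $L_{U,t}$ (versus its absence in $L_{V,t}, L_{W,t}$) and handling the degenerate case $\rho_h B_U = 1$ uniformly via the convention $\frac{a^t-1}{a-1} = t$; the analytic content is just a linear recursion and a geometric sum, so no deep idea is needed beyond careful unrolling.
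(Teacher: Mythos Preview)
Your approach is essentially identical to the paper's: bound $\lVert h_t\rVert_2$ by induction, derive a linear recursion for $\Delta_t=\lVert h_t-h_t'\rVert_2$, unroll it, then pass to $y_t$ via the Lipschitzness of $\sigma_y$ and the split $Vh_t-V'h_t'=V(h_t-h_t')+(V-V')h_t'$. There is no missing idea.

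The one slip (which is exactly the source of the confusion you flagged at the end) is in your hidden state bound: from $h_t=\sigma_h(Uh_{t-1}+Wx_t)$ you get $\lVert h_t\rVert_2\le \rho_h\big(B_U\lVert h_{t-1}\rVert_2+\lVert W x_t\rVert_2\big)$, and $\lVert W x_t\rVert_2\le B_W B_x$, not $B_x$. Hence the correct bound is
\[
\lVert h_t\rVert_2 \le \rho_h B_W B_x\,\frac{(\rho_h B_U)^t-1}{\rho_h B_U-1}.
\]
With this $B_W$ in place, the $\lVert V-V'\rVert_{\mathrm F}$ coefficient becomes $\rho_y\lVert h_t'\rVert_2\le \rho_y\rho_h B_W B_x\frac{(\rho_h B_U)^t-1}{\rho_h B_U-1}=B_W a_t=L_{V,t}$, and similarly the extra $B_W$ propagates into the $U$-coefficient via $\lVert h_{s-1}'\rVert_2$, giving $L_{U,t}=\rho_h B_V B_W t\, a_t$. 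The paper also records $\lVert h_t\rVert_2\le b\sqrt{d}$ from the entrywise bound on $\sigma_h$, but this is not needed for the constants in this lemma.
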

The detailed proof is provided in Appendix \ref{pf:lemmaytLipUVW}. We give a simple example to illustrate the proof technique. Specifically, we consider a single layer network that outputs $y = \sigma(Wx)$, where $x$ is the input, $\sigma$ is an activation operator with Lipschitz parameter $\rho$, and $W$ is a weight matrix. Such a network is Lipschitz in both $x$ and $W$ as follows. Given weight matrices $W$ and $W'$, we have $$\lVert y - y' \rVert_2 = \lVert \sigma(Wx) - \sigma(W' x) \rVert_2 \leq \rho \lVert x \rVert_2 \lVert W - W' \rVert_\textrm{F}.$$ Additionally, given inputs $x$ and $x'$, we have $$\lVert y - y' \rVert_2 = \lVert \sigma(Wx) - \sigma(Wx') \rVert_2 \leq \rho \lVert W \rVert_2 \lVert x - x' \rVert_2.$$ Since vanilla RNNs are multilayer networks, Lemma \ref{lemma:ytLipUVW} can be obtained by telescoping.

We remark that Lemma \ref{lemma:ytLipUVW} is the key to the proof of our generalization bound, which separates the spectral norms of weight matrices and the total number of parameters.

Next, we bound the covering number of $\cF_t$. Denote by $\cN(\cF_t, \epsilon, \textrm{dist}(\cdot, \cdot))$ the minimal cardinality of a subset $\cC \subset \cF_t$ that covers $\cF_t$ at scale $\epsilon$ w.r.t the metric $\textrm{dist}(\cdot, \cdot)$, such that for any $f_t \in \cF_t$, there exists $\hat{f}_t \in \cC$ satisfying $\textrm{dist}(f_t, \hat{f}_t) = \sup_{X_t} \lVert f_t(X_t) - \hat{f}_t(X_t) \rVert_2 \leq \epsilon$. The following lemma gives an upper bound on $\cN(\cF_t, \epsilon, \textrm{dist}(\cdot, \cdot))$.
\begin{lemma}\label{lemma:Ftcovering}
Under Assumptions \ref{assump1}--\ref{assump3}, given any $\epsilon > 0$, the covering number of $\cF_t$ satisfies
\begin{align}
\cN(\cF_t, \epsilon, \textrm{dist}(\cdot, \cdot)) \leq \left(1 + \frac{6 c \sqrt{d} t \rbr{(\rho_hB_U)^t - 1}}{\epsilon\rbr{\rho_h B_U - 1}}\right)^{3d^2}, \notag
\end{align}
where $c = \rho_y\rho_h B_V B_W B_x \max\left\{1, \rho_h B_U\right\}$.
\end{lemma}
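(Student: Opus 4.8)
The plan is to build the cover of $\cF_t$ directly from the Lipschitz estimate in Lemma~\ref{lemma:ytLipUVW}. Since each $f_t \in \cF_t$ is determined by the triple of weight matrices $(U,V,W)$ living in the ball $\{\|U\|_2 \le B_U\} \times \{\|V\|_2 \le B_V\} \times \{\|W\|_2 \le B_W\}$, and Lemma~\ref{lemma:ytLipUVW} says the map $(U,V,W)\mapsto f_t$ is Lipschitz in Frobenius norm with constants $L_{U,t}, L_{V,t}, L_{W,t}$, it suffices to cover the parameter space and push the cover through. First I would note that the spectral-norm ball of radius $B$ in $\RR^{p\times q}$ is contained in the Frobenius-norm ball of the same radius, whose $\epsilon'$-covering number (in Frobenius distance) is at most $(1 + 2B/\epsilon')^{pq}$ by the standard volumetric argument. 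Applying this to each of $U$ (with $p q = d_h^2$), $V$ ($pq = d_y d_h$), and $W$ ($pq = d_h d_x$) and taking the product of the three covers gives a cover of the joint parameter space.

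The second step is to choose the radii of the three parameter-cubes so that the induced perturbation of $f_t$ is at most $\epsilon$. By Lemma~\ref{lemma:ytLipUVW}, if $\|U-U'\|_\textrm{F} \le \epsilon/(3L_{U,t})$, $\|V-V'\|_\textrm{F}\le \epsilon/(3L_{V,t})$, and $\|W-W'\|_\textrm{F}\le\epsilon/(3L_{W,t})$, then $\|f_t - f_t'\|_\infty \le \epsilon$ in the $\textrm{dist}(\cdot,\cdot)$ metric. So I would take the $\epsilon/(3L_{U,t})$-cover of the $U$-ball, etc. The resulting covering number of $\cF_t$ is bounded by
\begin{align*}
\Bigl(1 + \tfrac{6 L_{U,t} B_U}{\epsilon}\Bigr)^{d_h^2}\Bigl(1 + \tfrac{6 L_{V,t} B_V}{\epsilon}\Bigr)^{d_y d_h}\Bigl(1 + \tfrac{6 L_{W,t} B_W}{\epsilon}\Bigr)^{d_h d_x}.
\end{align*}
Now I would plug in the explicit constants $L_{U,t}=\rho_h B_V B_W t\, a_t$, $L_{V,t}=B_W a_t$, $L_{W,t}=B_V a_t$ with $a_t = \rho_y\rho_h B_x \frac{(\rho_hB_U)^t-1}{\rho_hB_U-1}$, and simplify. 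Each of the three factors $L_{U,t}B_U$, $L_{V,t}B_V$, $L_{W,t}B_W$ is bounded by $c\,t\,\frac{(\rho_hB_U)^t-1}{\rho_hB_U-1}$ with $c = \rho_y\rho_h B_V B_W B_x\max\{1,\rho_hB_U\}$: the $\max\{1,\rho_hB_U\}$ absorbs the extra $B_U$ in the $U$-term, and the extra factor $t$ in $L_{U,t}$ is the one retained (the $V$- and $W$-terms have no $t$ in front, so they are dominated). To merge the three exponents into a single bound I would use $\max_i(1+\alpha_i/\epsilon)^{n_i} \le (1+\alpha/\epsilon)^{\sum n_i}$ where $\alpha$ is an upper bound on all $\alpha_i$ and, since $d^2 = d_xd_h + d_h^2 + d_hd_y$ by the definition $d=\sqrt{d_xd_h+d_h^2+d_hd_y}$, $\sum n_i = d^2 \le 3d^2$; the factor $\sqrt{d}$ in the final bound comes from replacing the crude per-coordinate perturbation bound with an $\ell_2$-aggregated one (the output $y_t\in\RR^{d_y}$ and the Frobenius-to-operator slack each contribute dimension factors), or equivalently from bounding $\sqrt{d_h}\le\sqrt{d}$ type terms; I would track the constant carefully so that it collapses to the stated $6c\sqrt{d}$.

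The main obstacle is bookkeeping the dimension factors so that the three separate exponents $d_h^2$, $d_yd_h$, $d_hd_x$ genuinely combine into the single clean exponent $3d^2$ with the stated $\sqrt{d}$ inside the base, rather than producing a messier product. The subtlety is that the naive product cover gives a bound with three different bases raised to three different dimension-exponents, and one must argue that enlarging each base to the common value $1 + 6c\sqrt{d}\,t\frac{(\rho_hB_U)^t-1}{\rho_hB_U-1}/\epsilon$ and each exponent up to $3d^2$ is legitimate — the exponent inflation from $d^2$ to $3d^2$ is where the slack for the $\sqrt d$ and for using a common base gets paid. A secondary point to get right is the degenerate case $\rho_hB_U = 1$, handled by the convention $\frac{a^t-1}{a-1}=t$ so that all formulas remain valid by continuity; I would remark this once rather than carrying cases. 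Everything else — the volumetric covering bound for matrix balls, the Lipschitz composition, the triangle inequality splitting $\epsilon$ into thirds — is routine.
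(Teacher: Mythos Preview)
Your overall strategy matches the paper's: use the Lipschitz estimate of Lemma~\ref{lemma:ytLipUVW}, split $\epsilon$ into thirds, cover each matrix ball separately in Frobenius distance, and take the product. The exponent bookkeeping is also fine: the three exponents $d_h^2, d_yd_h, d_xd_h$ sum to exactly $d^2$, and the paper simply inflates each to $d^2$ to get the clean $3d^2$.

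There is, however, a genuine error in your matrix-covering step. You write that ``the spectral-norm ball of radius $B$ in $\RR^{p\times q}$ is contained in the Frobenius-norm ball of the same radius.'' This is backward: since $\|A\|_2 \le \|A\|_{\textrm{F}} \le \sqrt{\min\{p,q\}}\,\|A\|_2$, the set $\{\|A\|_2 \le B\}$ is only contained in the Frobenius ball of radius $\sqrt{\min\{p,q\}}\,B$, not $B$. Consequently the volumetric bound you should get is
\[
\cN\bigl(\{\|A\|_2\le B\},\,\epsilon',\,\|\cdot\|_{\textrm{F}}\bigr) \le \Bigl(1 + \tfrac{2\sqrt{\min\{p,q\}}\,B}{\epsilon'}\Bigr)^{pq},
\]
which is exactly the paper's Lemma on matrix covering. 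This extra $\sqrt{\min\{p,q\}}$ factor (bounded by $\sqrt{d}$ in each of the three cases) is precisely the origin of the $\sqrt{d}$ in the statement. Your attempt to recover the $\sqrt{d}$ later --- via ``per-coordinate versus $\ell_2$-aggregated'' perturbation, or by saying the exponent inflation $d^2 \to 3d^2$ ``pays'' for it --- does not work: enlarging an exponent makes the bound looser, not tighter, and cannot manufacture a missing multiplicative factor inside the base. Once you correct the norm inequality, the $\sqrt{d}$ appears naturally and the rest of your argument goes through exactly as in the paper.
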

The detailed proof is provided in Appendix \ref{pf:lemmaFtcovering}. We briefly explain the proof technique. Given activation operators, since vanilla RNNs are in parametric forms, $f_t$ has a one-to-one correspondence to its weight matrices $U, V$, and $W$. Lemma \ref{lemma:ytLipUVW} implies that $\textrm{dist}(\cdot, \cdot)$ is controlled by the Frobenius norms of the differences of weight matrices. Thus, it suffices to bound the covering numbers of three weight matrices. The product of covering numbers of three weight matrices gives us Lemma \ref{lemma:Ftcovering}.


Lastly, we give an upper bound on $\mathfrak{R}_S(\cF_{\gamma, t})$ in the following lemma.
\begin{lemma}\label{lemma:Radcomplexity}
Under Assumptions \ref{assump1}--\ref{assump3}, given activation operators and samples $S = \{(x_{i, t}, z_{i, t})_{t=1}^T, i = 1, \dots, m\}$, the empirical Rademacher complexity $\mathfrak{R}_S(\cF_{\gamma, t})$ satisfies
\begin{align}
\mathfrak{R}_S(\cF_{\gamma, t}) = O \left(d \min\Big\{b\sqrt{d},  \rho_h B_x B_W\frac{(\rho_hB_U)^t - 1}{\rho_h B_U - 1}\Big\} \times \frac{\rho_y B_V \sqrt{\log \big(t \sqrt{dm} \frac{(\rho_hB_U)^t - 1}{\rho_h B_U - 1}\big)}}{\sqrt{m}\gamma} \right). \notag
\end{align}
\end{lemma}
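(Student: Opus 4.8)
The plan is to bound $\mathfrak{R}_S(\cF_{\gamma, t})$ by combining the covering number estimate of Lemma \ref{lemma:Ftcovering} with Dudley's entropy integral, after reducing from $\cF_{\gamma, t}$ to $\cF_t$ via a contraction argument. First I would observe that each element of $\cF_{\gamma, t}$ is the composition $\ell_\gamma \circ (-\cM) \circ f_t$. The ramp loss $\ell_\gamma$ is $\frac{1}{\gamma}$-Lipschitz and the functional margin $\cM(\cdot, z_t)$ is $2$-Lipschitz in its first argument (with respect to $\ell_\infty$, hence $\ell_2$), so by the Ledoux--Talagrand contraction inequality (vector-valued version, e.g. as in \cite{bartlett2017spectrally} or \cite{mohri2012foundations}), $\mathfrak{R}_S(\cF_{\gamma, t}) \leq \frac{2}{\gamma} \mathfrak{R}_S(\cF_t)$ up to the appropriate constants. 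This reduces the task to bounding $\mathfrak{R}_S(\cF_t)$, where $\cF_t$ is viewed as a class of $\RR^{d_y}$-valued functions evaluated at the $m$ sample sequences.

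Next I would invoke Dudley's entropy integral: $\mathfrak{R}_S(\cF_t) \lesssim \inf_{\alpha > 0}\left(\alpha + \frac{1}{\sqrt m}\int_\alpha^{D} \sqrt{\log \cN(\cF_t, \epsilon, \mathrm{dist}(\cdot,\cdot))}\, d\epsilon\right)$, where $D$ is the $\ell_2$-radius of the output set on the sample. Here $\mathrm{dist}$ is the sup-over-inputs metric from Lemma \ref{lemma:Ftcovering}, which dominates the empirical $\ell_2$ metric on the sample, so the covering number bound applies directly. Plugging in $\log \cN(\cF_t, \epsilon, \mathrm{dist}) \leq 3d^2 \log\bigl(1 + \tfrac{6c\sqrt d\, t ((\rho_h B_U)^t - 1)}{\epsilon(\rho_h B_U - 1)}\bigr)$, the integrand is $\sqrt{3}\, d \sqrt{\log(1 + C/\epsilon)}$ with $C = 6c\sqrt d\, t \frac{(\rho_h B_U)^t-1}{\rho_h B_U - 1}$; integrating this over $[\alpha, D]$ gives something of order $d\, D\sqrt{\log(1 + C/\alpha)}$ after the standard $\int \sqrt{\log(1+C/\epsilon)}\,d\epsilon$ estimate. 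Choosing $\alpha$ of order $1/\sqrt m$ (or $C/\sqrt m$) balances the $\alpha$ term against the integral and produces the $\sqrt{\log(t\sqrt{dm}\,\frac{(\rho_h B_U)^t-1}{\rho_h B_U-1})}$ factor and the $1/\sqrt m$ scaling.

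The remaining ingredient is the diameter $D$, i.e. a uniform bound on $\lVert f_t(X_t)\rVert_2$ over the class and the sample. This is where the $\min\{b\sqrt d, \rho_h B_x B_W \frac{(\rho_h B_U)^t-1}{\rho_h B_U - 1}\}$ term enters: since $\lVert y_t\rVert_2 = \lVert \sigma_y(V h_t)\rVert_2 \le \rho_y B_V \lVert h_t\rVert_2$, and $h_t$ is entrywise bounded by $b$ (giving $\lVert h_t\rVert_2 \le b\sqrt{d_h} \le b\sqrt d$), while also unrolling the recursion and using the Lipschitz/zero-at-origin properties of $\sigma_h$ gives $\lVert h_t\rVert_2 \le \rho_h B_x B_W \sum_{s=0}^{t-1}(\rho_h B_U)^s = \rho_h B_x B_W \frac{(\rho_h B_U)^t - 1}{\rho_h B_U - 1}$; taking the minimum of the two yields $D = O(\rho_y B_V \lambda_t)$ with $\lambda_t$ as in Theorem \ref{thm:genbound}. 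Substituting $D$ into the Dudley bound gives exactly the claimed form.

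I expect the main obstacle to be the bookkeeping in the Dudley integral, specifically making sure the $\min\{\cdot,\cdot\}$ structure survives: the covering number in Lemma \ref{lemma:Ftcovering} only exposes the geometric-sum factor, not the $b\sqrt d$ cap, so the $\min$ in the final bound must come entirely from the diameter $D$ multiplying the integral, and one has to argue that when $b\sqrt d$ is the binding bound the integral's $\log$ factor is still controlled (it is, because the covering number's $\log$ only contributes inside a square root and is harmless up to the $\tilde O$). A secondary subtlety is the degenerate case $\rho_h B_U = 1$, handled by the convention $\frac{a^t - 1}{a-1} = t$, which must be carried consistently through the diameter bound, the covering number, and the integral; and one must confirm the contraction step is legitimate for the multiclass margin, which follows since $\cM$ is a fixed $2$-Lipschitz function of the $d_y$-dimensional output.
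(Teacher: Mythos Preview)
Your proposal is correct and follows essentially the same approach as the paper: reduce via the Lipschitz continuity of $\cM$ and $\ell_\gamma$, apply Dudley's entropy integral with the covering bound of Lemma~\ref{lemma:Ftcovering}, take the diameter from the bound $\lVert h_t\rVert_2 \le \min\{b\sqrt d,\,\rho_h B_x B_W\frac{(\rho_h B_U)^t-1}{\rho_h B_U-1}\}$, and choose $\alpha = 1/\sqrt m$. The only cosmetic difference is ordering: the paper first passes to the scalar class $\cF_{\cM,t}$ via the $2$-Lipschitz property of $\cM$ at the covering level, applies Dudley there, and then invokes Talagrand's contraction for the $1/\gamma$-Lipschitz ramp loss, whereas you apply the (vector) contraction first and run Dudley on $\cF_t$; the resulting bound is the same.
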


The detailed proof is provided in Appendix \ref{pf:lemmaRadcomplexity}. Our proof exploits the Lipschitz continuity of $\cM$ and $\ell_\gamma$, and uses Dudley's entropy integral as the standard machinery to establish Lemma \ref{lemma:Radcomplexity}. Combining Lemma \ref{lemma:startpoint} and Lemma \ref{lemma:Radcomplexity}, we complete the proof.



\section{Refined Generalization Bounds}\label{sec:fast}

When additional norm constraints on weight matrices $U, V$ and $W$ are available, we can further refine generalization bounds. 
Specifically, we consider assumptions as follows.
\begin{assumption}\label{assump7}
The weight matrices satisfy $\lVert U \rVert_{2, 1} \leq M_U$, $\lVert V \rVert_{2, 1} \leq M_V$, and $\lVert W \rVert_{2, 1} \leq M_W$.
\end{assumption}

\begin{assumption}\label{assump8}
The weight matrices satisfy $\lVert U \rVert_{\textrm{F}} \leq B_{U, \textrm{F}}$, $\lVert V \rVert_{\textrm{F}} \leq B_{V, \textrm{F}}$, and $\lVert W \rVert_{\textrm{F}} \leq B_{W, \textrm{F}}$.
\end{assumption}

Note that Assumption \ref{assump7} appears in \cite{bartlett2017spectrally} and Assumption \ref{assump8} appears in \cite{neyshabur2017pac}. We have an equivalent relation between matrix norms, i.e., $\norm{\cdot}_2 \leq \norm{\cdot}_{2, 1} \leq \sqrt{d} \norm{\cdot}_{\textrm{F}} \leq d \norm{\cdot}_2$. Comparing to Assumption \ref{assump2}, Assumptions \ref{assump7} and \ref{assump8} further restrict the model class. We then establish refined empirical Rademacher complexities for vanilla RNNs, the corresponding generalization bounds follows immediately.


\begin{theorem}\label{thm:tightbound}
Let activation operators $\sigma_h$ and $\sigma_y$ be given, and Assumptions \ref{assump1}-\ref{assump3} hold. Then for $(x_t, z_t)_{t=1}^T$ and $S = \left\{(x_{i, t}, z_{i, t})_{t=1}^T, i = 1, \dots, m \right\}$ drawn i.i.d. from any underlying distribution over $\RR^{d_x \times T} \times \{1, \dots, K\}$, with probability at least $1 - \delta$ over $S$, for every margin value $\gamma > 0$ and every $f_t \in \cF_t$ for integer $t \leq T$, the following two bounds hold:

$\bullet$ Suppose Assumption \ref{assump7} also holds. We have
\begin{align}
\mathfrak{R}_S(\cF_{\gamma, t}) = O \left(\frac{t \alpha S_{2, 1} \frac{(\rho_h B_U)^t - 1}{\rho_hB_U - 1} \sqrt{\log d} \log (\sqrt{d}m)}{\sqrt{m}\gamma} \right), \label{eq:21bound}
\end{align}
where $\alpha = \rho_h^2 \rho_y B_V B_W B_x$, $S_{2, 1} = \left(M_U + M_V + M_W\right)$, and $d =  \sqrt{d_xd_h+d_h^2+ d_hd_y}$.

$\bullet$ Suppose Assumption \ref{assump8} also holds. We have
\begin{align}
\mathfrak{R}_S(\cF_{\gamma, t}) = O \left(\frac{\alpha^\prime B_{U} \min\big\{b \sqrt{d}, \rho_h B_x B_W \frac{(\rho_h B_U)^t - 1}{\rho_hB_U - 1}\big\} S_{\textrm{F}} \frac{(\rho_h B_U)^t - 1}{\rho_hB_U - 1} \sqrt{d \ln d } }{\sqrt{m} \gamma} \right), \label{eq:pacbound}
\end{align}
where $\alpha^\prime = \rho_h \rho_y B_WB_x$, $S_{\textrm{F}} = B_{U,\textrm{F}} + B_{W,\textrm{F}} + B_{V,\textrm{F}}$, and $d = \sqrt{d_xd_h+d_h^2+ d_hd_y}$.
\end{theorem}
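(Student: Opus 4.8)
The plan is to reuse the three-step pipeline that proves Theorem~\ref{thm:genbound} --- Lipschitz control of the $t$-th output in the weights, a covering-number bound for $\cF_t$, then Dudley's entropy integral --- and to change only the covering step. In Lemma~\ref{lemma:Ftcovering} each weight matrix is covered entrywise inside its Frobenius ball, which costs $3d^2$ in the exponent and is precisely where the $d$-factor in Theorem~\ref{thm:genbound} comes from; under Assumption~\ref{assump7} (resp.~\ref{assump8}) I would instead cover each weight matrix with a norm-adapted covering bound, replacing the $d^2$ by a quantity controlled by $S_{2,1}=M_U+M_V+M_W$ (resp.\ $S_{\textrm{F}}=B_{U,\textrm{F}}+B_{V,\textrm{F}}+B_{W,\textrm{F}}$). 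Everything downstream is unchanged: Lemma~\ref{lemma:startpoint}, the Dudley machinery behind Lemma~\ref{lemma:Radcomplexity}, and the $\tfrac1\gamma$-Lipschitzness of $\cL_\gamma$ together with the $2$-Lipschitzness of $\cM$.

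For \eqref{eq:21bound} I would not invoke Lemma~\ref{lemma:ytLipUVW} as a black box but re-run its telescoping layer by layer, in the style of \cite{bartlett2017spectrally}. Unrolling, $y_t=\sigma_y(Vh_t)$ with $h_s=\sigma_h(Uh_{s-1}+Wx_s)$ is a composition in which $U$ and $W$ each recur $t$ times, and a perturbation injected at layer $s$ reaches the output scaled by $\rho_yB_V(\rho_hB_U)^{t-s}$, exactly as in the proof of Lemma~\ref{lemma:ytLipUVW}. The key tool is the matrix covering bound of \cite{bartlett2017spectrally}, which controls the $\epsilon$-cover of a matrix--vector action $\{Ah:h\in\mathcal H\}$ in terms of $\norm{A}_{2,1}$, $\sum_{h\in\mathcal H}\norm{h}_2^2$ and $\log d$, rather than in terms of the $d^2$ entries of $A$; applying it once to $U$ (against all of its $t$ occurrences), once to $W$, and once to $V$, at scales chosen so the propagated errors sum to $\epsilon$, and collecting the per-matrix log-cardinalities, gives
\begin{align*}
\log\cN(\cF_t,\epsilon,\textrm{dist}) \;=\; O\!\left(\frac{t^2\,\alpha^2\,S_{2,1}^2\,m}{\epsilon^2}\Big(\tfrac{(\rho_hB_U)^t-1}{\rho_hB_U-1}\Big)^2\log d\right),\qquad \alpha=\rho_h^2\rho_yB_VB_WB_x,
\end{align*}
where the factor $t$ and the geometric factor are produced by $U$ (and $W$) recurring across all $t$ steps. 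Feeding this into $\mathfrak{R}_S(\cF_{\gamma,t})\lesssim\tfrac1\gamma\inf_{\eta>0}\big(\eta+\tfrac1{\sqrt m}\int_\eta^{D}\sqrt{\log\cN(\cF_t,\epsilon,\textrm{dist})}\,d\epsilon\big)$, with $D$ the diameter of $\cF_t$ on $S$ and $\eta$ of order $1/\sqrt m$, makes the $m$ cancel against the Rademacher normalization and yields \eqref{eq:21bound}, the $\log(\sqrt d\,m)$ being $\log(D/\eta)$.

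For \eqref{eq:pacbound} I would argue in parallel, replacing the $(2,1)$ lemma with a Frobenius-adapted estimate in the spirit of \cite{neyshabur2017pac}: since the action of a Frobenius ball of radius $R$ on a bounded input fills only a Euclidean ball of radius $O(R)$ in $\RR^{d_h}$, one obtains a per-matrix cover of log-cardinality $O(d\log(R/\epsilon))$ instead of $O(d^2\log(1/\epsilon))$; summing the three contributions gives $\log\cN(\cF_t,\epsilon,\textrm{dist})=O(d\log(S_{\textrm{F}}C_t/\epsilon))$, where $C_t$ collects the propagation factors and the bound $\min\{b\sqrt d,\rho_hB_xB_W\tfrac{(\rho_hB_U)^t-1}{\rho_hB_U-1}\}$ on $\norm{h_s}_2$ inherited exactly as in Lemma~\ref{lemma:Radcomplexity}. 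The same Dudley computation then produces the $\sqrt{d\ln d}$ of \eqref{eq:pacbound}, the square root turning the $d$ in the exponent into $\sqrt d$ and the $\ln d$ coming from the logarithmic range of the integral.

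The main obstacle is the weight-sharing bookkeeping in the $(2,1)$ argument. Because $U$ and $W$ recur across all layers, replacing them by cover elements alters the hidden states fed to later layers, so one cannot merely multiply independent per-layer covers; the clean remedy is an induction on the \emph{perturbed} hidden states, showing that after switching to $(\bar U,\bar V,\bar W)$ the layer-$s$ pre-activations move by at most the allotted amount, with the stated $(\rho_hB_U)^{t-s}$ propagation constants. Getting the factor $t$, the geometric term $\tfrac{(\rho_hB_U)^t-1}{\rho_hB_U-1}$ and the $\min\{\cdot,\cdot\}$ truncation to emerge with the right powers from the budget split, and verifying that the Dudley integral converges and contributes only the advertised logarithms, is the remaining --- essentially routine --- bookkeeping.
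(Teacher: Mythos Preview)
Your plan for \eqref{eq:21bound} is essentially what the paper does: it invokes the Lipschitz constants $L_{U,t},L_{V,t},L_{W,t}$ from Lemma~\ref{lemma:ytLipUVW}, replaces the entrywise matrix cover of Lemma~\ref{lemma:matrixcovering} by the $(2,1)$-norm covering of \cite[Lemma~3.2]{bartlett2017spectrally} (specialized with $X=I$), takes the product over $U,V,W$, and plugs the resulting $\epsilon^{-2}$-type log-cardinality into Dudley. Your ``layer-by-layer'' rederivation and weight-sharing bookkeeping are not needed beyond what Lemma~\ref{lemma:ytLipUVW} already supplies; the paper simply uses that lemma as a black box, which is cleaner. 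The stray factor of $m$ in your displayed covering bound is a normalization artifact and indeed cancels as you say.

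For \eqref{eq:pacbound}, however, there is a real gap. The claim that a Frobenius ball admits a per-matrix cover of log-cardinality $O(d\log(R/\epsilon))$ is not correct: covering the \emph{image} $\{Ah:\|A\|_{\mathrm F}\le R\}$ for a \emph{fixed} $h$ lives in $\RR^{d_h}$ and costs $O(d_h\log(R/\epsilon))$, but that is not a cover of the function $h\mapsto Ah$ uniformly over inputs, which is what $\textrm{dist}(\cdot,\cdot)$ requires. A genuine Frobenius-ball cover in the matrix metric still has $O(d^2)$ in the exponent, and routing through $\|A\|_{2,1}\le\sqrt d\,\|A\|_{\mathrm F}$ would reproduce a $(2,1)$-type bound with an extra $\log(\sqrt d\,m)$, not the clean $\sqrt{d\ln d}$ of \eqref{eq:pacbound}. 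The paper does \emph{not} obtain \eqref{eq:pacbound} via covering and Dudley at all: it uses the PAC-Bayes lemma of \cite{neyshabur2017pac}, takes Gaussian prior and perturbation $\cN(0,\sigma^2 I)$, bounds $\|y_t(\alpha+\beta)-y_t(\alpha)\|_2$ by the same telescoping as in Lemma~\ref{lemma:htLipWU}, uses spectral-norm concentration of the Gaussian perturbation (which is where the $\sqrt{d\ln d}$ arises) to tune $\sigma$, and reads off the bound from the KL term $\|\alpha\|_2^2/(2\sigma^2)=O\big((B_{U,\mathrm F}^2+B_{V,\mathrm F}^2+B_{W,\mathrm F}^2)/\sigma^2\big)$. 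That mechanism is what converts the $d^2$ parameters into a single $\sqrt d$ factor, and it has no covering-number counterpart of the form you propose.
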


The detailed proof is provided in Appendix \ref{pf:thmtightbound}. The first bound \eqref{eq:21bound} adapts the matrix covering lemma in \cite{bartlett2017spectrally}. The second bound \eqref{eq:pacbound} adapts the PAC-Bayes approach \citep{neyshabur2017pac} by analyzing the divergence when imposing small perturbations on the weight matrices. 

We highlight the improvements of the obtained refined generalization bounds: When the weight matrices are approximately low rank, that is, $\norm{\cdot}_{2, 1} \ll d \norm{\cdot}_2$ and $\norm{\cdot}_\textrm{F} \ll \sqrt{d} \norm{\cdot}_2$, for $\beta \leq 1$, bound \eqref{eq:pacbound} improves bound \eqref{eq:genbound} by reducing dependence on $d$. Additionally, if $t \left(M_U+M_V+M_W\right) < d$, bound \eqref{eq:21bound} also tightens bound \eqref{eq:genbound}. Note that $t \left(M_U+M_V+M_W\right) < d$ implies that the input sequence is relatively short.

\section{Extensions to MGU, LSTM, and Conv RNNs}\label{sec:extension}

We extend our analysis to Minimal Gated Unit (MRU), Long Short-Term Memory (LSTM) RNNs and Convolutional RNNs (ConvRNNs).
%

The MGU RNNs compute 
\begin{align}
r_t = \sigma (W_r x_t + U_r h_{t-1}), \quad \tilde{h}_t = \sigma_h\left(W_h x_t + U_h (r_t \odot h_{t-1})\right), \quad h_t = (1 - r_t) \odot h_{t-1} + r_t \odot \tilde{h}_t,\notag
\end{align}
where $W_r, W_h \in \RR^{d_h \times d_x}$, $U_r, U_h \in \RR^{d_h \times d_h}$, $V \in \RR^{d_y \times d_h}$, and $r_t \in \RR^{d_h}$. The notation $\odot$ denotes the Hadamard product (entrywise product) of vectors. Denote by $\cF_{g, t}$ the class of mappings from the first $t$ inputs to the $t$-th output computed by gated (MGU or LSTM) RNNs.
For simplicity, we consider $\sigma$ being the sigmoid function, i.e., $\sigma(x) = (1 + \exp(-x))^{-1}$, $\sigma_h(\cdot) = \tanh(\cdot)$, and $\sigma_y$ being $\rho_y$-Lipschitz with $\sigma_y(0) = 0$. Extensions to general Lipschitz activation operators as in Assumption \ref{assump3} are straightforward. Suppose we have $h_0 = 0$ and the following assumption.

\begin{figure}[!htb]
\begin{center}
	\includegraphics[width=0.36\textwidth]{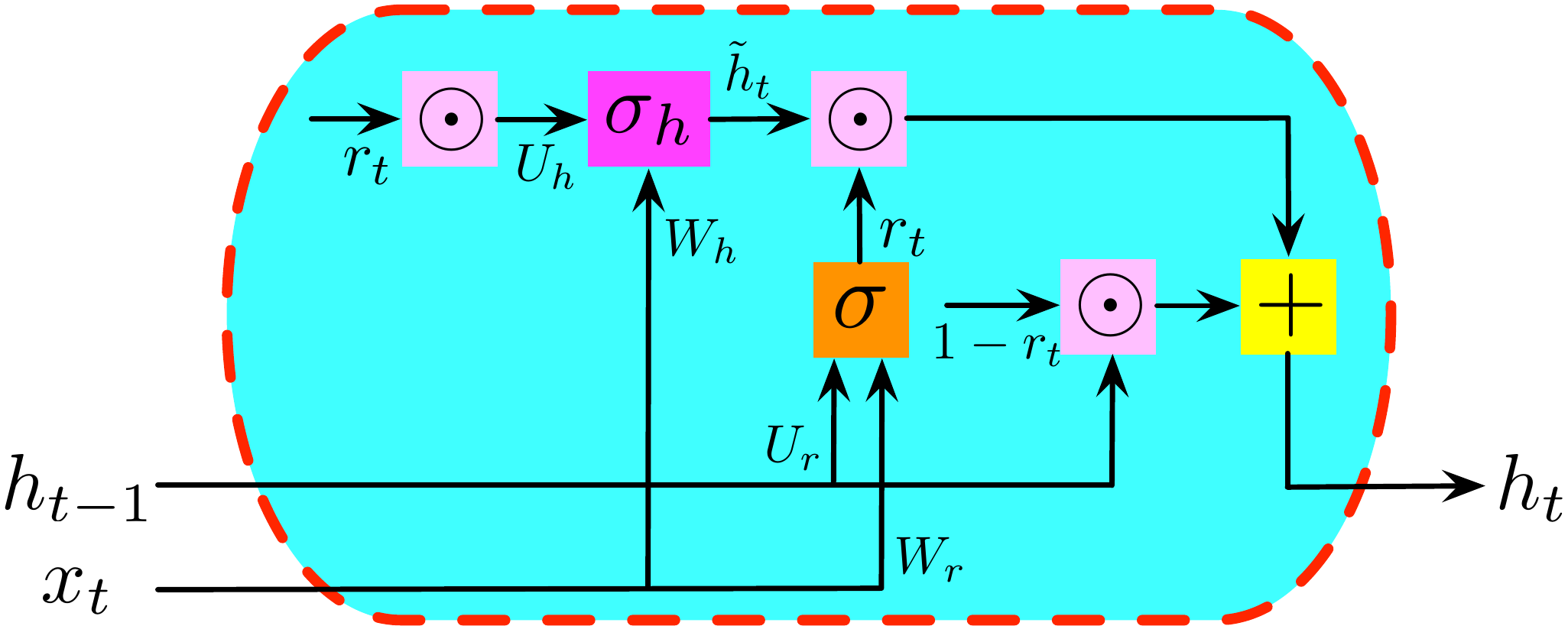}
\end{center}
\caption{A building block of MGU RNNs.}
\end{figure}

\begin{assumption}\label{assump4}
All the weight matrices have bounded spectral norms respectively, i.e. $\lVert W_r \rVert_2 \leq B_{W_r},  \lVert W_h \rVert_2 \leq B_{W_h}, \lVert U_r \rVert_2 \leq B_{U_r}, \lVert U_h \rVert_2 \leq B_{U_h},$ and $\lVert V \rVert_2 \leq B_V.$
\end{assumption} 
A similar argument for vanilla RNNs yields a generalization bound of MGU RNNs as follows.
\begin{theorem}\label{thm:MGUgenbound}
Let the activation operator $\sigma_y$ be given and Assumptions \ref{assump1} and \ref{assump4} hold. Then for $(x_t, z_t)_{t=1}^T$ and $S = \left\{(x_{i, t}, z_{i, t})_{t=1}^T, i = 1, \dots, m \right\}$ drawn i.i.d. from any underlying distribution over $\RR^{d_x \times T} \times \{1, \dots, K\}$, with probability at least $1 - \delta$ over $S$, for every margin value $\gamma > 0$ and every $f_t \in \cF_{g, t}$ for integer $t \leq T$, we have
\begin{align}
\PP\left(\tilde{z}_t \neq z_t \right) \leq \hat{\cR}_\gamma(f_t) + 3\sqrt{\frac{\log \frac{2}{\delta}}{2m}} + O \left(\frac{d \rho_y B_V \min \big\{\sqrt{d}, B_{W_h}B_x\frac{\beta^t-1}{\beta-1}\big\} \sqrt{\log \big(\frac{\theta^t - 1}{\theta - 1} d\sqrt{m} \big)}}{\sqrt{m}\gamma} \right), \notag
\end{align}
where $\beta = \max_{j}\big\{\left\lVert 1 - r_j \right\rVert_\infty +  B_{U_h} \left \lVert r_j \right\rVert_\infty^2 \big\}$, $\theta = \beta + 2B_{U_r}  + B_{U_r} B_{U_h}$ and $d= \max\{d_x, d_y, d_h\}$.
\end{theorem}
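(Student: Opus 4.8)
The plan is to follow, step for step, the four-part argument behind Theorem~\ref{thm:genbound} (Lemmas~\ref{lemma:startpoint}--\ref{lemma:Radcomplexity}), now carried out for the five weight matrices $W_r, W_h, U_r, U_h, V$ of an MGU cell. Applying the ERC-based bound of Lemma~\ref{lemma:startpoint} to $\cF_{g, t}$, it suffices to bound $\mathfrak{R}_S(\cF_{\gamma, t})$, and via Dudley's entropy integral (exactly as in the proof of Lemma~\ref{lemma:Radcomplexity}) this reduces to controlling (i) the diameter of $\cF_{g, t}$ in the $\sup$-norm metric and (ii) the covering number $\cN(\cF_{g, t}, \epsilon, \textrm{dist}(\cdot, \cdot))$.

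First I would bound the hidden state. Since $r_t = \sigma(\cdot) \in (0, 1)^{d_h}$ entrywise, $h_t = (1 - r_t) \odot h_{t-1} + r_t \odot \tilde{h}_t$ is an entrywise convex combination of $h_{t-1}$ and $\tilde{h}_t = \tanh(\cdot)$, so $\lVert h_t \rVert_\infty \leq \max\{\lVert h_{t-1} \rVert_\infty, 1\}$ and with $h_0 = 0$ this gives $\lVert h_t \rVert_\infty \leq 1$, hence $\lVert h_t \rVert_2 \leq \sqrt{d_h} \leq \sqrt{d}$. Separately, unrolling the recursion in the Euclidean norm and using $\lVert \tilde{h}_t \rVert_2 \leq B_{W_h} B_x + B_{U_h} \lVert r_t \rVert_\infty \lVert h_{t-1} \rVert_2$ together with $\lVert r_t \rVert_\infty, \lVert 1 - r_t \rVert_\infty \leq 1$ gives $\lVert h_t \rVert_2 \leq (\lVert 1 - r_t \rVert_\infty + B_{U_h} \lVert r_t \rVert_\infty^2) \lVert h_{t-1} \rVert_2 + B_{W_h} B_x \leq \beta \lVert h_{t-1} \rVert_2 + B_{W_h} B_x$, so $\lVert h_t \rVert_2 \leq B_{W_h} B_x \frac{\beta^t - 1}{\beta - 1}$. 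Combining, $\lVert y_t \rVert_2 = \lVert \sigma_y(V h_t) \rVert_2 \leq \rho_y B_V \min\{\sqrt{d}, B_{W_h} B_x \frac{\beta^t - 1}{\beta - 1}\}$, which is the diameter factor appearing in the claimed bound.

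The core step, \textbf{and the main obstacle}, is the analog of Lemma~\ref{lemma:ytLipUVW}: for two parameter tuples with shared inputs and activations, $\lVert y_t - y_t' \rVert_2$ is dominated by a weighted sum of $\lVert W_r - W_r' \rVert_\textrm{F}, \lVert W_h - W_h' \rVert_\textrm{F}, \lVert U_r - U_r' \rVert_\textrm{F}, \lVert U_h - U_h' \rVert_\textrm{F}, \lVert V - V' \rVert_\textrm{F}$ with Lipschitz constants of order $\textrm{poly}(B_\bullet) \cdot \frac{\theta^t - 1}{\theta - 1}$. I would prove this by telescoping through the recursion, tracking the perturbations $\lVert r_t - r_t' \rVert_2$, $\lVert \tilde{h}_t - \tilde{h}_t' \rVert_2$, $\lVert h_t - h_t' \rVert_2$ simultaneously, using $\lVert h_{t-1} \rVert_\infty \leq 1$ and $\lVert r_t \rVert_\infty \leq 1$ to split Hadamard products via $\lVert a \odot b - a' \odot b' \rVert_2 \leq \lVert a \rVert_\infty \lVert b - b' \rVert_2 + \lVert b' \rVert_\infty \lVert a - a' \rVert_2$. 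Chasing the dependence on $\lVert h_{t-1} - h_{t-1}' \rVert_2$ through both occurrences of $r_t$ (in the gating weights and inside $\tilde{h}_t$ via $r_t \odot h_{t-1}$) produces exactly the per-step amplification $\theta = \beta + 2 B_{U_r} + B_{U_r} B_{U_h}$ after bounding the sigmoid's Lipschitz constant by $1$; the $V$-dependence is handled as in the vanilla case. The delicate bookkeeping here — the parameter-dependence of the gates coupled with the Hadamard products — is what makes the MGU analysis substantially heavier than the vanilla one.

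Given the Lipschitz lemma, the rest is routine. Covering each of the five matrices in Frobenius norm at a scale matched to its Lipschitz constant and taking the product (as in Lemma~\ref{lemma:Ftcovering}) yields $\cN(\cF_{g, t}, \epsilon, \textrm{dist}(\cdot, \cdot)) \leq (1 + C \sqrt{d}\, \frac{\theta^t - 1}{\theta - 1} / \epsilon)^{O(d^2)}$ with $d = \max\{d_x, d_y, d_h\}$, since each matrix has at most $d^2$ entries and there are five of them, so $\sqrt{\log \cN} = O(d \sqrt{\log(\sqrt{d}\, \frac{\theta^t - 1}{\theta - 1} / \epsilon)})$. Feeding this into Dudley's entropy integral as in Lemma~\ref{lemma:Radcomplexity}, using that $\cM$ is $2$-Lipschitz and $\ell_\gamma$ is $\frac{1}{\gamma}$-Lipschitz so a cover of $\cF_{g, t}$ induces one of $\cF_{\gamma, t}$, the integration range contributes the diameter $\rho_y B_V \min\{\sqrt{d}, B_{W_h} B_x \frac{\beta^t - 1}{\beta - 1}\}$ and the entropy term contributes $d \sqrt{\log(\frac{\theta^t - 1}{\theta - 1} d \sqrt{m})}$, all divided by $\sqrt{m}\gamma$. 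Substituting the resulting bound on $\mathfrak{R}_S(\cF_{\gamma, t})$ into Lemma~\ref{lemma:startpoint} and using $\PP(\tilde{z}_t \neq z_t) \leq \cR_\gamma(f_t)$ completes the proof.
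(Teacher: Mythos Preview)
Your proposal is correct and follows essentially the same route as the paper: bound $\lVert h_t\rVert_2$ two ways (entrywise $\lVert h_t\rVert_\infty\le 1$ and the recursive $\beta$-estimate), telescope through $r_t,\tilde h_t,h_t$ to obtain the per-step amplification $\theta=\beta+2B_{U_r}+B_{U_r}B_{U_h}$, transfer to $y_t$ via $V$, cover the five weight matrices in Frobenius norm, and finish with Dudley's integral plus Lemma~\ref{lemma:startpoint}. The paper's appendix carries out exactly this computation, with the same Hadamard splitting and the same upper bound of $1$ on the sigmoid's Lipschitz constant.
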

The detailed proof is provided in Appendix \ref{pf:thmMGUgenbound}. As can be seen, $r_t$ shrinks the magnitude of hidden state to reduce the dependence on $d$ and $t$ in generalization. 
As a result, with proper normalization of weight matrices, the generalization bound of MGU RNNs is less dependent on $d, t$.
%

The LSTM RNNs are more complicated than MGU RNNs, which introduce more gates to control the information flow in RNNs. LSTM RNNs have two hidden states, and compute them as,
\begin{align}
	g_t  &= \sigma (W_g x_t + U_g h_{t-1}), \quad r_t = \sigma (W_r x_t + U_r h_{t-1}), \notag \\
	o_t & = \sigma (W_o x_t + U_o h_{t-1}), \quad \tilde{c}_t = \sigma_c \left(W_c x_t + U_c h_{t-1}\right), \notag \\
	c_t & = g_t \odot c_{t-1} + r_t \odot \tilde{c}_t, \quad h_t = o_t \odot \tanh(c_t), \notag
\end{align}
where $W_g, W_r, W_o, W_c \in \RR^{d_h \times d_x}$, $U_g, U_r, U_o, U_c \in \RR^{d_h \times d_h}$, and $g_t, r_t, o_t \in \RR^{d_h}$. For simplicity, we also consider $\sigma$ being the sigmoid function, and $\sigma_c(\cdot) = \tanh(\cdot)$. The $t$-th output is $y_t = \sigma_y(V h_t)$, where $V \in \RR^{d_y \times d_h}$, and $\sigma_y$ is $\rho_y$-Lipschitz with $\sigma_y(0) = 0$. Suppose we have $h_0 = c_0 = 0$ and the following assumption.

\begin{figure}[!htb]
\begin{center}
	\includegraphics[width=0.38\textwidth]{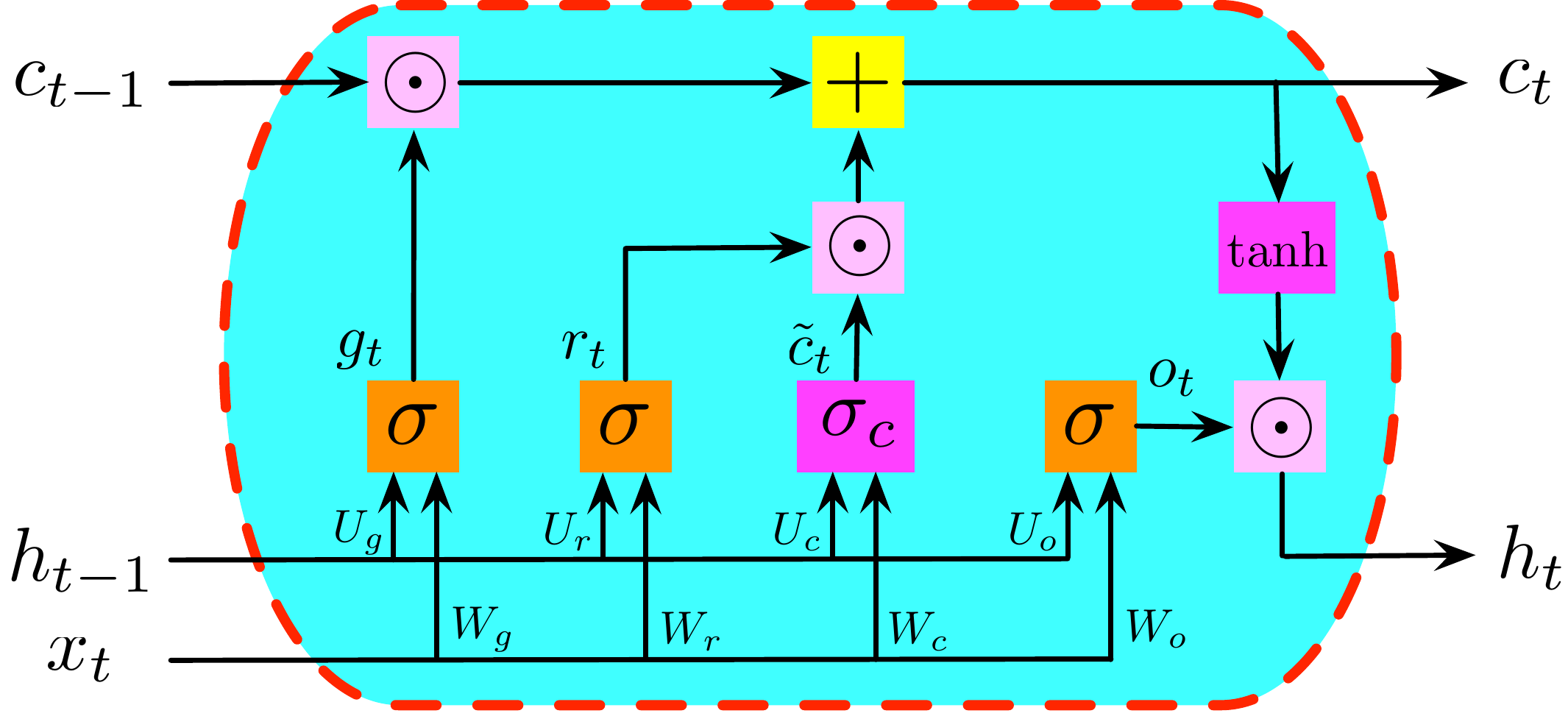}
\end{center}
\caption{A building block of LSTM RNNs.}
\end{figure}

\begin{assumption}\label{assump5} 
The spectral norms of weight matrices are bounded respectively, i.e. $\lVert W_g \rVert_2 \leq B_{W_g}, \lVert W_r \rVert_2 \leq B_{W_r}, \lVert W_o \rVert_2 \leq B_{W_o}, \lVert W_c \rVert_2 \leq B_{W_c}, \lVert U_g \rVert_2 \leq B_{U_g}, \lVert U_r \rVert_2 \leq B_{U_r}, \lVert U_o \rVert_2 \leq B_{U_o}, \lVert U_h \rVert_2 \leq B_{U_h},$ and $\lVert V \rVert_2 \leq B_V.$
\end{assumption}
For properly normalized weight matrices $W_o$ and $U_o$, the generalization bound of LSTM RNNs is given in the following theorem.
\begin{theorem}\label{thm:LSTMgenbound}
Let the activation operator $\sigma_y$ be given and Assumptions \ref{assump1} and \ref{assump5} hold. Then for $(x_t, z_t)_{t=1}^T$ and $S = \left\{(x_{i, t}, z_{i, t})_{t=1}^T, i = 1, \dots, m \right\}$ drawn i.i.d. from any underlying distribution over $\RR^{d_x \times T} \times \{1, \dots, K\}$, with probability at least $1 - \delta$ over $S$, for every margin value $\gamma > 0$ and every $f_t \in \cF_{g, t}$ for integer $t \leq T$, we have
\begin{align}
\PP\left(\tilde{z}_t \neq z_t \right) \leq \hat{\cR}_\gamma(f_t) + 3\sqrt{\frac{\log \frac{2}{\delta}}{2m}} + O \left(\frac{d \rho_y B_V \min\big\{\sqrt{d}, B_{W_c}B_x\frac{\beta^t-1}{\beta-1}\big\}\sqrt{\log \big(\frac{\theta^t - 1}{\theta - 1} d\sqrt{m} \big)}}{\sqrt{m}\gamma}\right), \notag
\end{align}
where $\beta = \max\left\{\left\lVert g_j \right\rVert_\infty +  B_{U_c} \left \lVert r_j \right\rVert_\infty \left \lVert o_j \right\rVert_\infty \right\}$, $\theta = \beta + B_{U_g} + B_{U_r} + B_{U_o}$, and $d = \max\{d_x, d_y, d_h\}$.
\end{theorem}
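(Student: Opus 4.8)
The plan is to follow the same three-step template used for vanilla RNNs in Lemmas \ref{lemma:ytLipUVW}--\ref{lemma:Radcomplexity} and re-used for MGU RNNs in Theorem \ref{thm:MGUgenbound}. Since each $f_t \in \cF_{g,t}$ is in one-to-one correspondence with the nine weight matrices $W_g,W_r,W_o,W_c,U_g,U_r,U_o,U_c,V$, Lemma \ref{lemma:startpoint} reduces the problem to upper bounding $\mathfrak{R}_S(\cF_{g,t})$. The first ingredient is a magnitude bound on the states: because $\sigma$ is the sigmoid and $\sigma_c=\tanh$, every coordinate of $o_t$ lies in $(0,1)$ and of $\tanh(c_t)$ in $(-1,1)$, so $h_t=o_t\odot\tanh(c_t)$ obeys $\lVert h_t\rVert_\infty\le 1$ and hence $\lVert h_t\rVert_2\le\sqrt{d_h}$. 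Alternatively, unrolling $c_t=g_t\odot c_{t-1}+r_t\odot\tilde c_t$ together with $\lVert h_{t-1}\rVert_2\le\lVert o_{t-1}\rVert_\infty\lVert c_{t-1}\rVert_2$ and $\lVert\tilde c_t\rVert_2\le B_{W_c}B_x+B_{U_c}\lVert h_{t-1}\rVert_2$ gives $\lVert c_t\rVert_2\le\beta\lVert c_{t-1}\rVert_2+B_{W_c}B_x$, so $\lVert c_t\rVert_2\le B_{W_c}B_x\frac{\beta^t-1}{\beta-1}$ and $\lVert h_t\rVert_2\le\min\{\sqrt{d_h},\,B_{W_c}B_x\frac{\beta^t-1}{\beta-1}\}$. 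This is exactly the $\min\{\sqrt d,\cdot\}$ factor in the statement, and it will enter as (a bound on) the diameter of $\cF_{g,t}$.

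Next I would prove the LSTM analogue of Lemma \ref{lemma:ytLipUVW}: for two parameter tuples producing $(c_t,h_t)$ and $(c_t',h_t')$, set $\Delta^h_t=\lVert h_t-h_t'\rVert_2$ and $\Delta^c_t=\lVert c_t-c_t'\rVert_2$. Since the sigmoid is $\tfrac14$-Lipschitz and $\tanh$ is $1$-Lipschitz and bounded by $1$, each of $\lVert g_t-g_t'\rVert_2,\lVert r_t-r_t'\rVert_2,\lVert o_t-o_t'\rVert_2$ (and the analogue for $\tilde c_t$) is controlled by a weight-perturbation term of the form $B_x\lVert W_\bullet-W_\bullet'\rVert_{\textrm{F}}+\lVert h_{t-1}\rVert_2\lVert U_\bullet-U_\bullet'\rVert_{\textrm{F}}$ plus $B_{U_\bullet}\Delta^h_{t-1}$. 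Propagating these through $c_t=g_t\odot c_{t-1}+r_t\odot\tilde c_t$ and $h_t=o_t\odot\tanh(c_t)$ and collecting the coefficients of $\Delta^c_{t-1}$ and $\Delta^h_{t-1}$, the coupled recursion amplifies by at most $\theta=\beta+B_{U_g}+B_{U_r}+B_{U_o}$ per step --- the $\beta$ coming from the direct cell path (as above) and $B_{U_g}+B_{U_r}+B_{U_o}$ from the feedback of $h_{t-1}$ into the gates --- so telescoping yields $\lVert y_t-y_t'\rVert_2\le\sum L_{\bullet,t}\lVert(\bullet)-(\bullet)'\rVert_{\textrm{F}}$ over the nine matrices with $L_{\bullet,t}=O\big(\mathrm{poly}(B_\bullet,\rho_y,B_x)\cdot\tfrac{\theta^t-1}{\theta-1}\big)$. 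This step decouples the spectral norms from the parameter count and is the main obstacle: unlike the vanilla case there are two interdependent states and four gates, and $c_t$ is \emph{not} entrywise bounded, so one must carry the magnitude factor $B_{W_c}B_x\frac{\beta^t-1}{\beta-1}$ through the bookkeeping and verify that the gate feedback contributes exactly the additive $B_{U_g}+B_{U_r}+B_{U_o}$ in $\theta$ and nothing worse.

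Finally I would convert this into a covering-number bound and apply Dudley's entropy integral exactly as in Lemmas \ref{lemma:Ftcovering} and \ref{lemma:Radcomplexity}. The nine weight matrices have $4d_hd_x+4d_h^2+d_yd_h\le 9d^2$ entries with $d=\max\{d_x,d_y,d_h\}$; covering each matrix at Frobenius scale $\Theta(\epsilon/L_{\bullet,t})$ and taking products gives $\log\cN(\cF_{g,t},\epsilon,\textrm{dist}(\cdot,\cdot))=O\big(d^2\log(1+\mathrm{poly}\cdot\tfrac{\theta^t-1}{\theta-1}/\epsilon)\big)$. Composing with the $2$-Lipschitz margin operator $\cM$ and the $\tfrac1\gamma$-Lipschitz ramp loss, and running Dudley's integral with its upper limit set by the output diameter $\lVert y_t\rVert_2\le\rho_yB_V\min\{\sqrt d,B_{W_c}B_x\frac{\beta^t-1}{\beta-1}\}$ from the first step, yields
\[
\mathfrak{R}_S(\cF_{g,t})=O\!\left(\frac{d\,\rho_yB_V\min\{\sqrt d,\,B_{W_c}B_x\frac{\beta^t-1}{\beta-1}\}\sqrt{\log\big(\tfrac{\theta^t-1}{\theta-1}d\sqrt m\big)}}{\sqrt m\,\gamma}\right).
\]
Plugging this into Lemma \ref{lemma:startpoint} and using $\PP(\tilde z_t\neq z_t)\le\cR_\gamma(f_t)$ gives the claimed bound.
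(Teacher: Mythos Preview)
Your proposal is correct and follows essentially the same three-step approach as the paper: bound $\lVert h_t\rVert_2$ via the cell-state recursion, establish Lipschitz continuity in the nine weight matrices by telescoping a coupled $(\Delta^c_t,\Delta^h_t)$ recursion with per-step amplification $\theta$, then convert to a covering bound and apply Dudley's integral as in Lemmas \ref{lemma:Ftcovering}--\ref{lemma:Radcomplexity}. If anything you are slightly more careful than the paper in flagging that $c_t$ is not entrywise bounded, so $\lVert c_{t-1}\rVert_\infty$ (which multiplies $\lVert g_t-g_t'\rVert_2$ in the perturbation expansion) must be tracked via the $\beta$-recursion; the paper is silent on this, but the extra factor only enters the Lipschitz constants and hence the logarithm, so the stated $O(\cdot)$ bound is unaffected.
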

The detailed proof is provided in Appendix \ref{pf:thmLSTMgenbound}. Similar to MGU RNNs, LSTM RNNs also introduce extra decaying factors to reduce the dependence on $d$ and $t$ in generalization. However, LSTM RNNs are more complicated, but more flexible than MGU RNNs, since three factors, $r_t$, $o_t$ and $g_t$ are used to jointly control the spectrum of $U_c$. We further remark that LSTM RNNs need spectral norms of weight matrices, $W_g, W_r, W_o, U_g, U_r$ and $U_o$, to be properly controlled for obtaining better generalization bounds.




We further extend our analysis to Convolutional RNNs (Conv RNNs). Conv RNNs integrate convolutional filters and recurrent neural networks. Specifically, we consider input $x \in \RR^d$ and $k$-channel $k$-dimensional convolutional filters $\cI_1, \dots, \cI_k \in \RR^k$ followed by an average pooling layer over the $k$ channels for reducing dimensionality. Extensions to convolution with strides and other kinds of average pooling layers (e.g., blockwise pooling) are straightforward.

Here we denote the circulant-like matrix generated by $\cI_i$ as
\begin{align}
C_i =
\begin{bmatrix}
\cI_i^\top \quad~~ \underbrace{0 \dots \dots \dots 0}_{d-k} \\
0  \quad~~ \cI_i^\top \quad \underbrace{0 \dots \dots 0}_{d-k-1}\\
~ ~~~\quad \quad \quad \quad \ddots \quad \quad~~~ ~ \\
\underbrace{0 \dots \dots \dots 0}_{d-k} \quad~~ \cI_i^\top
\end{bmatrix} \in \RR^{(d-k+1) \times d}, \notag
\end{align} 
and write $W_{\cI} = [C_1^\top, \dots, C_k^\top]^\top$. We further denote $P = \frac{1}{k} \underbrace{\left[I_{d-k+1}~ I_{d-k+1}~ \cdots ~I_{d-k+1}\right]}_{\textrm{totally}~k~ \textrm{identity matrices}},$ where $I_d$ denotes the $d$-dimensional identity matrix. Define $\cI = [\cI_1, \dots, \cI_k]$, and $\cI \ast x = PW_\cI x$. Given a sample $(x_t, z_t)_{t=1}^T$, the Conv RNNs compute $h_t$ and $y_t$ as follows,
$$h_{t} = \sigma_h\left(\cU \ast h_{t-1} + \cW \ast x_{t}\right),\quad\textrm{and}\quad y_{t} = \sigma_y\left(\cV\ast h_{t}\right),$$ where $h_t, x_t \in \RR^d$, and $\cU, \cV, \cW \in \RR^{k \times k}$ are matrices with column vectors being $k$-dimensional convolutional filters. We use zero-padding to ensure the output dimension of convolutional filters matches the input \citep{krizhevsky2012imagenet}. To get $y_t$, we convolve $h_t$ with $\cV$ followed by an average pooling to reduce the dimension to $K$. Since we aim to show that Conv RNNs reduce the dependence on $d$ in generalization through parameter sharing, we simplify the notations to assume $h_0 = 0$, and impose the following assumption. Extensions to general settings are straightforward.
\begin{figure}[!htb]
\begin{center}
\includegraphics[width=0.5\textwidth]{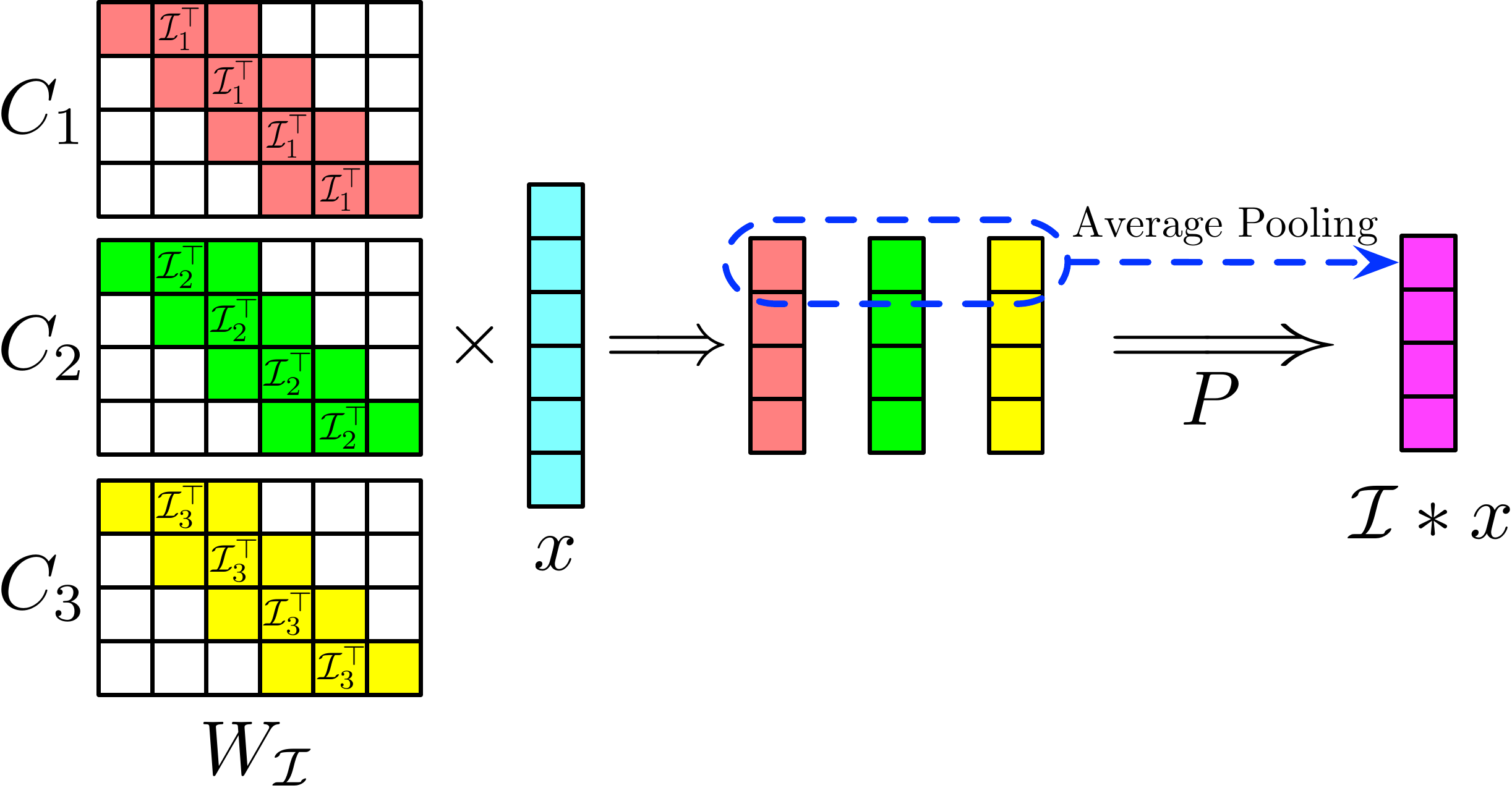}
\end{center}
\caption{\it Illustration of input $x \in \RR^6$ convolving with 3-channel 3-dimensional convolutional filters $\cI_1, \cI_2$, and $\cI_3$, followed by an average pooling.}
\end{figure}
\begin{assumption}\label{assump6}
The activation operators $\sigma_h$ and $\sigma_y$ are 1-Lipschitz with $\sigma_h(0) = \sigma_y(0) = 0$. $\sigma_h$ is entrywise bounded by 1. The convolutional filters $\cU$, $\cV$, and $\cW$ are orthogonal with normalized columns, i.e., $\cU^\top \cU = \cU \cU^\top = \frac{1}{k} I_k, \cV^\top \cV = \cV \cV^\top = \frac{1}{k} I_k,$ and $\cW^\top \cW = \cW \cW^\top = \frac{1}{k} I_k.$
\end{assumption}
We remark that the orthogonality constraints enhance the diversity among convolutional filters \citep{xie2017all, huang2017orthogonal}. Additionally, the normalization factor $\frac{1}{k}$ is to control the spectral norms of $W_{\cU}$, $W_\cV$, and $W_\cW$, which prevents the blowup of hidden state. Denote by $\cF_{c, t}$ the class of mappings from the first $t$ inputs to the $t$-th output computed by Conv RNNs. Then the generalization bound is given in the following theorem.
\begin{theorem}\label{thm:Convgenbound}
Let activation operators $\sigma_h$ and $\sigma_y$ be given, and Assumptions \ref{assump1} and \ref{assump6} hold. Then for $(x_t, z_t)_{t=1}^T$ and $S = \left\{(x_{i, t}, z_{i, t})_{t=1}^T, i = 1, \dots, m \right\}$ drawn i.i.d. from any underlying distribution over $\RR^{d \times T} \times \{1, \dots, K\}$, with probability at least $1 - \delta$ over $S$, for every margin value $\gamma > 0$ and every $f_t \in \cF_{c, t}$ for integer $t \leq T$, we have
\begin{align}
\PP\left(\tilde{z}_t \neq z_t \right) \leq \hat{\cR}_\gamma(f_t) + O\left(\frac{k t \sqrt{\log \left(dt \sqrt{m} \right)}}{\sqrt{m}\gamma} + \sqrt{\frac{\log \frac{1}{\delta}}{m}}\right). \notag
\end{align}
\end{theorem}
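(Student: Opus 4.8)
The plan is to run the same three-step program that established Theorem~\ref{thm:genbound} for vanilla RNNs (Lemmas~\ref{lemma:ytLipUVW}--\ref{lemma:Radcomplexity}), with the $k\times k$ filter banks $\cU,\cV,\cW$ in place of the weight matrices $U,V,W$, and to exploit the structure $\cI\ast\cdot=PW_\cI(\cdot)$ of the convolution-plus-pooling operator on two fronts: it keeps the hidden state from blowing up, and it lets us describe any map in $\cF_{c,t}$ by only $3k^2$ scalar parameters. Applying Lemma~\ref{lemma:startpoint} to $\cF_{c,\gamma,t}=\{(X_t,z_t)\mapsto\ell_\gamma(-\cM(f_t(X_t),z_t)):f_t\in\cF_{c,t}\}$, and using $\PP(\tilde z_t\neq z_t)\le\cR_\gamma(f_t)$, it suffices to bound $\mathfrak{R}_S(\cF_{c,\gamma,t})$.

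The crux is the operator-norm bookkeeping for $\cI\ast\cdot$. Since $P=\tfrac1k[I~\cdots~I]$ obeys $PP^\top=\tfrac1k I$, we have $\norm{P}_2=1/\sqrt k$; and since each circulant-like block $C_i$ is banded with bandwidth $k$, $\norm{C_i}_2\le\norm{\cI_i}_1\le\sqrt k\,\norm{\cI_i}_2$, so $\norm{W_\cI}_2^2=\norm{\sum_iC_i^\top C_i}_2\le k\norm{\cI}_{\mathrm F}^2$. Combining the two estimates, $\norm{\cI\ast\cdot}_{\mathrm{op}}\le\norm{\cI}_{\mathrm F}$, which equals $1$ under Assumption~\ref{assump6} because $\norm{\cI}_{\mathrm F}^2=\mathrm{trace}(\cI^\top\cI)=1$ (a direct computation with $\cI\cI^\top=\tfrac1k I$ shows $\sum_iC_i^\top C_i$ is diagonal with entries at most $1$, which even sharpens this to $\norm{\cI\ast\cdot}_{\mathrm{op}}\le1/\sqrt k$). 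With $\sigma_h$ $1$-Lipschitz and $\sigma_h(0)=0$, this gives $\norm{h_t}_2\le\norm{\cU\ast h_{t-1}}_2+\norm{\cW\ast x_t}_2\le\norm{h_{t-1}}_2+B_x$, hence $\norm{h_t}_2\le B_xt$; combined with $\norm{h_t}_\infty\le1$ (so $\norm{h_t}_2\le\sqrt d$) we obtain $\norm{h_t}_2\le\min\{\sqrt d,B_xt\}$ and $\norm{y_t}_2\le\norm{\cV\ast\cdot}_{\mathrm{op}}\norm{h_t}_2\le\min\{\sqrt d,B_xt\}$. This is what replaces the potentially exponential-in-$t$ factor of the vanilla and gated bounds, and together with the $3k^2$ parameter count (parameter sharing) removes the polynomial-in-$d$ leading factor at the price of an $O(t)$ term.

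Next I would prove the analogue of Lemma~\ref{lemma:ytLipUVW}: telescoping through the $t$ recurrent steps exactly as there, but using the single new ingredient $\norm{(\cU-\cU')\ast h}_2\le\norm{\cU-\cU'}_{\mathrm F}\norm{h}_2$ (the same estimate applied to $\cU-\cU'$, which need not obey the orthogonality constraint), one gets $\norm{y_t-y_t'}_2\le L_{\cU,t}\norm{\cU-\cU'}_{\mathrm F}+L_{\cV,t}\norm{\cV-\cV'}_{\mathrm F}+L_{\cW,t}\norm{\cW-\cW'}_{\mathrm F}$ with each $L_{\cdot,t}$ polynomial in $t$ (at most quadratic, from the reuse of $\cU$ and $\cW$ at all $t$ steps and the $O(t)$ bound on $\norm{h_t}_2$) and, crucially, free of $d$. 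Since $(\cU,\cV,\cW)$ lies in Frobenius balls of radius $O(1)$ under Assumption~\ref{assump6}, covering each filter at Frobenius scale $\epsilon/(L_{\cU,t}+L_{\cV,t}+L_{\cW,t})$ and taking the product gives, as in Lemma~\ref{lemma:Ftcovering}, $\cN(\cF_{c,t},\epsilon,\mathrm{dist})\le(1+O(\mathrm{poly}(t))/\epsilon)^{3k^2}$. Composing with the $2$-Lipschitz margin $\cM$ and the bounded $\tfrac1\gamma$-Lipschitz ramp loss $\ell_\gamma$, and running Dudley's entropy integral on $\cF_{c,t}$ as in the proof of Lemma~\ref{lemma:Radcomplexity} — the range of the output class is $\sup_{f_t}\norm{f_t(X_t)}_2\le\min\{\sqrt d,B_xt\}$ which we bound by its $O(t)$ branch, the integrand is $\sqrt{\log\cN}=O(k\sqrt{\log(\mathrm{poly}(t)/\epsilon)})$, and the truncation level is taken at $\Theta(1/\sqrt m)$ — yields $\mathfrak{R}_S(\cF_{c,\gamma,t})=O(kt\sqrt{\log(dt\sqrt m)}/(\sqrt m\gamma))$, the surviving $d$ appearing only inside the logarithm via the a priori bound $\norm{h_t}_2\le\sqrt d$. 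Plugging this into Lemma~\ref{lemma:startpoint} and absorbing $3\sqrt{\tfrac{\log(2/\delta)}{2m}}$ into the stated $\sqrt{\tfrac{\log(1/\delta)}{m}}$ term completes the proof.

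I expect the second paragraph to be the main obstacle. One has to exploit the exact cancellation between the $1/\sqrt k$ contraction of the pooling matrix $P$ and the up-to-$\sqrt k$ expansion of the unfolding $W_\cI$, use the orthogonality of Assumption~\ref{assump6} to pin $\norm{\cI\ast\cdot}_{\mathrm{op}}\le1$ so that the hidden states grow only polynomially (not exponentially) in $t$, and carry the $d$-free perturbation inequality $\norm{(\cU-\cU')\ast h}_2\le\norm{\cU-\cU'}_{\mathrm F}\norm{h}_2$ through the recursion — all while absorbing the boundary rows of the ``circulant-like'' matrices (which are not genuinely circular, so only the cruder norm inequalities apply there) into universal constants. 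Once these structural estimates are in hand, the covering-number and chaining steps are routine adaptations of the vanilla-RNN arguments.
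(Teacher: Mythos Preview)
Your proposal is correct and follows essentially the same three-step program as the paper (Lipschitz continuity of $y_t$ in the filters, covering number over the $3k^2$ parameters, Dudley integral, then Lemma~\ref{lemma:startpoint}). The one substantive difference is in the operator-norm bookkeeping for the convolution-plus-pooling map: you use the sharp $\norm{P}_2=1/\sqrt k$ together with the banded-Toeplitz estimate $\norm{W_\cI}_2\le\sqrt k\,\norm{\cI}_{\mathrm F}$ to obtain the $d$-free perturbation inequality $\norm{(\cU-\cU')\ast h}_2\le\norm{\cU-\cU'}_{\mathrm F}\norm{h}_2$, and hence Lipschitz constants $L_{\cdot,t}=O(t^2)$ that do not involve $d$. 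The paper instead takes the cruder route $\norm{P}_2\le 1$ and $\norm{W_{\cU-\cU'}}_2\le\norm{W_{\cU-\cU'}}_{\mathrm F}\le\sqrt d\,\norm{\cU-\cU'}_{\mathrm F}$, which produces Lipschitz constants $L_{\cU,t}=dt$, $L_{\cV,t}=d$, $L_{\cW,t}=B_x\sqrt d\,t$. Both routes feed into the same covering/Dudley machinery and land on the same $O\big(kt\sqrt{\log(dt\sqrt m)}/(\sqrt m\gamma)\big)$ Rademacher bound, since in the paper's version the polynomial-in-$d$ Lipschitz factors are absorbed into the logarithm while the $k$ comes from the $3k^2$ parameter count; your sharper estimates would in fact let you drop the $d$ inside the log, which is strictly stronger than (and thus implies) the stated theorem.
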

The detailed proof is provided in \ref{pf:thmConvgenbound}. Similar to the analysis of vanilla RNNs, our proof is based on the Lipschitz continuity of Conv RNNs with respect to its model parameters in the convolutional filters. Specifically, by Assumption \ref{assump6}, the spectral norms of $W_\cU$, $W_\cV$, and $W_\cW$ are all bounded by 1. Combining with the inequality, $\norm{W_\cU}_\textrm{F} \leq \sqrt{d} \norm{\cU}_\textrm{F}$, we have $\lVert y_t - y'_t \rVert_2 \leq L_{V, t} \lVert \cV - \cV' \rVert_\textrm{F} + L_{\cU, t} \lVert \cU - \cU' \rVert_\textrm{F} + L_{\cW, t} \lVert \cW - \cW' \rVert_\textrm{F},$ where $L_{\cU, t}$, $L_{\cV, t}$, and $L_{\cW, t}$ are polynomials in $d$ and $t$. Additionally, observe that the total number of parameters in a Conv RNN is at most $3k^2$, which is independent of input dimension $d$. As a consequence, the generalization bound of Conv RNNs only has a lieanr dependence on $k$ and $t$.


\vspace{-0.05in}
\section{Numerical Evaluation}\label{sec:experiment}
\vspace{-0.05in}

We demonstrate a comparison among our obtained generalization bound with \cite{bartlett2017spectrally}, \cite{neyshabur2017pac}, and \cite{zhang2018stabilizing}. Specifically, we train\footnote{We adopt code: \url{https://github.com/pytorch/examples/tree/master/word_language_model}.} a vanilla RNN on the wikitext language modeling dataset \citep{merity2016pointer}. We take $\sigma_h = \tanh$ and set the hidden state $h \in \RR^{128}$ and the input $x \in \RR^{14}$ with $\norm{x}_2 \leq 1$. Accordingly, we have $d = 128$ and take the sequence length $t = 56$. We list the complexity bounds for vanilla RNNs in Theorem \ref{thm:genbound} (Ours), \cite{zhang2018stabilizing} (Bound 1), \eqref{eq:21bound} of Theorem \ref{thm:tightbound} (Bound 2), and \eqref{eq:pacbound} of Theorem \ref{thm:tightbound} (Bound 3) neglecting common log factors in $d$ and $t$:
\begin{itemize}
\item Ours: $d B_V \min\big\{\sqrt{d}, B_W\frac{B_U^t - 1}{B_U - 1}\big\} \sqrt{\log \big(\frac{B_U^t - 1}{B_U - 1}\big)}$;

\item Bound 1: $dt^2 B_V B_W \max\{1, B_U^{t}\}$;

\item Bound 2: $B_V B_W \left(M_U + M_V + M_W\right)t \frac{B_U^t - 1}{B_U - 1}$;

\item Bound 3: $\big(\min\{\sqrt{d}, B_W \frac{B_U^t - 1}{B_U - 1}\} B_{U} + B_{W}\big) \frac{B_U^t - 1}{B_U - 1} \times \sqrt{d (B_{U,\textrm{F}}^2 + B_{W,\textrm{F}}^2 + B_{V,\textrm{F}}^2)}.$
\end{itemize}
The corresponding complexity bounds are shown in Figure \ref{fig:boundcompare}. As can be seen, our complexity bound in Theorem \ref{thm:genbound} is much smaller than Bounds 1-3. In more detail, the trained vanilla RNN has $B_U = 2.6801 > 1$. As discussed earlier, for $B_U > 1$, only our bound in Theorem \ref{thm:genbound} is polynomial in the size of the network, while Bounds 1-3 are all exponential in $t$. The resulting complexity bounds corroborate such a conclusion.

\begin{figure}[!htb]
\centering
\includegraphics[width = 0.35\textwidth]{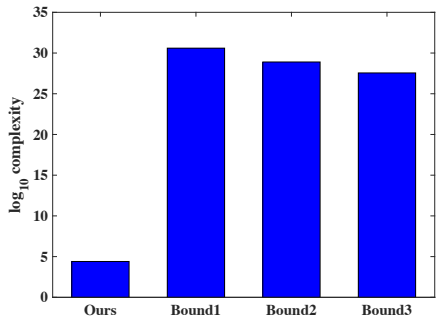}
\caption{Complexity bounds on wikitext dataset.}
\label{fig:boundcompare}
\end{figure}

We also observe that Bound 3 is smaller than Bound 2. The reason behind is that the weight matrices in the trained vanilla RNN have relatively small Frobenius norms but large $(2, 1)$ norms. Taking matrix $U$ as an example, we have $B_{U, \textrm{F}} = 13.6823$ and $M_U = 154.5439$. Then, we can calculate the stable rank $\frac{B_{U, \textrm{F}}}{B_U} = 5.1 < \sqrt{d} / 2$, and the ratio $\frac{M_{U}}{B_{U, \textrm{F}}} = 11.3 \approx \sqrt{d}$. This implies that the singular values of $U$ are not evenly distributed, while the norms of row vectors in $U$ are approximately equal.


\vspace{-0.05in}
\section{Discussions and Open Problems}\label{sec:discussion}
\vspace{-0.05in}

\noindent{\bf (I) Tighter bounds:} Our obtained generalization bounds depend on the spectral norms of weight matrices and the network size. Can we exploit other modeling structures to further reduce the dependence on the network size? Or can we find better choices of norms of weight matrices that yield better bounds?


\noindent{\bf (II) Margin value:} Our generalization bounds depend on the margin value of the predictors. As can be seen, a larger margin value yields a better generalization bound. However, establishing a sharp characterization of the margin value is technically very challenging, because of its complicated dependence on the underlying data distribution and the training algorithm. 


\noindent{\bf (III) Implicit bias of SGD:} Numerous empirical evidences have already shown that RNNs trained by stochastic gradient descent (SGD) algorithms have superior generalization performance. There have been a few theoretical results showing that SGD tends to yield low complexity models, which can generalize \citep{neyshabur2014search, neyshabur2015path, zhang2016understanding, soudry2017implicit}. Can we extend this argument to RNNs? For example, can SGD always yield weight matrices with well controlled spectra? This is crucial to the generalization of MGU and LSTM RNNs.



\noindent{\bf (IV) Adaptivity to the underlying distribution:} The current PAC-Learning framework focuses on the worst case. Taking classification as an example, the theoretical analysis holds even when the input features and labels are completely independent. Therefore, this often yields very pessimistic results. For many real applications, however, data are not obtained adversarially. Some recent empirical evidences suggest that the generalization of neural networks seems very adaptive to the underlying distribution: Easier tasks lead to low complexity neural networks, while harder ones lead to highly complex neural networks. Unfortunately, none of the existing analysis can take the underlying distribution into consideration.

\noindent{\bf (V) Sequentially dependent data:} To extend the analysis to scenarios where input sequences are dependent is quite challenging and largely open. \cite{rakhlin2015online} propose a so-called ``Sequential Rademacher Complexity'' to quantify the model complexity with dependent data. Their bound however, is exponential in the depth of a neural network, even with proper normalization on the weight matrices. \cite{kuznetsov2017generalization} also derive generalization bounds for dependent data under mixing conditions. They assume block independence for a sub-sample selection trick. The extension to fully dependent data is beyond the scope of this paper. We leave it for future investigation.



%
%
%
%

\bibliography{ref}
\bibliographystyle{ims}


\newpage
\onecolumn
\appendix

\section{Proofs in Section \ref{sec:theory}}\label{pf:sectheory}
\subsection{Lipschitz Continuity of $\cM$ and $\ell_\gamma$}
We show the Lipschitz continuity of the margin operator $\cM$ and the loss function $\ell_\gamma$ in the following lemma.
\begin{lemma}\label{lemma:MLLip}
The margin operator $\cM$ is 2-Lipschitz in its first argument with respect to vector Euclidean norm, and $\ell_\gamma$ is $\frac{1}{\gamma}$-Lipschitz.
\end{lemma}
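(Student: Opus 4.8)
The plan is to prove two separate Lipschitz bounds, one for the margin operator $\cM$ and one for the ramp loss $\ell_\gamma$, each by elementary arguments.

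First I would handle the margin operator $\cM(v, z) = v_z - \max_{j \neq z} v_j$ as a function of its first argument $v \in \RR^K$ with $z$ fixed. Write $\cM(v,z) = \psi_1(v) - \psi_2(v)$ where $\psi_1(v) = v_z$ is linear and $\psi_2(v) = \max_{j \neq z} v_j$. The coordinate-selection map $\psi_1$ is clearly $1$-Lipschitz with respect to the Euclidean norm since $|v_z - v'_z| \leq \norm{v - v'}_2$. For $\psi_2$, the pointwise maximum of the $1$-Lipschitz coordinate maps $v \mapsto v_j$ is itself $1$-Lipschitz: for any $v, v'$, $\psi_2(v) \leq \max_{j \neq z}(v'_j + |v_j - v'_j|) \leq \psi_2(v') + \norm{v - v'}_2$, and symmetrically, so $|\psi_2(v) - \psi_2(v')| \leq \norm{v - v'}_2$. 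By the triangle inequality, $|\cM(v,z) - \cM(v',z)| \leq |\psi_1(v) - \psi_1(v')| + |\psi_2(v) - \psi_2(v')| \leq 2\norm{v - v'}_2$, giving the $2$-Lipschitz claim.

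Second, for the ramp loss $\ell_\gamma(a) = \mathds{1}\{a > 0\} + (1 + a/\gamma)\mathds{1}\{-\gamma \leq a \leq 0\}$, I would simply observe that it is piecewise linear with three pieces: constant $1$ on $(0, \infty)$, the affine segment $1 + a/\gamma$ on $[-\gamma, 0]$ with slope $1/\gamma$, and constant $0$ on $(-\infty, -\gamma)$, and that it is continuous at the breakpoints $a = 0$ and $a = -\gamma$. Since the maximal absolute slope over all pieces is $1/\gamma$, a standard argument (bounding $|\ell_\gamma(a) - \ell_\gamma(a')|$ by summing the increments across the at most three subintervals that $[a', a]$ meets, each controlled by slope $\times$ length) shows $|\ell_\gamma(a) - \ell_\gamma(a')| \leq \frac{1}{\gamma}|a - a'|$.

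There is essentially no serious obstacle here; the only point requiring mild care is the Lipschitz continuity of the $\max$ function $\psi_2$ and verifying continuity of $\ell_\gamma$ at its breakpoints so that the piecewise slope bound genuinely transfers to a global Lipschitz constant. Both are routine. If a cleaner packaging is desired, one can instead note that $\cM(\cdot, z)$ is a difference of two $1$-Lipschitz functions and $\ell_\gamma$ is the composition of the $1$-Lipschitz clip map onto $[0,1]$ with the affine map $a \mapsto 1 + a/\gamma$, but the direct verification above is self-contained.
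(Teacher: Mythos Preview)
Your proposal is correct and follows essentially the same approach as the paper: split $\cM(\cdot,z)$ into the coordinate map and the $\max$ map, bound each by a $1$-Lipschitz estimate, and combine via the triangle inequality; for $\ell_\gamma$, invoke its piecewise-linear structure with maximal slope $1/\gamma$. The only cosmetic difference is that the paper routes the margin bound through $\|\cdot\|_\infty$ first (obtaining $2\|y-y'\|_\infty$ before passing to $\|\cdot\|_2$), whereas you work directly in the Euclidean norm; your treatment of $\ell_\gamma$ is slightly more careful about continuity at the breakpoints, which is fine but not strictly necessary.
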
 
\begin{proof}
Let $y$, $y'$ and $z$ be given, then
\begin{align}
\bigg|\cM(y, z) - \cM\left(y', z\right)\bigg| & = \bigg| y_z - y'_z + \left(\max_{j \neq z} y'_j - \max_{j \neq z} y_j \right)\bigg| \notag \\
& \leq \bigg| y_z - y'_z \bigg| + \bigg| \max_{j \neq z} y'_j - y_j \bigg| \notag \\
& \leq 2 \left \lVert y - y' \right \rVert_\infty \leq \left\lVert y - y' \right\rVert_2. \notag
\end{align}
For function $\ell_\gamma$, it is a piecewise linear function. Thus, it is straightforward to see that $\ell_\gamma$ is $\frac{1}{\gamma}$-Lipschitz.
\end{proof}

\subsection{Proof of Lemma \ref{lemma:ytLipUVW}}
\begin{proof}\label{pf:lemmaytLipUVW}
The Lemma is stated with matrix Frobenius norms. However, we can show a tighter bound only involving the spectral norms of weight matrices. Given weight matrices $U, V, W$ and $U', V', W'$, consider the $t$-th outputs $y_t$ and $y'_t$ of vanilla RNNs,
\begin{align}
\left \lVert y_t - y'_t \right\rVert_2 & = \left\lVert \sigma_y(Vh_t) - \sigma_y(V'h'_t) \right\rVert_2 \notag \\
& \leq \rho_y \left\lVert Vh_t - V'h_t + V'h_t - V'h'_t \right\rVert_2 \notag \\
& \leq \rho_y \left(\left\lVert (V - V') h_t \right\rVert_2 + \left \lVert V'(h_t - h'_t)\right \rVert_2\right) \notag \\
& \leq \rho_y \left(\left\lVert h_t \right\rVert_2 \left \lVert V - V' \right \rVert_2 + B_V \left \lVert h_t - h'_t \right\rVert_2 \right). \label{eq:yLip}
\end{align}

We have to bound the norm of $h_t$ as in the following lemma.
\begin{lemma}\label{lemma:htbound}
Under Assumptions \ref{assump1} to \ref{assump3}, for $t \geq 0$, the norm of $h_t$ is bounded by
\begin{align}
\left\lVert h_t \right\rVert_2 \leq \min\left\{b\sqrt{d}, \rho_h B_W B_x \frac{\left(\rho_hB_U\right)^t - 1}{\rho_h B_U - 1}\right\}. \label{eq:htbound}
\end{align}
\end{lemma}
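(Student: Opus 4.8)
The plan is to establish the two quantities inside the minimum separately and then combine them.

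First I would obtain the bound $b\sqrt{d}$ directly from the entrywise boundedness in Assumption \ref{assump3}. Since $\sigma_h$ acts entrywise and is bounded by $b$, every coordinate of $h_t = \sigma_h(U h_{t-1} + W x_t)$ satisfies $|[h_t]_j| \le b$, hence $\lVert h_t \rVert_2 \le b\sqrt{d_h}$. Because $d = \sqrt{d_xd_h + d_h^2 + d_hd_y} \ge \sqrt{d_h^2} = d_h \ge \sqrt{d_h}$, this already gives $\lVert h_t \rVert_2 \le b\sqrt{d}$.

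Next, for the geometric bound I would set up a one-step recursion. Using $\sigma_h(0) = 0$ and the $\rho_h$-Lipschitz property of $\sigma_h$ together with Assumptions \ref{assump1} and \ref{assump2},
\begin{align}
\lVert h_t \rVert_2 = \lVert \sigma_h(U h_{t-1} + W x_t) - \sigma_h(0) \rVert_2
&\le \rho_h \lVert U h_{t-1} + W x_t \rVert_2 \notag \\
&\le \rho_h\left(\lVert U \rVert_2 \lVert h_{t-1} \rVert_2 + \lVert W \rVert_2 \lVert x_t \rVert_2\right) \notag \\
&\le \rho_h B_U \lVert h_{t-1} \rVert_2 + \rho_h B_W B_x. \notag
\end{align}
Writing $\beta = \rho_h B_U$ and unrolling this recursion from $h_0 = 0$ (equivalently, a short induction on $t$) yields $\lVert h_t \rVert_2 \le \rho_h B_W B_x \sum_{k=0}^{t-1}\beta^k = \rho_h B_W B_x \frac{\beta^t - 1}{\beta - 1}$, with the convention $\frac{\beta^t-1}{\beta-1} = t$ when $\beta = 1$. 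Taking the minimum of the two estimates gives the stated bound, and the $t=0$ case holds trivially since both sides are then $0$.

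I do not expect a substantive obstacle here: the argument is essentially routine. The only points needing a little care are the comparison $d_h \le d$ used in the first bound and the $\beta = 1$ edge case in the geometric sum, both of which are covered by conventions already fixed in the text.
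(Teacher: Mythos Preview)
Your proposal is correct and follows essentially the same route as the paper: the paper likewise derives the one-step recursion $\lVert h_t\rVert_2 \le \rho_h(B_WB_x + B_U\lVert h_{t-1}\rVert_2)$, unrolls it from $h_0=0$ to obtain the geometric sum, and then combines with $\lVert h_t\rVert_\infty \le b$ to get the $b\sqrt{d}$ term. Your added remarks on $d_h \le d$ and the $\beta=1$ convention only make explicit what the paper leaves implicit.
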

\begin{proof}
We prove by induction. Observe that for $t \geq 1$, we have
\begin{align}
\left\lVert h_t \right\rVert_2 & = \left\lVert \sigma_h(Wx_t + U h_{t-1}) \right\rVert_2 \notag \\
& \leq \rho_h \left\lVert Wx_t + U h_{t-1} \right\rVert_2 \notag \\
& \leq \rho_h \left(\left\lVert Wx_t \right\rVert_2 + \left \lVert U h_{t-1} \right\rVert_2 \right) \notag \\
& \leq \rho_h \left(B_W B_x + B_U \left\lVert h_{t-1} \right\rVert_2 \right). \label{eq:htrecursive}
\end{align}
Applying equation \eqref{eq:htrecursive} recursively with $h_0 = 0$, we arrive at,
\begin{align}
\left\lVert h_t \right\rVert_2 \leq \rho_h B_W B_x \sum_{j=0}^{t-1} (\rho_h B_U)^j = \rho_h B_W B_x \frac{\left(\rho_hB_U\right)^t - 1}{\rho_h B_U - 1}, \notag
\end{align}
We also have $\lVert h_t \rVert_\infty \leq b$. Thus, combining with the above upper bound, we get $$\left\lVert h_t \right\rVert_2 \leq \min\left\{b\sqrt{d}, \rho_h B_W B_x \frac{\left(\rho_hB_U\right)^t - 1}{\rho_h B_U - 1}\right\}.$$ Clearly, $\lVert h_0 \rVert_2 = 0$ satisfies the upper bound.
\end{proof}

When $\rho_h B_U = 1$, the ratio is defined, by L'Hospital's rule, to be the limit,
\begin{align}
\lim_{\rho_hB_U \rightarrow 1} \frac{(\rho_hB_U)^t - 1}{\rho_hB_U - 1} = t. \notag
\end{align}

With Lemma \ref{lemma:htbound} in hand, we plug the bound \eqref{eq:htbound} into equation \eqref{eq:yLip} and end up with
\begin{align}\label{eq:yLipVh}
\left \lVert y_t - y'_t \right\rVert_2 \leq \rho_y \rho_h B_W B_x \frac{\left(\rho_hB_U\right)^t - 1}{\rho_h B_U - 1} \left \lVert V - V' \right \rVert_2 + \rho_y B_V \left \lVert h_t - h'_t \right\rVert_2.
\end{align}

The remaining task is to bound $\left \lVert h_t - h'_t \right\rVert_2$ in terms of the spectral norms of the difference of weight matrices, $\left\lVert W - W' \right\rVert_2$ and $\left\lVert U - U' \right\rVert_2$.

\begin{lemma}\label{lemma:htLipWU}
Under Assumptions \ref{assump1} to \ref{assump3}, for $t \geq 1$, the difference of hidden states $h_t$ and $h'_t$ satisfies
\begin{align*}
\left \lVert h_t - h'_t \right\rVert_2 \leq L_{W, t} \left\lVert W - W' \right\rVert_2 + L_{U, t} \left\lVert U - U' \right\rVert_2,
\end{align*}
where 
$L_{W, t} = \rho_h B_x \frac{(\rho_h B_U)^t - 1}{\rho_h B_U - 1}$ and $L_{U, t} = \rho_h^2 B_W B_x t \frac{(\rho_hB_U)_2^t - 1}{(\rho_hB_U) - 1}$.
\end{lemma}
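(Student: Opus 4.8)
The plan is to derive a one-step recursion for the discrepancy $\delta_t := \lVert h_t - h_t' \rVert_2$ and then unroll it against the trivial initialization $\delta_0 = 0$ (since $h_0 = h_0' = 0$). Starting from $h_t = \sigma_h(Wx_t + Uh_{t-1})$ and $h_t' = \sigma_h(W'x_t + U'h_{t-1}')$, I would use the $\rho_h$-Lipschitzness of $\sigma_h$ (Assumption \ref{assump3}), then split $Uh_{t-1} - U'h_{t-1}' = U(h_{t-1}-h_{t-1}') + (U-U')h_{t-1}'$ and apply the triangle inequality together with $\lVert x_t\rVert_2 \le B_x$ (Assumption \ref{assump1}) and $\lVert U\rVert_2 \le B_U$ (Assumption \ref{assump2}), giving
\begin{align}
\delta_t \;\le\; \rho_h B_x \lVert W - W' \rVert_2 \;+\; \rho_h B_U\, \delta_{t-1} \;+\; \rho_h \lVert h_{t-1}' \rVert_2\, \lVert U - U' \rVert_2 . \notag
\end{align}
Writing $a := \rho_h B_U$ and iterating this inequality from $\delta_0 = 0$ produces
\begin{align}
\delta_t \;\le\; \rho_h B_x \lVert W - W' \rVert_2 \sum_{j=0}^{t-1} a^j \;+\; \rho_h \lVert U - U' \rVert_2 \sum_{j=0}^{t-1} a^j \lVert h_{t-1-j}' \rVert_2 . \notag
\end{align}

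The coefficient of $\lVert W - W'\rVert_2$ is $\rho_h B_x \frac{a^t - 1}{a-1} = L_{W,t}$, as claimed. For the coefficient of $\lVert U - U'\rVert_2$ I would feed in Lemma \ref{lemma:htbound} in its geometric-sum form, $\lVert h_s' \rVert_2 \le \rho_h B_W B_x \sum_{i=0}^{s-1} a^i$ (valid for the primed system because the triple $(U',V',W')$ also satisfies Assumption \ref{assump2}), and then bound the double sum termwise: for each fixed $j$, $a^j \sum_{i=0}^{t-2-j} a^i = \sum_{k=j}^{t-2} a^k \le \sum_{k=0}^{t-2} a^k = \frac{a^{t-1}-1}{a-1} \le \frac{a^t-1}{a-1}$ (the last step holding for any $a>0$, and with the convention $\frac{a^t-1}{a-1}=t$ when $a=1$). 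Summing over the $t$ indices $j = 0,\dots,t-1$ gives $\sum_{j=0}^{t-1} a^j \lVert h_{t-1-j}'\rVert_2 \le \rho_h B_W B_x\, t\, \frac{a^t-1}{a-1}$, and multiplying by the leading $\rho_h$ yields exactly $L_{U,t} = \rho_h^2 B_W B_x\, t\, \frac{a^t-1}{a-1}$. Combining the two coefficients completes the proof.

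I expect no real obstacle here: the only delicate points are the degenerate case $\rho_h B_U = 1$, handled by the $\frac{a^t-1}{a-1}=t$ convention adopted in the paper (and justified by L'Hôpital just above), and keeping the index ranges of the double sum straight so that the crude per-$j$ estimate $\sum_{k=j}^{t-2}a^k \le \frac{a^t-1}{a-1}$ is enough to produce the extra factor of $t$. An alternative would be a direct induction on $t$, checking that the stated $L_{W,t}$ and $L_{U,t}$ satisfy the recursions $L_{W,t} = \rho_h B_x + a\,L_{W,t-1}$ and $L_{U,t} \ge a\,L_{U,t-1} + \rho_h\lVert h_{t-1}'\rVert_2$; I prefer the unrolling route since it makes the separation of the $\lVert W-W'\rVert_2$ and $\lVert U-U'\rVert_2$ contributions transparent and reuses Lemma \ref{lemma:htbound} verbatim.
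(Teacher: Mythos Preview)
Your proposal is correct and follows essentially the same route as the paper: derive the one-step recursion via Lipschitzness of $\sigma_h$ and a triangle-inequality split, unroll using $h_0=h_0'=0$, then insert the geometric bound on $\lVert h_s\rVert_2$ from Lemma~\ref{lemma:htbound} and collapse the resulting double sum into $t\cdot\frac{(\rho_hB_U)^t-1}{\rho_hB_U-1}$. The only cosmetic differences are that the paper splits as $(U-U')h_{t-1}+U'(h_{t-1}-h_{t-1}')$ (so the unprimed $\lVert h_{t-1}\rVert_2$ appears) and bounds the double sum by first counting multiplicities, $\sum_{j=0}^{t-1}(j+1)(\rho_hB_U)^j\le t\sum_{j=0}^{t-1}(\rho_hB_U)^j$, rather than your per-$j$ estimate; both yield the same $L_{U,t}$.
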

\begin{proof}
Similar to the proof of Lemma \ref{lemma:htbound}, we use induction.
\begin{align}
\left \lVert h_t - h'_t \right\rVert_2 & = \left \lVert \sigma_h\left(W x_t + U h_{t-1}\right) - \sigma_h\left(W'x_t + U' h'_{t-1} \right) \right\rVert_2 \notag \\
& \leq \rho_h \left\lVert (W - W')x_t + Uh_{t-1} - U'h'_{t-1} \right\rVert_2 \notag \\
& \leq \rho_h \left(\left\lVert (W - W')x_t \right\rVert_2 + \left \lVert Uh_{t-1} - U'h'_{t-1} \right\rVert_2\right) \notag \\
& \leq \rho_h \left(B_x \left\lVert W - W' \right\rVert_2 + \left \lVert Uh_{t-1} - U'h_{t-1} + U'h_{t-1} - U'h'_{t-1} \right\rVert_2 \right) \notag \\
& \leq \rho_h B_x \left\lVert W - W' \right\rVert_2 + \rho_h \left(\left\lVert h_{t-1} \right\rVert_2 \left\lVert U - U' \right\rVert_2 + B_U \left \lVert h_{t-1} - h'_{t-1} \right\rVert_2 \right). \notag
\end{align}
Repeat this derivation recursively, we have
\begin{align}
\left \lVert h_t - h'_t \right\rVert_2 & \leq \rho_h B_x \left\lVert W - W' \right\rVert_2 + \rho_h \left\lVert h_{t-1} \right\rVert_2 \left\lVert U - U' \right\rVert_2 + \rho_h B_U \left \lVert h_{t-1} - h'_{t-1} \right\rVert_2 \notag \\
& \leq \rho_h B_x \left(1 + \rho_h B_U \right) \left\lVert W - W' \right\rVert_2 + \rho_h \left(\left\lVert h_{t-1} \right\rVert_2 + \rho_h B_U \left\lVert h_{t-2} \right\rVert_2\right)  \left\lVert U - U' \right\rVert_2 \nonumber\\
&\hspace{0.2in}+ \left(\rho_h B_U \right)^2 \left \lVert h_{t-2} - h'_{t-2} \right\rVert_2 \notag \\
& \leq \dots \dots \notag \\
& \leq \rho_h B_x \sum_{j=0}^{t-1} \left(\rho_h B_U \right)^j \left\lVert W - W' \right\rVert_2 + \rho_h \sum_{j=0}^{t-1} \left((\rho_h B_U)^{t-1-j} \left\lVert h_j\right\rVert_2 \right) \left\lVert U - U' \right\rVert_2 \nonumber\\
&\hspace{0.2in}+ (\rho_h B_U)^t \left \lVert h_0 - h'_0 \right\rVert_2 \notag \\
& \leq \rho_h B_x \frac{(\rho_h B_U)^t - 1}{\rho_h B_U - 1} \left\lVert W - W' \right\rVert_2 + \rho_h \sum_{j=0}^{t-1} \left((\rho_h B_U)^{t-1-j} \left\lVert h_j\right\rVert_2 \right) \left\lVert U - U' \right\rVert_2. \label{eq:htLipWU}
\end{align}
We now plug in the upper bound \eqref{eq:htbound} to calculate the summation involving the Euclidean norms of the hidden state $h_t$.
\begin{align}
\sum_{j=0}^{t-1} (\rho_h B_U)^{t-1-j} \left\lVert h_j\right\rVert_2 & \leq \sum_{j=0}^{t-1} (j+1)(\rho_h B_U)^{j} \rho_h B_W B_x \leq t \sum_{j=0}^{t-1} (\rho_hB_U)^j \rho_h B_W B_x \notag \\
& \leq \rho_h B_W B_x t \frac{(\rho_hB_U)^t - 1}{\rho_hB_U - 1}. \notag
\end{align}
Plugging back into equation \eqref{eq:htLipWU}, we have as desired,
\begin{align}
\left \lVert h_t - h'_t \right\rVert_2 \leq \rho_h B_x \frac{(\rho_h B_U)^t - 1}{\rho_h B_U - 1} \left\lVert W - W' \right\rVert_2 + \rho_h^2 B_W B_x t \frac{(\rho_hB_U)^t - 1}{\rho_hB_U - 1} \left\lVert U - U' \right\rVert_2. \notag
\end{align}
\end{proof}

Combining equation \eqref{eq:yLipVh} and Lemma \ref{lemma:htLipWU}, and $\norm{W}_{\textrm{F}} \geq \norm{W}_2$, we immediately get Lemma \ref{lemma:ytLipUVW}. 
\end{proof}

\subsection{Proof of Lemma \ref{lemma:Ftcovering}}
\begin{proof}\label{pf:lemmaFtcovering}
Our goal is to construct a covering $\cC(\cF_t, \epsilon, \textrm{dist}(\cdot, \cdot))$, i.e., for any $f_t \in \cF_t$, there exists $\hat{f}_t \in \cF_t$, for any input data $(x_{t})_{t=1}^T$, satisfying
\begin{align}
\sup_{X_t} \left\lVert f_t(X_t) - \hat{f}_t(X_t) \right\rVert_2 \leq \epsilon. \notag
\end{align}
Note that $f$ is determined by weight matrices $U, V$ and $W$. By Lemma \ref{lemma:ytLipUVW}, we have
\begin{align}
\sup_{X_t} \left \lVert f(X_t) - \hat{f}_t(X_t) \right \rVert_2 \leq L_{V, t} \left\lVert V - \hat{V} \right\rVert_\textrm{F} + L_{W, t} \left\lVert W - \hat{W} \right\rVert_\textrm{F} + L_{U, t} \left \lVert U - \hat{U} \right\rVert_\textrm{F}. \notag
\end{align}
Then it is enough to construct three matrix coverings, $\cC\left(U, \frac{\epsilon}{3 L_{U, t}}, \lVert \cdot \rVert_\textrm{F}\right)$, $\cC\left(V, \frac{\epsilon}{3 L_{V, t}}, \lVert \cdot \rVert_\textrm{F}\right)$ and $\cC\left(W, \frac{\epsilon}{3 L_{W, t}}, \lVert \cdot \rVert_\textrm{F}\right)$. Their Cartesian product gives us the covering $\cC(\cF_t, \epsilon, \textrm{dist}(\cdot, \cdot))$. The following lemma gives an upper bound on the covering number of matrices with a bounded Frobenius norm.
\begin{lemma}\label{lemma:matrixcovering}
Let $\cG = \left\{A \in \RR^{d_1 \times d_2} : \lVert A \rVert_2 \leq \lambda \right\}$ be the set of matrices with bounded spectral norm and $\epsilon > 0$ be given. The covering number $\cN(\cG, \epsilon, \lVert \cdot \rVert_\textrm{F})$ is upper bounded by
\begin{align}
\cN(\cG, \epsilon, \lVert \cdot \rVert_\textrm{F}) \leq \left(1 + 2\frac{\min\left\{\sqrt{d_1}, \sqrt{d_2}\right\} \lambda}{\epsilon}\right)^{d_1d_2}. \notag
\end{align}
\end{lemma}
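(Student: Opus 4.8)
\textbf{Proof plan for Lemma~\ref{lemma:matrixcovering}.} The plan is to reduce the statement to a standard volumetric covering bound for a Euclidean ball, after accounting for the fact that the spectral-norm ball sits inside the Frobenius-norm ball of radius $\min\{\sqrt{d_1},\sqrt{d_2}\}\lambda$. First I would observe that any matrix $A$ with $\lVert A\rVert_2\le\lambda$ has rank at most $\min\{d_1,d_2\}$, so its singular values $s_1,\dots,s_r$ (with $r=\min\{d_1,d_2\}$) satisfy $s_i\le\lambda$, whence $\lVert A\rVert_\textrm{F}^2=\sum_{i=1}^r s_i^2\le r\lambda^2$, i.e. $\lVert A\rVert_\textrm{F}\le\sqrt{\min\{d_1,d_2\}}\,\lambda$. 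Thus $\cG$ is contained in the Frobenius ball $\cB_\textrm{F}(0,R)$ with $R=\min\{\sqrt{d_1},\sqrt{d_2}\}\lambda$, and identifying $\RR^{d_1\times d_2}$ with $\RR^{d_1 d_2}$ equipped with the Euclidean norm $=\lVert\cdot\rVert_\textrm{F}$, this is just a Euclidean ball of radius $R$ in dimension $n:=d_1 d_2$.

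Next I would invoke the classical volume argument: for a ball of radius $R$ in $\RR^n$, a maximal $\epsilon$-separated set is an $\epsilon$-cover, and comparing volumes of the $\epsilon/2$-balls around the packing centers (all contained in the ball of radius $R+\epsilon/2$) with the ball of radius $\epsilon/2$ gives a packing/covering number at most $\big((R+\epsilon/2)/(\epsilon/2)\big)^n=(1+2R/\epsilon)^n$. Substituting $R=\min\{\sqrt{d_1},\sqrt{d_2}\}\lambda$ and $n=d_1 d_2$ yields exactly
\begin{align}
\cN(\cG,\epsilon,\lVert\cdot\rVert_\textrm{F})\le\left(1+\frac{2\min\{\sqrt{d_1},\sqrt{d_2}\}\lambda}{\epsilon}\right)^{d_1 d_2},\notag
\end{align}
as claimed. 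A minor point to handle cleanly: the cover can be taken inside $\cG$ itself (projecting each center onto $\cG$, which is convex, only decreases distances), though the stated bound does not even require this.

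The only genuinely delicate step is the bound $\lVert A\rVert_\textrm{F}\le\sqrt{\min\{d_1,d_2\}}\,\lVert A\rVert_2$, which is where the improvement over the naive $d_1 d_2$-dimensional ball of radius $\lambda$ comes from — and it is elementary via the singular value decomposition, so I do not expect any real obstacle. If one wanted to avoid the SVD, the same inequality follows from $\lVert A\rVert_\textrm{F}^2=\operatorname{trace}(A^\top A)\le\operatorname{rank}(A)\,\lVert A^\top A\rVert_2=\operatorname{rank}(A)\,\lVert A\rVert_2^2$. Everything else is the off-the-shelf volumetric covering estimate, so the lemma is essentially immediate once these two standard facts are assembled in the right order.
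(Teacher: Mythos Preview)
Your proposal is correct and follows essentially the same approach as the paper: vectorize so that the Frobenius norm becomes the Euclidean norm in $\RR^{d_1 d_2}$, use $\lVert A\rVert_\textrm{F}\le\sqrt{\min\{d_1,d_2\}}\,\lVert A\rVert_2$ to embed $\cG$ in a Euclidean ball of radius $R=\min\{\sqrt{d_1},\sqrt{d_2}\}\lambda$, and then apply the standard packing/volume argument with $\epsilon/2$-balls inside the enlarged ball of radius $R+\epsilon/2$. The paper's proof is exactly this, with the same $(1+2R/\epsilon)^{d_1 d_2}$ conclusion.
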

\begin{proof}
For any matrix $A \in \cG$, we define a mapping $\phi : \RR^{d_1 \times d_2} \mapsto \RR^{d_1d_2}$, such that $\phi(A) = [A_{:, 1}^\top, A_{:, 2}^\top, \dots, A_{:, h}^\top]^\top$, where $A_{:, i}$ denotes the $i$-th column of matrix $A$. Denote the vector space induced by the mapping $\phi$ by $\cV(\cG) = \left\{\phi(A) : A \in \cG\right\}$. Note that we have $\lVert A \rVert_{\textrm{F}}^2 = \sum_{i=1}^h A_{:, i}^\top A_{:, i} = \lVert \phi(A) \rVert_2^2$ and the mapping $\phi$ is one-to-one and onto. By definition, the square of Frobenius norm equals the square of sum of singular values and the spectral norm is the largest singular value. Hence, the equivalence of Frobenius norm and spectral norm is given by the following inequalities,
\begin{align}
\lVert A \rVert_2 \leq \lVert A \rVert_\textrm{F} \leq \min\left\{\sqrt{d_1}, \sqrt{d_2}\right\} \lVert A \rVert_2. \notag
\end{align}
Now, we see that if we construct a covering $\cC(\cV(\cG), \epsilon, \lVert \cdot \rVert_2)$, then $$\phi^{-1}\cC(\cV(\cG), \epsilon, \lVert \cdot \rVert_2) = \left\{\phi^{-1}(v) : v \in \cC(\cV(\cG), \epsilon, \lVert \cdot \rVert_2)\right\}$$ is a covering of $\cG$ at scale $\epsilon$ with respect to the matrix Frobenius norm. Therefore, we get $$\cN(\cG, \epsilon, \lVert \cdot \rVert_\textrm{F}) \leq \cN(\cV(\cG), \epsilon, \lVert \cdot \rVert_2).$$ As a consequence, it is suffices to upper bound the covering number of $\cV(\cG)$. In order to do so, we need another closely related concept, packing number.
\begin{definition}[Packing]
Let $\cG$ be an arbitrary set and $\epsilon > 0$ be given. We say $\cP(\cG, \epsilon, \lVert \cdot \rVert)$ is a packing of $\cG$ at scale $\epsilon$ with respect to the norm $\lVert \cdot \rVert$, if for any two elements $A, B \in \cP$, we have
\begin{align}
\left\lVert A - B \right\rVert > \epsilon. \notag
\end{align}
Denote by $\cM(\cG, \epsilon, \lVert \cdot \rVert)$ the maximal cardinality of $\cP(\cG, \epsilon, \lVert \cdot \rVert)$.
\end{definition}
By the maximality, we can check that $\cN(C, \epsilon, \lVert \cdot \rVert) \leq \cM(C, \epsilon, \lVert \cdot \rVert)$. Indeed, let $\cP^*(\cG, \epsilon, \lVert \cdot \rVert)$ be a maximal packing. Suppose there exists $A \in \cG$ such that for any $B \in \cP^*(\cG, \epsilon, \lVert \cdot \rVert)$, the inequality $\left\lVert A - B \right\rVert > \epsilon$ holds. Then we can add $A$ to $\cP^*(\cG, \epsilon, \lVert \cdot \rVert)$, while still keeping it being a packing, which contradicts the maximality of $\cP^*(\cG, \epsilon, \lVert \cdot \rVert)$. Thus, we have $\cN(\cG, \epsilon, \lVert \cdot \rVert) \leq \cM(\cG, \epsilon, \lVert \cdot \rVert)$.

Observe that $\cV(\cG)$ is contained in an Euclidean ball $\cB(0; R) \in \RR^{d_1 d_2}$ of radius at most 
\begin{align}
R = \max_{A \in \cG} \lVert \phi(A) \rVert_2 \leq \min\left\{\sqrt{d_1}, \sqrt{d_2}\right\} \lVert A \rVert_2 \leq \min\left\{\sqrt{d_1}, \sqrt{d_2}\right\} \lambda. \notag
\end{align}
Additionally, the union of Euclidean balls $\cB(v; \epsilon/2) \subset \RR^{d_1d_2}$ with radius $\epsilon/2$ and center $v \in \cP(\cV(\cG), \epsilon, \lVert \cdot \rVert_2)$ is further contained in an Euclidean ball $\cB(0; R_\epsilon)$ of slightly enlarged radius $R_\epsilon = \min\left\{\sqrt{d_1}, \sqrt{d_2}\right\} \lambda + \epsilon/2$. Those balls $\cB(v; \epsilon/2)$ are disjoint by the definition of packing, thus we have
\begin{align}
\cN(\cV(C), \epsilon, \lVert \cdot \rVert_2) &\leq \cP(\cV(C), \epsilon, \lVert \cdot \rVert_2) \leq \frac{\textrm{vol}(\cB(0, R_\epsilon))}{\textrm{vol}(\cB(v; \epsilon/2))} = \left(\frac{R_\epsilon}{\epsilon/2}\right)^{d_1d_2} \nonumber\\
&= \left(1 + 2\frac{\min\{\sqrt{d_1}, \sqrt{d_2}\} \lambda}{\epsilon}\right)^{d_1d_2}, \notag
\end{align}
where $\textrm{vol}(\cdot)$ denotes the volume.
\end{proof}
By Lemma \ref{lemma:matrixcovering}, we can directly write out the upper bounds on the covering numbers of weight matrices,
\begin{align}
& \cN\left(U, \frac{\epsilon}{3 L_{U, t}}, \lVert \cdot \rVert_\textrm{F}\right) \leq \left(1 + 6\frac{\sqrt{d_h} B_U L_{U, t}}{\epsilon}\right)^{d_h^2}, \notag \\
& \cN\left(V, \frac{\epsilon}{3 L_{V, t}}, \lVert \cdot \rVert_\textrm{F}\right) \leq \left(1 + 6\frac{\min\{\sqrt{d_y}, \sqrt{d_h}\} B_V L_{V, t}}{\epsilon}\right)^{d_y d_h}, \quad\textrm{and} \notag \\
& \cN\left(W, \frac{\epsilon}{3 L_{W, t}}, \lVert \cdot \rVert_\textrm{F}\right) \leq \left(1 + 6\frac{\min\{\sqrt{d_x}, \sqrt{d_h}\} B_W L_{W, t}}{\epsilon}\right)^{d_x d_h}. \notag
\end{align}
Then we immediately have,
\begin{align}
\cN(\cF_t, \epsilon, \textrm{dist}(\cdot, \cdot)) & \leq \cN\left(U, \frac{\epsilon}{3 L_{U, t}}, \lVert \cdot \rVert_\textrm{F}\right) \times \cN\left(V, \frac{\epsilon}{3 L_{V, t}}, \lVert \cdot \rVert_\textrm{F}\right) \times \cN\left(W, \frac{\epsilon}{3 L_{W, t}}, \lVert \cdot \rVert_\textrm{F}\right) \notag \\
& \leq \left(1 + \frac{6\sqrt{d_h} B_U L_{U, t}}{\epsilon}\right)^{d_h^2} \left(1 + \frac{\min\{6\sqrt{d_y}, \sqrt{d_h}\} B_V L_{V, t}}{\epsilon}\right)^{d_y d_h} \notag\\
& ~~ \times \left(1 + \frac{6\min\{\sqrt{d_x}, \sqrt{d_h}\} B_W L_{W, t}}{\epsilon}\right)^{d_x d_h}. \notag
\end{align}
Substituting the coefficients $L_{U, t}, L_{V, t}$ and $L_{W, t}$ from Lemma \ref{lemma:ytLipUVW}, we get 
\begin{align}
&\cN(\cF_t, \epsilon, \textrm{dist}(\cdot, \cdot)) \nonumber\\
& \leq \left(1 + \frac{6\sqrt{d} \rho_y\rho_hB_VB_WB_x \frac{(\rho_hB_U)^t - 1}{\rho_h B_U - 1}}{\epsilon}\right)^{2d^2} \left(1 + \frac{6\sqrt{d} \rho_y\rho_h^2B_UB_VB_WB_x t \frac{(\rho_hB_U)^t - 1}{\rho_h B_U - 1}}{\epsilon}\right)^{d^2} \notag \\
& \leq \left(1 + \frac{6 c \sqrt{d} t \frac{(\rho_hB_U)^t - 1}{\rho_h B_U - 1}}{\epsilon}\right)^{3d^2}, \notag
\end{align}
where $c = \rho_y\rho_hB_VB_WB_x \max\left\{1, \rho_hB_U\right\}$. For future usage, we also write down for small $\epsilon > 0$, such that $\frac{6 c \sqrt{d} t \frac{(\rho_hB_U)^t - 1}{\rho_h B_U - 1}}{\epsilon} > 1$, the logarithm of covering number satisfies,
\begin{align}
\log \cN(\cF_t, \epsilon, \textrm{dist}(\cdot, \cdot)) \leq 3d^2 \log \left(\frac{12 c \sqrt{d} t \frac{(\rho_hB_U)^t - 1}{\rho_h B_U - 1}}{\epsilon}\right). \notag
\end{align} 
\end{proof}

\subsection{Proof of Lemma \ref{lemma:Radcomplexity}}
\begin{proof}\label{pf:lemmaRadcomplexity}
Define $\cF_{\cM, t} = \left\{(X_t, z_t) \mapsto \cM(f_t(X_t), z_t) : f_t \in \cF_t \right\}$. By Lemma \ref{lemma:MLLip}, we see that $\cM$ is 2-Lipschitz in its first argument. In order to cover $\cF_{\cM, t}$ at scale $\epsilon$, it suffices to cover $\cF_t$ at scale $\frac{\epsilon}{2}$. This immediately gives us the covering number $\cN(\cF_{\cM, t}, \epsilon, \lVert \cdot \rVert_\infty) \leq \cN(\cF_t, \epsilon/2, \textrm{dist}(\cdot, \cdot))$. 

We then give the statement of Dudley's entropy integral.
\begin{lemma}\label{lemma:dudley}
Let $\cH$ be a real-valued function class taking values in $[-r, r]$ for some constant $r$, and assume that $0 \in \cH$. Let $S = (s_1, \dots, s_m)$ be given points, then
\begin{align}
\mathfrak{R}_S(\cH) \leq \inf_{\alpha > 0} \left(\frac{4\alpha}{\sqrt{m}} + \frac{12}{m} \int_\alpha^{2r\sqrt{m}} \sqrt{\log \cN(\cH, \epsilon, \lVert \cdot \rVert)} d\epsilon \right). \notag
\end{align}
\end{lemma}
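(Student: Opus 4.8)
The plan is to prove Lemma~\ref{lemma:dudley} by the classical chaining argument (Dudley's inequality), regarding $m\,\mathfrak{R}_S(\cH)$ as the expected supremum of a sub-Gaussian process. First I would identify each $h\in\cH$ with the vector $v_h=(h(s_1),\dots,h(s_m))\in\RR^m$ and, for a Rademacher vector $\epsilon$, set $Z_h=\langle\epsilon,v_h\rangle$, so that $\mathfrak{R}_S(\cH)=\frac1m\,\EE_\epsilon\sup_{h\in\cH}Z_h$; since $0\in\cH$ forces $Z_0=0$, it is enough to bound $\EE_\epsilon\sup_h(Z_h-Z_0)$. Throughout I take the covering metric $\|\cdot\|$ to be the empirical Euclidean distance $\|h-h'\|=\big(\sum_i(h(s_i)-h'(s_i))^2\big)^{1/2}$, which is the choice consistent with the normalization $\frac1m$ and the upper limit $2r\sqrt m$ in the statement; only the fact that $\|\cdot\|$ dominates the canonical metric of the process is used, so the bound transfers to other norms (for instance $\|\cdot\|_\infty$) at the cost of an obvious rescaling, which is exactly what the proof of Lemma~\ref{lemma:Radcomplexity} then does. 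The one analytic input is that the increments are sub-Gaussian: by Hoeffding's lemma $\EE_\epsilon e^{\lambda(Z_h-Z_{h'})}\le e^{\lambda^2\|h-h'\|^2/2}$, which yields the maximal inequality $\EE_\epsilon\max_{k\le N}(Z_{a_k}-Z_{b_k})\le\sigma\sqrt{2\log N}$ whenever all $N$ increments have variance proxy at most $\sigma^2$.

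For the chaining step I would set $\epsilon_0=2r\sqrt m$, an upper bound on $\sup_{h,h'}\|v_h-v_{h'}\|$ since $\|v_h\|\le r\sqrt m$, and $\epsilon_j=\epsilon_0\,2^{-j}$; for each $j$ pick a minimal $\epsilon_j$-net $T_j$ of $\cH$ in $\|\cdot\|$, so $|T_j|=\cN(\cH,\epsilon_j,\|\cdot\|)$ and $|T_{j-1}|\le|T_j|$, and a nearest-point map $\pi_j:\cH\to T_j$ with $\pi_0\equiv 0$. Telescoping $Z_h=(Z_h-Z_{\pi_Jh})+\sum_{j=1}^J(Z_{\pi_jh}-Z_{\pi_{j-1}h})$ and taking suprema over $h$, the residual is controlled deterministically by Cauchy--Schwarz, $Z_h-Z_{\pi_Jh}=\langle\epsilon,v_h-v_{\pi_Jh}\rangle\le\|\epsilon\|_2\,\epsilon_J=\sqrt m\,\epsilon_J$, while at level $j$ the pair $(\pi_jh,\pi_{j-1}h)$ takes at most $|T_j|^2$ values and each such increment has variance proxy at most $(\epsilon_j+\epsilon_{j-1})^2=(3\epsilon_j)^2$, so the maximal inequality gives $\EE_\epsilon\sup_h(Z_{\pi_jh}-Z_{\pi_{j-1}h})\le 6\epsilon_j\sqrt{\log\cN(\cH,\epsilon_j,\|\cdot\|)}$. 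Using $\epsilon_j=2(\epsilon_j-\epsilon_{j+1})$ and the monotonicity of $\cN(\cH,\cdot,\|\cdot\|)$ to dominate the resulting geometric sum by an integral, I obtain
\[
\EE_\epsilon\sup_h Z_h\le \sqrt m\,\epsilon_J+12\int_{\epsilon_{J+1}}^{2r\sqrt m}\sqrt{\log\cN(\cH,\epsilon,\|\cdot\|)}\,d\epsilon .
\]
Dividing by $m$ and choosing $J$ to be the largest index with $\epsilon_{J+1}\ge\alpha$ (hence $\epsilon_J=2\epsilon_{J+1}<4\alpha$ and $\epsilon_{J+1}\ge\alpha$; the range of $\alpha$ too large for such a $J$ to exist is covered by the trivial estimate $\mathfrak{R}_S(\cH)\le r\le 4\alpha/\sqrt m$) gives $\mathfrak{R}_S(\cH)\le\frac{4\alpha}{\sqrt m}+\frac{12}{m}\int_\alpha^{2r\sqrt m}\sqrt{\log\cN(\cH,\epsilon,\|\cdot\|)}\,d\epsilon$, and taking the infimum over $\alpha>0$ completes the proof.

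I expect the only real difficulty to be organizational rather than conceptual: applying the sub-Gaussian maximal inequality at each scale with the right count ($|T_j|^2$) and radius ($3\epsilon_j$) of increments, handling the residual at the finest retained scale cleanly, and chasing constants through the geometric-sum-to-integral conversion and the choice of $J$ so that exactly the constants $4$ and $12$ and the limits $\alpha$ and $2r\sqrt m$ emerge. Hoeffding's lemma, Cauchy--Schwarz, and the monotonicity of covering numbers are the only other ingredients and are standard; no new idea beyond textbook chaining is needed.
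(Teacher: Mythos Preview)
Your chaining argument is correct and is the standard derivation of Dudley's entropy integral with exactly the displayed constants; the identification of the process, the sub-Gaussian increment bound via Hoeffding, the level-$j$ maximal inequality with count $|T_j|^2$ and radius $3\epsilon_j$, the geometric-sum-to-integral conversion, and the choice of $J$ all go through as you describe. The paper itself does not prove Lemma~\ref{lemma:dudley} --- it simply states the bound and defers to \cite{bartlett2017spectrally} for the proof --- so there is no in-paper argument to compare against; your write-up just fills in what that citation contains, and your remark that the natural metric is the unnormalized empirical $\ell_2$ distance (with the $\|\cdot\|_\infty$ covers used downstream being a coarsening absorbed into the $O(\cdot)$) is exactly the right reading of the otherwise unspecified $\|\cdot\|$ in the statement.
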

The proof can be found in \cite{bartlett2017spectrally}. Taking $\cH = \cF_{\cM, t}$, we can easily verify that $\cF_{\cM, t}$ takes values in $[-r, r]$ with $r = \rho_y B_V \lVert h_t \rVert_2 \leq \rho_y B_V \min\left\{b\sqrt{d}, \rho_h B_W B_x \frac{(\rho_h B_U)^t - 1}{\rho_h B_U - 1}\right\}$ and $0 \in \cF_\cM$. Thus, directly applying Lemma \ref{lemma:dudley} yields the following bound,
\begin{align}
\mathfrak{R}_S(\cF_{\cM, t}) \leq \inf_{\alpha > 0} \left(\frac{4\alpha}{\sqrt{m}} + \frac{12}{m} \int_\alpha^{2r\sqrt{m}} \sqrt{\log \cN(\cF_{\cM, t}, \epsilon, \lVert \cdot \rVert_\infty)} d\epsilon \right). \notag
\end{align}
We bound the integral as follows,
\begin{align}
\int_\alpha^{2r\sqrt{m}} \sqrt{\log \cN(\cF_{\cM, t}, \epsilon, \lVert \cdot \rVert_\infty)} d\epsilon & \leq \int_\alpha^{2r\sqrt{m}} \sqrt{3d^2 \log \left(\frac{24 c \sqrt{d} t \frac{(\rho_hB_U)^t - 1}{\rho_h B_U - 1}}{\epsilon}\right)} d\epsilon \notag \\
& \leq 2r\sqrt{m} \sqrt{3d^2 \log \left(\frac{24 c \sqrt{d} t \frac{(\rho_hB_U)^t - 1}{\rho_h B_U - 1}}{\alpha}\right)}. \notag
\end{align}
Picking $\alpha = \frac{1}{\sqrt{m}}$ is enough to give us an upper bound on $\mathfrak{R}_S(\cF_{\cM, t})$,
\begin{align}
\mathfrak{R}_S(\cF_\cM) \leq \frac{4}{m} + \frac{24}{\sqrt{m}} \sqrt{3d^2 r^2 \log \left(24 c \sqrt{dm} t \frac{(\rho_hB_U)^t - 1}{\rho_h B_U - 1}\right)}. \notag
\end{align}
Finally, by Talagrand's lemma \citep{mohri2012foundations} and $\cL_\gamma$ being $\frac{1}{\gamma}$-Lipschitz, we have
\begin{align}
\mathfrak{R}_S(\cF_{\gamma, t}) \leq \frac{1}{\gamma} \mathfrak{R}_S(\cF_{\cM, t}) \leq \frac{4}{m\gamma} + \frac{24}{\sqrt{m}\gamma} \sqrt{3d^2 r^2 \log \left(24 c \sqrt{dm} t \frac{(\rho_hB_U)^t - 1}{\rho_h B_U - 1}\right)}. \notag
\end{align}
\end{proof}

\section{Proof in Section \ref{sec:fast}}
\subsection{Proof of Theorem \ref{thm:tightbound}}
\begin{proof}\label{pf:thmtightbound}
Under additional Assumption \ref{assump7}, we only need to show that, with the additional matrix induced norm bound, we have a refined upper bound on the matrix covering number. The proof relies on the following lemma adapted from \cite{bartlett2017spectrally} Lemma 3.2.
\begin{lemma}
Let $\cG = \left\{A \in \RR^{d_1 \times d_2} : \norm{A}_{2, 1} \leq \lambda \right\}$. We have the following matrix covering upper bound
\begin{align}
\log \cN (\cG, \epsilon, \norm{\cdot}_2) \leq \frac{\lambda^2}{\epsilon^2} \log (2d_1 d_2). \notag
\end{align}
\end{lemma}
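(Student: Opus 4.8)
I would prove this by Maurey's empirical method (probabilistic sparsification): approximate an arbitrary $A$ in the $(2,1)$-ball by a $k$-fold average of matrices drawn from a small \emph{fixed} family of ``atoms'', with $k$ of order $\lambda^2/\epsilon^2$, and then bound the covering number by counting the distinct such averages. First I would reduce to the case $\norm{A}_{2,1}=\lambda$ (rescale; a strictly smaller norm is handled by adjoining the zero matrix to the atom family). Next I would fix a finite family $\cV$ of cardinality at most $2d_1d_2$ consisting of signed, scaled, sparse rank-one matrices (the natural first guess being $\{\pm\lambda\, e_i e_j^\top\}_{i\in[d_1],\,j\in[d_2]}$), and exhibit, for each $A\in\cG$, a probability distribution $\pi_A$ on $\cV$ with $\ex_{V\sim\pi_A}[V]=A$, i.e. placing $A$ in the convex hull of $\cV$, while keeping $\ex_{\pi_A}\norm{V}_\textrm{F}^2$ controlled by $\lambda^2$.

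The second block of steps is the averaging and the count. Drawing $V^{(1)},\dots,V^{(k)}$ i.i.d.\ from $\pi_A$ and setting $\bar V=\tfrac1k\sum_{t}V^{(t)}$, the summands $V^{(t)}-A$ are i.i.d.\ and mean zero, so $\ex\norm{\bar V-A}_\textrm{F}^2=\tfrac1k\,\ex\norm{V-A}_\textrm{F}^2\le\tfrac1k\,\ex\norm{V}_\textrm{F}^2$; combined with $\norm{\cdot}_2\le\norm{\cdot}_\textrm{F}$ this gives $\ex\norm{\bar V-A}_2\le\lambda/\sqrt{k}$, and the probabilistic method then produces, for $k=\lceil\lambda^2/\epsilon^2\rceil$, a realization with $\norm{\bar V-A}_2\le\epsilon$. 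Since every such $\bar V$ is an average of $k$ members of $\cV$, the collection of all these averages is an $\epsilon$-cover of $\cG$ in spectral norm and has cardinality at most $|\cV|^k\le(2d_1d_2)^{\lceil\lambda^2/\epsilon^2\rceil}$; taking logarithms yields exactly $\log\cN(\cG,\epsilon,\norm{\cdot}_2)\le\lceil\lambda^2/\epsilon^2\rceil\log(2d_1d_2)$.

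\textbf{Main obstacle.} The delicate point is reconciling the two requirements on $\cV$. With the entrywise-sparse atoms $\pm\lambda\,e_i e_j^\top$ the convex hull is the entrywise $\ell_1$-ball of radius $\lambda$, which does \emph{not} contain the $(2,1)$-ball of radius $\lambda$ (only that of radius $\lambda/\sqrt{d_1}$, since $\sum_{ij}|A_{ij}|=\sum_j\norm{A_{:,j}}_1$ can be as large as $\sqrt{d_1}\,\norm{A}_{2,1}$); inflating the atoms to norm $\lambda\sqrt{d_1}$ restores containment but reintroduces a spurious factor $d_1$ into $k$. Passing instead to column-sparse rank-one atoms $\norm{A}_{2,1}\cdot\hat u_j e_j^\top$ makes the variance bound come out as $\lambda^2/k$ with no dimension loss, but these atoms depend on $A$ through the unit directions $\hat u_j$, so the family is no longer finite and the counting step breaks. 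Resolving this — e.g.\ by a dyadic peeling over the column magnitudes of $A$, or, more to the point, by keeping the column rank-one structure but controlling the \emph{spectral} norm of the random sparse sum directly (via symmetrization and a noncommutative moment inequality) rather than relaxing to the Frobenius norm, so that the cancellation removes the dimension factor — is precisely the content of \cite{bartlett2017spectrally} Lemma 3.2, which I would invoke for the sharp constant rather than re-derive from scratch.
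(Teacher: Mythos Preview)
Your proposal is correct and ultimately takes the same route as the paper: both invoke Lemma~3.2 of \cite{bartlett2017spectrally}, with the paper simply specializing the parameters there to $X=I$, $a=\lambda$, $b=1$, $m=d_1$, $d=d_2$. Your extended discussion of Maurey's method and the obstacle with data-independent atom families is accurate and illuminating, but it is additional exposition rather than a different argument---the paper's ``proof'' is the one-line citation you arrive at in your last sentence.
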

The above Lemma is a direct consequence of Lemma 3.2 in \cite{bartlett2017spectrally} with $X$ being identity, $a = \lambda$, $b = 1$, and $m = d_1, d = d_2$. We apply the same trick to split the overall covering accuracy $\epsilon$ into 3 parts, $\frac{\epsilon}{3 L_{U, t}}$, $\frac{\epsilon}{3 L_{V, t}}$, and $\frac{\epsilon}{3 L_{W, t}}$, corresponding to $U, V, W$ respectively. Then we derive a refined bound on the covering number of $\cF_t$:
\begin{align}\label{eq:tightcover}
\log \cN(\cF_t, \epsilon, \textrm{dist}(\cdot, \cdot)) \leq \frac{9\left(M_UL^2_{U, t} + M_VL^2_{V, t} + M_WL^2_{W, t}\right)}{\epsilon^2} \log (2d^2),
\end{align} 
where $d = \max\left\{d_x, d_y, d_h\right\}$. Substituting \eqref{eq:tightcover} into the Dudley integral as in the proof of Lemma \ref{lemma:Radcomplexity}  yields
\begin{align}
\mathfrak{R}_S(\cF_{\cM, t}) \leq \inf_{\alpha > 0} \left(\frac{4\alpha}{\sqrt{m}} + \frac{12}{m} \int_\alpha^{2r\sqrt{m}} \sqrt{\log \cN(\cF_{t}, \epsilon/2, \lVert \cdot \rVert_\infty)} d\epsilon \right). \notag
\end{align} 
We bound the integral as follows,
\begin{align}
\int_\alpha^{2r\sqrt{m}} \sqrt{\log \cN(\cF_{t}, \epsilon/2, \lVert \cdot \rVert_\infty)} d\epsilon & \leq \int_\alpha^{2r\sqrt{m}} 36\frac{\sqrt{M_UL^2_{U, t} + M_VL^2_{V, t} + M_WL^2_{W, t}}}{\epsilon} \sqrt{\log (2d^2)} d\epsilon \notag \\
& = 36 \sqrt{M_UL^2_{U, t} + M_VL^2_{V, t} + M_WL^2_{W, t}} \sqrt{\log (2d^2)} \log \frac{2r\sqrt{m}}{\alpha}. \notag
\end{align}
Choosing $\alpha = \frac{1}{\sqrt{m}}$ yields
\begin{align}
\mathfrak{R}_S(\cF_\cM) \leq \frac{4}{m} + \frac{432}{\sqrt{m}} \sqrt{M_UL^2_{U, t} + M_VL^2_{V, t} + M_WL^2_{W, t}} \sqrt{\log (2d^2)} \log \left(2m\sqrt{d}\right). \notag
\end{align}
Finally, substituting the Lipschitz constant $L_{U, t}, L_{V, t}, L_{W, t}$ into the expression, we have
\begin{align}
\mathfrak{R}_S(\cF_{\gamma, t}) & \leq \frac{1}{\gamma} \mathfrak{R}_S(\cF_{\cM, t}) \leq \frac{4}{m\gamma} + \frac{432}{\gamma \sqrt{m}} \sqrt{M_UL^2_{U, t} + M_VL^2_{V, t} + M_WL^2_{W, t}} \sqrt{\log (2d^2)} \log \left(2m\sqrt{d}\right) \notag \\
& \leq O\left(\frac{\alpha \max\{M_U, M_V, M_W\} t \frac{\beta^t - 1}{\beta - 1}}{\gamma \sqrt{m}} \sqrt{\log d} \log \left(m\sqrt{d}\right) \right). \notag
\end{align}
Combining with Lemma \ref{lemma:startpoint} completes the proof.

Under additional Assumption \ref{assump8}, our proof is based on the following result from Lemma 1 in \cite{neyshabur2017pac}. 
\begin{lemma}\label{lem:pac_bayes_bd}
Let $f_{\alpha}\rbr{x}: \cX \rightarrow \RR^d$ be any predictor with parameter $\alpha$, and $\cP$ be any distribution on the parameter that is independent of training data. Then, for any $\gamma, \delta > 0$, with probability at least $1-\delta$ over the training set of size $m$, for any $\alpha$ and any random perturbation $\beta$ s.t. $\PP_{\beta} \sbr{\max_{x \in \cX} \abr{f_{\alpha+\beta}\rbr{x} - f_{\alpha}\rbr{x} }_{\infty} < \frac{\gamma}{4} } \geq \frac{1}{2}$, we have\begin{align*}
\cR_{0}\rbr{f_{\alpha}} - \hat{\cR}_{\gamma}\rbr{f_{\alpha}} \leq 4 \sqrt{\frac{ {\rm KL}\rbr{\alpha+\beta \| \cP } + \log \rbr{\frac{6m}{\delta}} }{m-1} },
\end{align*}
where ${\rm KL}\rbr{\alpha+\beta \| \cP }$ is KL divergence of distributions $\alpha+\beta$ and $\cP$.
\end{lemma}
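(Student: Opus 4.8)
The plan is to obtain this inequality as a margin corollary of the classical McAllester PAC-Bayes theorem, via the perturbation argument of \cite{neyshabur2017pac}. The only external tool I would invoke is the standard PAC-Bayes bound for a $[0,1]$-valued loss with population risk $R$ and empirical risk $\hat R$: for any prior $\cP$ fixed before seeing the data and any (possibly data-dependent) posterior $\cQ$ on the parameters, with probability at least $1-\delta$ over the sample,
\[
\ex_{\theta \sim \cQ}[R(f_\theta)] - \ex_{\theta \sim \cQ}[\hat R(f_\theta)] \le 2\sqrt{\frac{2\,\mathrm{KL}(\cQ\|\cP) + \log(2m/\delta)}{m-1}}.
\]
Everything else is a deterministic ``margin stability'' step together with bookkeeping of constants. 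I would apply this tool with the $(\gamma/2)$-margin loss, so $R = \cR_{\gamma/2}$ and $\hat R = \hat\cR_{\gamma/2}$.

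First I would fix $\alpha$ and the perturbation law, and introduce the good set $S = \{\beta' : \max_{x\in\cX}\norm{f_{\alpha+\beta'}(x) - f_\alpha(x)}_\infty < \gamma/4\}$, whose $\beta$-probability is $Z \ge 1/2$ by hypothesis. I then take the posterior $\cQ$ to be the law of $\alpha+\beta$ \emph{conditioned on} $S$; this truncation is the crux of the argument. The key deterministic fact is that for $\beta'\in S$ every output coordinate moves by less than $\gamma/4$, so each margin $\cM(f(x),z)$ moves by less than $\gamma/2$. From this I extract two sandwiching inequalities valid for all $\beta'\in S$, hence after taking $\cQ$-expectations: (i) $\cR_0(f_\alpha) \le \ex_{\cQ}[\cR_{\gamma/2}(f_{\alpha+\beta})]$, since a point misclassified by $f_\alpha$ retains perturbed margin below $\gamma/2$; and (ii) $\ex_{\cQ}[\hat\cR_{\gamma/2}(f_{\alpha+\beta})] \le \hat\cR_\gamma(f_\alpha)$, since a perturbed sample margin below $\gamma/2$ forces the clean margin below $\gamma$. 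Chaining (i), the PAC-Bayes bound, and (ii) gives $\cR_0(f_\alpha) \le \hat\cR_\gamma(f_\alpha) + 2\sqrt{(2\,\mathrm{KL}(\cQ\|\cP)+\log(2m/\delta))/(m-1)}$.

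It then remains to replace $\mathrm{KL}(\cQ\|\cP)$ by $\mathrm{KL}(\alpha+\beta\|\cP)$, which appears in the statement. Writing $q,\tilde q,p$ for the densities of $\alpha+\beta$, of $\cQ$, and of $\cP$, a short computation gives $\mathrm{KL}(\cQ\|\cP) = \frac1Z\int_S q\log\frac qp - \log Z$. Using $a\log(a/b)\ge a-b$ on $S^c$ yields $\int_{S^c} q\log\frac qp \ge \int_{S^c}(q-p) \ge -1$, whence $\int_S q\log\frac qp \le \mathrm{KL}(\alpha+\beta\|\cP)+1$; combined with $1/Z\le 2$ and $-\log Z \le \log 2$ this gives $\mathrm{KL}(\cQ\|\cP) \le 2\,\mathrm{KL}(\alpha+\beta\|\cP) + 2 + \log 2$. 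Substituting this and absorbing the additive constants into the logarithm — one checks $8(2+\log2) + 4\log(2m/\delta) \le 16\log(6m/\delta)$ for $m\ge\delta$ — collapses the prefactor from $2$ to $4$ and the log term to $\log(6m/\delta)$, producing exactly the claimed inequality.

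The main obstacle I anticipate is the truncation step, not the stability inequalities. Conditioning on $S$ is what simultaneously enables the true-risk conversion (i) and the empirical conversion (ii): for the untruncated law, perturbations outside $S$ can inflate $\hat\cR_{\gamma/2}(f_{\alpha+\beta})$ arbitrarily, so (ii) would fail. Landing the constants on exactly $4$ and $6m/\delta$ then hinges on the KL estimate above and on the bound $Z\ge 1/2$, which is the sole quantitative use of the hypothesis $\PP_\beta[\cdots]\ge\frac12$. I would also flag that $\cP$ must be independent of the training data for the classical PAC-Bayes bound to apply, while $\cQ$ may depend on the sample; this matches the statement's requirement on $\cP$.
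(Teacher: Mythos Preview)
The paper does not prove this lemma at all; it is quoted verbatim as Lemma~1 of \cite{neyshabur2017pac} and used as a black box. Your proposal is a faithful reconstruction of exactly that argument---McAllester's PAC-Bayes bound applied with the posterior equal to the perturbation law conditioned on the ``small-perturbation'' event, the two margin-sandwich inequalities that this conditioning enables, and the KL comparison between the truncated and untruncated perturbation distributions---so it is correct and coincides with the proof in the cited reference.
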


For convenience, we omit the superscript for sample index. Denote $h_{t} \rbr{\alpha}$ and $h_{t} \rbr{\alpha+\beta}$ as the hidden variables with parameters $\alpha$ and $\alpha+\beta$ respectively. Then we provide an upper bound of the gap of hidden layers before and after the perturbation. Denote the parameters $\alpha = \textrm{vec}\rbr{\cbr{W,U,V}}$ and the perturbation $\beta = \textrm{vec}\rbr{\cbr{\delta W,\delta U,\delta V}}$.

For any $t \in \{1, 2, \dots, T\}$, we have
\begin{align}
&\nbr{h_{t} \rbr{\alpha+\beta} - h_{t} \rbr{\alpha}}_2 \nonumber \\
\overset{(i)}{\leq} &~ \rho_h \nbr{\rbr{U+\delta U} h_{t-1} \rbr{\alpha+\beta} + \rbr{W+\delta W}x_{t} - U h_{t-1} \rbr{\alpha} - W x_{t} }_2 \nonumber \\
\overset{(ii)}{\leq} &~ \rho_hB_{U} \nbr{h_{t-1} \rbr{\alpha+\beta} - h_{t-1} \rbr{\alpha}}_2 + \delta  \rho_h B_{U} \nbr{h_{t-1} \rbr{\alpha+\beta}}_2 + \delta \rho_h B_x B_{W}   \nonumber \\
\leq &~ (\rho_h B_{U})^t \nbr{h_{0} \rbr{\alpha+\beta} - h_{0} \rbr{\alpha}}_2 + \delta \sum_{i=1}^{t}  (\rho_h B_{U})^i  \nbr{h_{t-i} \rbr{\alpha+\beta}}_2 + \delta \rho_h B_x B_{W} \sum_{i=0}^{t-1}  (\rho_h B_{U})^i , \label{eqn:pac_bayes_h_gap_bd}
\end{align}
By Lemma \ref{lemma:htbound}, we have that for any $t \leq T$,
\begin{align}
\nbr{h_{t} \rbr{\alpha} }_2 \leq \min \cbr{ b \sqrt{p}, \rho_h B_x  B_{W} \frac{ (\rho_h B_{U})^t - 1}{ \rho_h B_{U} - 1}} = \lambda_{t}. \label{eqn:h_bd}
\end{align}

Combining \eqref{eqn:pac_bayes_h_gap_bd}, \eqref{eqn:h_bd}, and $h_{0}=0$, we have
\begin{align}
\nbr{h_{t} \rbr{\alpha+\beta} - h_{t} \rbr{\alpha}}_2 & \leq \delta \lambda_{t} \sum_{i=1}^{t}  (\rho_h B_{U})^i  + \delta  \rho_h B_x B_{W} \sum_{i=0}^{t-1}  (\rho_h B_{U})^i \nonumber \\
&\leq \delta \rbr{ \lambda_{t} \rho_h B_{U} + \rho_h B_x B_{W}} \frac{(\rho_h B_{U})^t - 1}{\rho_hB_{U}-1}. \label{eqn:h_gap_bd}
\end{align}

Denote $y_{t} \rbr{\alpha}$ and $y_{t} \rbr{\alpha+\beta}$ as the out with parameters $\alpha$ and $\alpha+\beta$ respectively. Then we have
\begin{align}
\nbr{y_{t} \rbr{\alpha+\beta} - y_{t} \rbr{\alpha}}_2 & \overset{(i)}{\leq} \rho_y \nbr{\rbr{1+\delta}Vh_{t} \rbr{\alpha+\beta} - V h_{t} \rbr{\alpha} }_2 \nonumber \\
&\leq  \rho_y B_{V} \nbr{h_{t} \rbr{\alpha+\beta} - h_{t} \rbr{\alpha}}_2 + \delta \rho_y B_{V} \nbr{h_{t} \rbr{\alpha+\beta}}_2 \nonumber \\
&\overset{(ii)}{\leq} \delta \rho_y B_{V} \rbr{ \lambda_t \rho_h B_{U} + \rho_h B_x B_{W}} \frac{(\rho_hB_{U})^t - 1}{\rho_hB_{U}-1} + \delta \rho_y B_{V} \lambda_t, \label{eqn:z_gap_bd}
\end{align}
where $(i)$ is from Lipschitz continuity of $\sigma_y$ and $(ii)$ is from \eqref{eqn:h_bd} and \eqref{eqn:h_gap_bd}.

Then choosing the prior distribution and the perturbation distribution as $\cN\rbr{0,\sigma^2 I}$, and from the concentration result for the spectral norm bounds, we have
\begin{align*}
\PP_{A \sim \cN\rbr{0,\sigma^2 I_{d \times d}}}\sbr{\nbr{A}_2 > \xi } \leq 2 p \exp\rbr{\frac{-\xi^2}{2d \sigma^2}}.
\end{align*}
This implies with probability at least $1/2$, we have $\max \cbr{\delta B_{U}, \delta B_{W}, \delta B_{V}} \leq \sigma\sqrt{2d \ln\rbr{12d} }$. Taking $\sigma = \rbr{{\gamma }/{ 4 \rho_y \rbr{ \rbr{ \lambda_t \rho_h B_{U} + \rho_h B_x B_{W}} \frac{(\rho_h B_{U})^t - 1}{\rho_h B_{U}-1} + \lambda_t } \sqrt{2d \ln\rbr{12d} } }}$ and combining with \eqref{eqn:z_gap_bd}, with probability at least $1/2$, we have
\begin{align*}
&\max_{x \in \cX_{m}} \nbr{y_{t} \rbr{\alpha+\beta} - y_{t} \rbr{\alpha}}_2 \\
\leq &~\rbr{ \rbr{ \lambda_t \rho_h B_{U} + \rho_h B_x B_{W}} \frac{(\rho_h B_{U})^t - 1}{\rho_h B_{U}-1} + \lambda_t } \cdot \sigma\sqrt{2d \ln\rbr{12d} } \leq \frac{\gamma}{4}.
\end{align*}
Finally, we calculate the KL divergence of $\cP$ and $\alpha+\beta$ with respect to this choice of $\sigma$,
\begin{align}
&{\rm KL}\rbr{\alpha+\beta \| \cP } \leq \frac{\nbr{\alpha}_2^2}{2\sigma^2} \nonumber \\
=&~ O\rbr{ \frac{\rho_y^2}{\gamma^2} \cdot \rbr{ \rbr{ \lambda_t \rho_h B_{U} + \rho_h B_x B_{W}} \frac{(\rho_h B_{U})^t - 1}{\rho_h B_{U}-1} + \lambda_t }^2 d \ln\rbr{d} \rbr{{ B_{U,\textrm{F}}^2} + { B_{W,\textrm{F}}^2} + { B_{V,\textrm{F}}^2}} } \nonumber \\
=&~ O \rbr{ \frac{ \rho_y^2 \rbr{ \lambda_t \rho_h B_{U} + \rho_h B_x B_{W}}^2 \rbr{\beta^t - 1}^2 p \ln\rbr{p} \rbr{ B_{U,\textrm{F}}^2 + B_{W,\textrm{F}}^2 + B_{V,\textrm{F}}^2 } }{\gamma^2 \rbr{\beta-1}^2 } }. \nonumber
\end{align}
We complete the proof by applying Lemma~\ref{lem:pac_bayes_bd}. 
\end{proof}

\section{Proofs in Sections \ref{sec:extension}}
\subsection{Proof of Theorem \ref{thm:MGUgenbound}}
\begin{proof}\label{pf:thmMGUgenbound}
We use the same argument from the analysis of vanilla RNNs to investigate the Lipschitz continuity of MGU RNNs. Consider $h_t$ and $h'_t$ computed by different sets of weight matrices.
\begin{align}
&\left\lVert h_t - h'_t \right\rVert_2  = \left \lVert (1 - r_t) \odot h_{t-1} + r_t \odot \tilde{h}_t - (1 - r'_t) \odot h'_{t-1} - r'_t \odot \tilde{h}'_t \right\rVert_2 \notag \\
& \leq \left\lVert (r'_t - r_t) \odot h'_{t-1}\right\rVert_2 + \left\lVert (1 - r_t) \odot (h_{t-1} - h'_{t-1}) \right\rVert_2 + \left\lVert (r_t - r'_t) \odot \tilde{h}'_t \right\rVert_2 + \left\lVert r_t \odot (\tilde{h}_t - \tilde{h}'_t) \right\rVert_2 \notag \\
& \leq \left\lVert r'_t - r_t \right\rVert_2 \left\lVert h'_{t-1}\right\rVert_\infty + \left\lVert 1 - r_t \right\rVert_\infty \left \lVert h_{t-1} - h'_{t-1} \right\rVert_2 + \left\lVert r_t - r'_t \right\rVert_2 \left\lVert \tilde{h}'_t \right\rVert_\infty + \left\lVert r_t \right\rVert_\infty \left\lVert \tilde{h}_t - \tilde{h}'_t \right\rVert_2 \notag
\end{align}
Expand the expression of $\tilde{h}_t$. Note that $r_t$ is nonnegative, and $\lVert r_t \rVert_\infty \leq 1$. Then we have $\lVert h_t \rVert_\infty \leq 1$. Additionally $\tanh(\cdot)$ is 1-Lipschitz. Thus we get
\begin{align}
&\left\lVert \tilde{h}_t - \tilde{h}'_t \right\rVert_2  \leq \left \lVert U_h(h_{t-1} \odot r_t) + W_h x_t - U'_h(h'_{t-1} \odot r'_t) - W'_h x_t\right\rVert_2 \notag \\
& \leq \left\lVert U_h(h_{t-1} \odot r_t) - U'_h(h'_{t-1} \odot r'_t) \right\rVert_2 + B_x \left\lVert W_h - W'_h \right\rVert_2 \notag \\
& \leq \left\lVert U_h - U'_h \right\rVert_2 \left\lVert h_{t-1} \odot r_t \right\rVert_2 + B_{U_h} \left\lVert r_t \right\rVert_\infty \left\lVert h_{t-1} - h'_{t-1} \right\rVert_2 + B_{U_h} \left\lVert h'_{t-1} \right\rVert_\infty \left\lVert r_t - r'_t \right\rVert_2 \nonumber\\
&\hspace{0.2in}+ B_x \left\lVert W_h - W'_h \right\rVert_2 \notag \\
& \leq \left\lVert h_t \right\rVert_2 \left\lVert U_h - U'_h \right\rVert_2 + B_{U_h} \left\lVert r_t \right\rVert_\infty \left\lVert h_{t-1} - h'_{t-1} \right\rVert_2 + B_{U_h} \left\lVert r_t - r'_t \right\rVert_2 + B_x \left\lVert W_h - W'_h \right\rVert_2. \notag
\end{align}
We have to expand $r_t - r'_t$ as follows,
\begin{align}
\lVert r_t - r'_t \rVert_2 & = \left\lVert W_r x_t + U_r h_{t-1} - W'_r x_t - U'_r h'_{t-1} \right\rVert_2 \notag \\
& \leq B_x \left\lVert W_r - W'_r \right\rVert_2 + B_{U_r} \left\lVert h_{t-1} - h'_{t-1} \right\rVert_2 + \lVert h'_{t-1} \rVert_2 \lVert U_r - U'_r \rVert_2. \notag 
\end{align}
We also need to bound $\lVert h_{t} \rVert_2$,
\begin{align}
\left\lVert h_{t}\right \rVert_2 & \leq \left\lVert 1 - r_t \right\rVert_\infty \left\lVert h_{t-1}\right \rVert_2 + \left \lVert r_t \right\rVert_\infty \left\lVert \tilde{h}_t \right\rVert_2 \notag \\
& \leq \left\lVert 1 - r_t \right\rVert_\infty \left\lVert h_{t-1}\right \rVert_2 + \left \lVert r_t \right\rVert_\infty \left(B_{W_h}B_x + B_{U_h} \left \lVert r_t \right\rVert_\infty \left\lVert h_{t-1} \right\rVert_2 \right) \notag \\
& = \left(\left\lVert 1 - r_t \right\rVert_\infty +  B_{U_h} \left \lVert r_t \right\rVert_\infty^2 \right) \left \lVert h_{t-1} \right\rVert_2 + B_{W_h} B_x, \notag \\
& \leq \max_{j \leq t}\left\{\left\lVert 1 - r_j \right\rVert_\infty +  B_{U_h} \left \lVert r_j \right\rVert_\infty^2 \right\} \left \lVert h_{t-1} \right\rVert_2 + B_{W_h} B_x. \notag
\end{align}
Applying the above inequality recursively and remember $\lVert h_t \rVert_\infty \leq 1$, we get $\left\lVert h_{t}\right \rVert_2 \leq \min\left\{\sqrt{d}, \frac{\beta^t - 1}{\beta - 1}B_{W_h} B_x\right\}$ with $\beta = \max_{j \leq t}\left\{\left\lVert 1 - r_j \right\rVert_\infty +  B_{U_h} \left \lVert r_j \right\rVert_\infty^2 \right\}$.
Put all the above ingredients together, we have
\begin{align}
\left\lVert h_t - h'_t \right\rVert_2 \leq & \left(\beta + 2B_{U_r} + B_{U_r} B_{U_h} \right) \left\lVert h_{t-1} - h'_{t-1} \right\rVert_2 \notag \\
& + \sqrt{d} \left\lVert U_h - U'_h \right\rVert_2 + B_x \left\lVert W_h - W'_h \right\rVert_2 \notag \\
& + \left(2 + B_{U_h} \right) \sqrt{d} \left\lVert U_r - U'_r \right\rVert_2 + \left(2B_x + B_{U_h}B_x \right) \left \lVert W_r - W'_r \right \rVert_2. \notag
\end{align}
Apply the above inequality recursively, denote by $\theta = \beta + 2B_{U_r} + B_{U_r}B_{U_h}$, we have
\begin{align}
\left\lVert h_t - h'_t \right\rVert_2 \leq & \sqrt{d}\sum_{j=1}^t \theta^j \left\lVert U_h - U'_h \right\rVert_2 + B_x \sum_{j=1}^t \theta^j \left\lVert W_h - W'_h \right\rVert_2 \notag \\
& + \left(2\sqrt{d} + B_{U_h} \sqrt{d} \right) \sum_{j=1}^t \theta^j \left\lVert U_r - U'_r \right\rVert_2 + \left(2B_x + B_{U_h}B_x \right) \sum_{j=1}^t \theta^j \left \lVert W_r - W'_r \right \rVert_2. \notag
\end{align}
We then derive the Lipschitz continuity of $\lVert y_t \rVert_2$,
\begin{align}
&\left\lVert y_t - y'_t \right\rVert_2  \leq \rho_y B_V \lVert h_t - h'_t \rVert_2 + \rho_y \sqrt{d} \lVert V - V' \rVert_2 \notag \\
& \leq \rho_y B_V \sqrt{d} \frac{\theta^t - 1}{\theta - 1} \left\lVert U_h - U'_h \right\rVert_2 + \rho_y B_V  B_x \frac{\theta^t - 1}{\theta - 1} \left\lVert W_h - W'_h \right\rVert_2 + \rho_y \sqrt{d} \lVert V - V' \rVert_2 \notag \\
&~~ + \rho_y B_V  \left(2\sqrt{d} + B_{U_h} \sqrt{d} \right) \frac{\theta^t - 1}{\theta - 1} \left\lVert U_r - U'_r \right\rVert_2 + \rho_y B_V  \left(2B_x + B_{U_h}B_x \right) \frac{\theta^t - 1}{\theta - 1} \left \lVert W_r - W'_r \right \rVert_2. \notag
\end{align}
Following the same argument for proving the generalization bound of vanilla RNNs, we can get the generalization bound for MGU RNNs as
\begin{align}
\PP\left(\tilde{z}_t \neq z_t \right) \leq \hat{\cR}_\gamma(f_t) + O\left(\frac{d \rho_y B_V \min\left\{\sqrt{d}, B_{W_h}B_x\frac{\beta^t-1}{\beta-1}\right\}\sqrt{\log \left(d\sqrt{m} \frac{\theta^t - 1}{\theta - 1} \right)}}{\sqrt{m}\gamma} + \sqrt{\frac{\log \frac{1}{\delta}}{m}}\right). \notag
\end{align}

\end{proof}

\subsection{Proof of Theorem \ref{thm:LSTMgenbound}}
\begin{proof}\label{pf:thmLSTMgenbound}
We first bound the norm of $h_t$ as follows,
\begin{align}
\lVert h_t \rVert_2 & \leq \lVert o_t \rVert_\infty \lVert \tanh(c_t) \rVert_2 \leq \lVert o_t \rVert_\infty \lVert c_t \rVert_2 \notag \\
& \leq \lVert g_t \rVert_\infty \lVert c_{t-1} \rVert_2 + \lVert r_t \rVert_\infty \lVert \tilde{c}_t \rVert_2 \notag \\
& \leq \lVert g_t \rVert_\infty \lVert c_{t-1} \rVert_2 + \lVert r_t \rVert_\infty \left(B_{W_c} B_x + B_{U_c} \lVert h_{t-1} \rVert_2 \right) \notag \\
& \leq \lVert g_t \rVert_\infty \lVert c_{t-1} \rVert_2 + \lVert r_t \rVert_\infty \left(B_{W_c} B_x + B_{U_c} \lVert o_t \rVert_\infty \lVert c_{t-1} \rVert_2 \right) \notag \\
& \leq \left(\lVert g_t \rVert_\infty + \lVert r_t \rVert_\infty \lVert o_t \rVert_\infty B_{U_c}\right) \lVert c_{t-1} \rVert_2 + B_{W_c} B_x. \notag
\end{align}
By applying the above inequality recursively, we have $\lVert h_t \rVert_2 \leq \lVert c_t \rVert_2 \leq B_{W_c} B_x \frac{\beta^t - 1}{\beta^t - 1}$, where $\beta = \max_{j \leq t} \left\{\lVert g_j \rVert_\infty + \lVert r_j \rVert_\infty \lVert o_j \rVert_\infty B_{U_c} \right\}$. We also have $\lVert h_t \rVert_2 \leq \sqrt{d}$. Thus, put together, we have $\lVert h_t \rVert_2 \leq \min\left\{\sqrt{d}, B_{W_c} B_x \frac{\beta^t - 1}{\beta^t - 1}\right\}$. Next, we investigate the Lipschitz continuity of $h_t$.
\begin{align}
\left\lVert h_t - h'_t \right\rVert_2 & \leq \left\lVert o_t \odot \tanh(c_t) - o'_t \odot \tanh(c'_t) \right\rVert_2 \notag \\
& \leq \lVert o_t - o'_t \rVert_2 \lVert \tanh(c_t) \rVert_\infty + \lVert o'_t \rVert_\infty \lVert c_t - c'_t \rVert_2 \notag
\end{align}
We have to expand $o_t - o'_t$,
\begin{align}
\lVert o_t - o'_t \rVert_2 \leq B_x \lVert W_o - W'_o \rVert_2 + B_{U_o} \lVert h_{t-1} - h_{t-1} \rVert_2 + \lVert h_{t-1} \rVert_2 \lVert U_o - U'_o \rVert_2. \notag
\end{align}
Note that $\lVert B_{U_o} \rVert_2$ is usually small, $o_t$ and $o'_t$ are close, and we have $\lVert h_{t-1} - h'_{t-1} \rVert_2 \leq \lVert o_t \rVert_\infty \lVert c_{t-1} - c'_{t-1} \rVert_2 \leq \lVert c_{t-1} - c'_{t-1} \rVert_2$. Thus, we can derive
\begin{align}
\lVert o_t - o'_t \rVert_2 \leq B_x \lVert W_o - W'_o \rVert_2 + B_{U_o} \lVert c_{t-1} - c_{t-1} \rVert_2 + \sqrt{d} \lVert U_o - U'_o \rVert_2. \notag
\end{align}
We also expand $c_t - c'_t$ to get,
\begin{align}
\lVert c_t - c'_t \rVert_2 \leq \lVert c_{t-1} \rVert_\infty \lVert g_t - g'_t \rVert_2 + \lVert r'_t \rVert_\infty \lVert c_{t-1} - c'_{t-1} \rVert_2 + \lVert \tilde{c}_t \rVert_\infty \lVert r_t - r'_t \rVert_2 + \lVert r'_t \rVert_\infty \lVert \tilde{c}_t - \tilde{c}'_t \rVert_2. \notag
\end{align}
We also have,
\begin{align}
& \lVert \tilde{c}_t - \tilde{c}'_t \rVert_2 \leq B_{U_c} \lVert h_{t-1} - h'_{t-1} \rVert_2 + \lVert h_{t-1} \rVert_2 \lVert U_c - U'_c \rVert_2 + B_x \lVert W_c - W'_c \rVert_2, \notag \\
& \lVert g_t - g'_t \rVert_2 \leq B_x \lVert W_g - W'_g \rVert_2 + B_{W_g} \lVert h_{t-1} - h'_{t-1} \rVert_2 + \sqrt{d}\lVert U_g - U'_g \rVert_2, \notag \quad\textrm{and} \\
&  \lVert r_t - r'_t \rVert_2 \leq B_x \lVert W_r - W'_r \rVert_2 + B_{W_r} \lVert h_{t-1} - h'_{t-1} \rVert_2 + \sqrt{d}\lVert U_r - U'_r \rVert_2. \notag
\end{align}
Putting together, we get
\begin{align}
&\lVert c_t - c'_t \rVert_2 \nonumber \\
& \leq B_x \left(\lVert W_c - W'_c \rVert_2 + \lVert W_g - W'_g \rVert_2 + \lVert W_r - W'_r \rVert_2\right) \notag \\
&\hspace{0.2in} + \sqrt{d}\left(\lVert U_c - U'_c \rVert_2 + \lVert U_g - U'_g \rVert_2 + \lVert U_r - U'_r \rVert_2 \right) \notag \\
&\hspace{0.2in} + \lVert g_t \rVert_\infty \lVert c_{t-1} - c'_{t-1} \rVert_2 + \left(\lVert r_t \rVert_\infty B_{U_c}+ B_{U_g} + B_{U_r} \right) \lVert h_{t-1} - h'_{t-1}\rVert_2 \notag \\
&\leq B_x \left(\lVert W_c - W'_c \rVert_2 + \lVert W_g - W'_g \rVert_2 + \lVert W_r - W'_r \rVert_2 + (B_{U_c}+ B_{U_g} + B_{U_r}) \lVert W_o - W'_o \rVert_2 \right) \notag \\
&\hspace{0.2in} + \sqrt{d}\left(\lVert U_c - U'_c \rVert_2 + \lVert U_g - U'_g \rVert_2 + \lVert U_r - U'_r \rVert_2 + (B_{U_c}+ B_{U_g} + B_{U_r}) \lVert U_o - U'_o \rVert_2\right) \notag \\
&\hspace{0.2in} + \left(\lVert o_t \rVert_\infty \lVert r_t \rVert_\infty B_{U_c}+ B_{U_g} + B_{U_r} + B_{U_o} \right) \lVert c_{t-1} - c'_{t-1}\rVert_2. \notag
\end{align}
By induction, we have
\begin{align}
&\lVert c_t - c'_t \rVert_2 \nonumber\\
&\leq  B_x \frac{\theta^t - 1}{\theta - 1}\left(\lVert W_c - W'_c \rVert_2 + \lVert W_g - W'_g \rVert_2 + \lVert W_r - W'_r \rVert_2 + (B_{U_c}+ B_{U_g} + B_{U_r}) \lVert W_o - W'_o \rVert_2 \right) \notag \\
&\hspace{0.2in} + \sqrt{d} \frac{\theta^t - 1}{\theta - 1} \left(\lVert U_c - U'_c \rVert_2 + \lVert U_g - U'_g \rVert_2 + \lVert U_r - U'_r \rVert_2 + (B_{U_c}+ B_{U_g} + B_{U_r}) \lVert U_o - U'_o \rVert_2\right), \notag
\end{align}
where $\theta = \beta + B_{U_g} + B_{U_r} + B_{U_o}$. Now we immediately have
\begin{align}
&\left\lVert h_t - h'_t \right\rVert_2 \nonumber\\
&\leq B_x \frac{\theta^t - 1}{\theta - 1}\left(\lVert W_c - W'_c \rVert_2 + \lVert W_g - W'_g \rVert_2 + \lVert W_r - W'_r \rVert_2 + (B_{U_c}+ B_{U_g} + B_{U_r}) \lVert W_o - W'_o \rVert_2 \right) \notag \\
&\hspace{0.2in} + \sqrt{d} \frac{\theta^t - 1}{\theta - 1} \left(\lVert U_c - U'_c \rVert_2 + \lVert U_g - U'_g \rVert_2 + \lVert U_r - U'_r \rVert_2 + (B_{U_c}+ B_{U_g} + B_{U_r}) \lVert U_o - U'_o \rVert_2\right). \notag
\end{align}
Then the Lipschitz continuity of $y_t$ can be written as
\begin{align}
\lVert y_t - y'_t \rVert_2 \leq \rho_y B_V \lVert h_t - h'_t \rVert_2 + \rho_y \sqrt{d} \lVert V - V' \rVert_2. \notag
\end{align}
Following the same argument for proving the generalization bound of vanilla RNNs, we can get the generalization bound for LSTM RNNs as
\begin{align}
\PP\left(\tilde{z}_t \neq z_t \right) \leq \hat{\cR}_\gamma(f_t) + O\left(\frac{d \rho_y B_V \min\left\{\sqrt{d}, B_{W_c}B_x\frac{\beta^t-1}{\beta-1}\right\}\sqrt{\log \left(d\sqrt{m} \frac{\theta^t - 1}{\theta - 1} \right)}}{\sqrt{m}\gamma} + \sqrt{\frac{\log \frac{1}{\delta}}{m}}\right). \notag
\end{align}
\end{proof}


\subsection{Proof of Theorem \ref{thm:Convgenbound}}
\begin{proof}\label{pf:thmConvgenbound}
We first characterize the Lipschitz continuity of $\lVert y_t \rVert_2$ with respect to model parameters $\cU$, $\cW$ and $\cV$. We have
\begin{align}
\lVert y_t - y'_t \rVert_2 \leq \rho_y \lVert h_t \rVert_2 \lVert W_\cV - W_{\cV'} \rVert_2 + \rho_y \norm{W_\cV}_2 \lVert h_t - h'_t \rVert_2. \notag
\end{align}
Since $\lVert h_t \rVert_\infty \leq 1$, we have $\lVert h_t \rVert_2 \leq \sqrt{d}$. Then we expand $h_t - h'_t$,
\begin{align}
\lVert h_t - h'_t \rVert_2 & \leq \rho_h \lVert \cU' \ast h_{t-1} + \cW \ast x_t - \cU' \ast h'_{t-1} - \cW' \ast x_t \rVert_2 \notag \\
& = \rho_h \lVert P W_\cU h_{t-1} + P W_\cW x_t - P W_\cU' h'_{t-1} - P W_\cW x_t \rVert_2 \notag \\
& \leq \rho_h \lVert P \rVert_2 \lVert W_\cU h_{t-1} + W_\cW x_t - W_{\cU'} h'_{t-1} - W_{\cW'} x_t\rVert_2 \notag \\
& \leq \rho_h \lVert P \rVert_2 \left[B_x \lVert W_\cW - W_{\cW'} \rVert_2 + \sqrt{d} \lVert W_\cU - W_{\cU'} \rVert_2 + \lVert W_\cU \rVert_2 \lVert h_{t-1} - h'_{t-1} \rVert_2 \right]. \notag
\end{align}
Observe that we have by the definition of circulant matrix,
\begin{align}
\lVert W_\cU - W_{\cU'} \rVert_2^2 \leq \lVert W_\cU - W_{\cU'} \rVert_\textrm{F}^2 = (d-k) \lVert \cU - \cU' \rVert_\textrm{F}^2 \leq d \lVert \cU - \cU' \rVert_\textrm{F}^2. \notag
\end{align}
The same holds for $W_\cW - W_{\cW'}$ and $W_\cV - W_{\cV'}$. We also have $\lVert P \rVert_2 = 1$. The remaining task is to bound the spectral norm of $W_\cU$ and $W_\cV$. Consider the matrix product $W_\cU^\top W_\cU$. We claim that the diagonal elements of $W_\cU^\top W_\cU$ is bounded by $\sum_{i=1}^k \lVert \cU_i \rVert_2^2$, and the off-diagonal elements are zero. To see this, denote by $C_{\cU_i}$ the circulant like matrix generated by $\cU_i$. Then we have $W_\cU = [C_{\cU_1}^\top, \dots, C_{\cU_k}^\top]^\top$. The diagonal elements of $W_\cU^\top W_\cU$ are
\begin{align}
\left(W_\cU^\top W_\cU\right)_{ii} = \sum_{j=1}^k \left(C_{\cU_j}^\top C_{\cU_j} \right)_{ii} \leq \sum_{i=1}^k \norm{\cU_i}_2^2. \notag
\end{align}
By the orthogonality of $\cU$, the off-diagonal elements are
\begin{align}
\left(W_\cU^\top W_\cU\right)_{pq} = \sum_{j=1}^k \left(C_{\cU_j}^\top C_{\cU_j} \right)_{pq} = \sum_{j=1}^k \left(C_{\cU_j}\right)_{:p}^\top \left(C_{\cU_j}\right)_{:q} = 0. \notag
\end{align}
Thus, the spectral norm $\norm{W_{\cU}}_2 \leq \sqrt{\sum_{i=1}^k \norm{\cU_i}_2^2} \leq 1$, and $\norm{W_{\cV}}_2, \norm{W_{\cW}}_2\leq 1$ also hold. Then we can derive
\begin{align}
\lVert h_t - h'_t \rVert_2 \leq \rho_h B_x \sqrt{d} \lVert \cW - \cW' \rVert_\textrm{F} + \rho_h d \lVert \cU - \cU' \rVert_\textrm{F} + \rho_h \lVert h_{t-1} - h'_{t-1} \rVert_2. \notag
\end{align}
Apply the above inequality recursively, we get
\begin{align}
\lVert h_t - h'_t \rVert_2 & \leq \rho_h B_x \sqrt{d} \frac{\rho_h^t - 1}{\rho_h - 1} \lVert \cW - \cW' \rVert_\textrm{F} + \rho_h d \frac{\rho_h^t - 1}{\rho_h - 1} \lVert \cU - \cU' \rVert_\textrm{F} \notag \\
& \leq B_x \sqrt{d} t \lVert \cW - \cW' \rVert_\textrm{F} + d t \lVert \cU - \cU' \rVert_\textrm{F}. \notag 
\end{align}
Thus, we have the following Lipschitz continuity of $\lVert y_t \rVert_2$,
\begin{align}
\lVert y_t - y'_t \rVert_2 \leq d \lVert \cV - \cV' \rVert_\textrm{F} + B_x \sqrt{d} t \lVert \cW - \cW' \rVert_\textrm{F} + d t \lVert \cU - \cU' \rVert_\textrm{F}. \notag
\end{align}
We also bound the norm of $h_t$ by induction. Specifically, we have
\begin{align}
\lVert h_t \rVert_2 \leq \rho_h \lVert P W_\cU h_{t-1} + P W_\cW x_t \rVert_2 \leq \rho_h \lVert W_\cU h_{t-1} \rVert_2 + \rho_h \lVert W_\cW x_t \rVert_2 \leq \norm{h_{t-1}}_2 + B_x. \notag
\end{align}
Applying the above expression recursively, we have $\lVert h_{t} \rVert_2 \leq \min\{\sqrt{d}, B_x t\} \leq B_x t$. Then following the same argument for proving the generalization bound of vanilla RNNs, we can get the generalization bound for Conv RNNs as
\begin{align}
\PP\left(\tilde{z}_t \neq z_t \right) \leq \hat{\cR}_\gamma(f_t) + O\left(\frac{B_x k t \sqrt{\log \left(dt \sqrt{m} \right)}}{\sqrt{m}\gamma} + \sqrt{\frac{\log \frac{1}{\delta}}{m}}\right). \notag
\end{align}
\end{proof}

\end{document}